\def\for{\hbox{ for }}
\def\Var{\mbox{Var}}
\def\Max{\mbox{Max}}
\def\Min{\mbox{Min}}
\def\prob{\hbox{Pr}}
\def\proj{\mbox{Proj}}
\def\Proj{\mbox{Proj}}
\def\prob{\mbox{Pr}}
\newtheorem{claim}{Claim}[section]
\newtheorem{assum}{Assumption}
\newenvironment{remark}[1][Remark:]{\begin{trivlist}
\item[\hskip \labelsep {\bfseries #1}]}{\end{trivlist}}
\newcommand{\qed}{\nobreak \ifvmode \relax \else
      \ifdim\lastskip<1.5em \hskip-\lastskip
      \hskip1.5em plus0em minus0.5em \fi \nobreak
      \vrule height0.4em width0.5em depth0.25em\fi}
\def\Prob{\text{Prob}}
\def\prob{\text{Prob}}
\def\for{\text{for}}
\def\For{\text{For}}
\def\prox{{\bf Proximate ~Latent Points~}}
\def\bM{{\bf M}}
\def\bA{{\bf A}}
\def\bP{{\bf P}}
\def\bW{{\bf W}}
\def\bB{{\bf B}}
\def\Var{\mbox{Var}}
\def\Null{\mbox{Null}}
\def\Span{\mbox{Span}}
\def\bQ{{\bf Q}}
\def\bR{{\bf R}}
\def\bF{{\bf F}}
\def\bG{{\bf G}}
\def\bR{{\bf R}}
\def\bS{{\bf S}}
\begin{document}

\title{Finding a latent $k-$simplex in $O^{*}\left(k \cdot \mbox{{\tt nnz(data)}}\right)$ time via Subset Smoothing}
\author{Chiranjib Bhattacharyya\\
Department of Computer Science and Automation\\
Indian Institute of Science\\
Bangalore, 560012 \\
India \\
\and
Ravindran Kannan\\
Microsoft Research India\\
\\
Bangalore, 560001\\
India
}
\date{}
\maketitle
\fancyfoot[R]{\scriptsize{Copyright \textcopyright\ 2020 by SIAM\\
Unauthorized reproduction of this article is prohibited}}

\pagenumbering{arabic}
\setcounter{page}{1}

\begin{abstract}
In this paper we show that the learning problem for a large class of Latent variable models,
such as Mixed Membership Stochastic Block Models, Topic Models, and Adversarial
Clustering can be posed as the problem of {\it Learning a Latent Simplex (LLS)}:
\emph{find a latent $k-$ vertex simplex, $K$ in ${\bf R}^d$, given $n$ data
points, each obtained by perturbing a latent point in $K$}. Our main contribution
is an efficient algorithm for LLS
under deterministic assumptions which naturally hold for the models considered here.

We first observe that for a suitable $r\leq n$, $K$ is close to 
a data-determined polytope $K'$ 
(the \emph{subset smoothed polytope}) which is the convex hull of the ${n\choose r}$
points, each obtained by averaging an $r$ subset of data points. 
Our algorithm is simply
stated: it optimizes
$k$ carefully chosen linear functions over $K'$ to find the $k$ vertices of the latent simplex.
The proof of correctness is more involved, drawing on existing and new tools from Numerical Analysis. 
Our running time is $O^*(k\mbox{ nnz})$ (This is the time taken by one itertion of the $k-$means algorithm.)
This is better than all previous algorithms for the special cases when data is sparse, as is the norm for Topic Modeling and
MMBM.
Some consequences of our algorithm are:
\begin{itemize} \item Mixed Membership Models and Topic Models: We give
the first quasi-input-sparsity time algorithm for $k\in O^*(1)$. 
\item Adversarial Clustering: In $k-$means, an adversary is allowed to move many data points
from each cluster towards the convex hull of other cluster centers.
Our algorithm still estimates cluster centers well.
\end{itemize}
\end{abstract}

\section{Introduction}
Understanding the underlying generative process of observed data is an important goal of Unsupervised learning.  The setup is assumed to be stochastic where
each \emph{observation} is
generated from a \emph{probability distribution} parameterized by a model.
Discovering such models from observations is a challenging problem,
often intractable in the general.
$k-$means Clustering of data is a simple and important special case. It is often used on data which is
assumed to be generated by a mixture model like Gaussian Mixtures, where,
 all observations are  generated from a \emph{fixed} convex combination of density functions.
The $k$-means problem, despite its simplicity, poses challenges and
continues to attract considerable research attention.
Mixed membership models~\cite{A14} are interesting generalizations of
Mixture models, where instead of a \emph{fixed} convex combination, each
observation arises from a different convex combination, often determined stochastically.
Special cases of Mixed membership models include Topic Models~\cite{survey},
and Mixed Membership Stochastic Block(MMSB) models~\cite{A08}
which have gained significant attention for their ability to model real-life data.

The success of discovered models, though approximate, in explaining real-world data has spurred interest in deriving algorithms with proven upper bounds on error and time which can recover
the true model from a \emph{finite} sample of observations.
This line of work has seen growing interest in algorithms community ~\cite{AGM,anima, Hop17,tsvd}
. We note that these papers make domain-specific assumptions under which the algorithms are analyzed. 

Mixture Models, Topic Models, and MMSB are all instances of \emph{Latent Variable models}.
Our aim in this paper is to arrive at a simply stated, general algorithm which is applicable to large class of
Latent Variable models and which in polynomial time can recover the true model
if certain general assumptions are satisfied.
We take a geometric perspective on latent variable models and
argue that such a perspective can serve as a unifying view yielding a single algorithm which is competitive with the state of the art and indeed
better for sparse data.

\section{Latent Variable models and Latent k-simplex problem}
This section  reviews three well-known Latent variable models: Topic models (LDA), MMSB and Clustering.
The purpose of the review is to bring out the fact that all of them can be viewed abstractly as special
cases of a geometric formulation which we call the Learning a Latent Simplex (LLS) problem. 

Given (highly)pertubed points
from a $k-$simplex in ${\bf R}^d$, learn the $k$ vertices of $K$. 

In total generliaty, this problem is intractable. 
A second purpose of the review of LDA and MMSB is to distill out the domain-specific model assumptions they make into
general determinsitic geometric assumptions on LLS. With these deterministic assumptions in place, our main contribution of the paper is to devise a fast algorithm to solve LLS.

\subsection{Topic models and LDA}

Topic Models attempt to capture underlying themes of a document by
topics, which are probability distributions over all words in the vocabulary.
Given a corpus of documents, each document is represented by
relative word frequencies. Assuming that the corpus is generated from an \emph{ad-mixture} of these $k-$distributions or $k-$topics, the core goal of Topic models is to construct the underlying $k$ topics.
These can also be viewed geometrically as  $k$ latent vectors $M_{\cdot,1},M_{\cdot,2},\ldots ,
M_{\cdot, k}\in {\bf R}^d$ ($d$ is the size of the vocabulary)
where, $M_{i,\ell}$ is the expected frequency of
word $i$ in topic $\ell$. Each $M_{\cdot,\ell}$ has non-negative
entries summing to 1.
In Latent Dirichlet Allocation (LDA) (\cite{lda}), an important example of Topic models, the
data generation is stochastic.
 A document consisting of $m$ words is generated by the following two stage process:
\begin{itemize}
\item The topic distribution of  Document $j$ is decided by the topic weights, $W_{\ell,j}, \ell=1,2,\ldots ,k$ picked from a Dirchlet distribution on
the unit simplex $\{ x\in {\bf R}^k: x_\ell\geq 0;\sum_{\ell=1}^kx_\ell=1\}$.
The topic of the document
$j$ is set to $P_{\cdot,j}=\sum_{\ell=1}^kM_{\cdot, \ell}W_{\ell,j}$.
$P_j$ are latent points.
\item The $m$ words
of document $j$ are generated in i.i.d. trials from the multinomial distribution with $P_{\cdot,j}$ as the
probability vector.
The data point $A_{\cdot,j}$ is given by:
\end{itemize}
$$A_{i,j}=\frac{1}{m}\sum_{t=1}^{m} X_{ijt}, \quad X_{ijt} \sim Bernoulli(P_{ij})$$
The random-variate, $X_{ijt} = 1$ if $i$ th word was chosen in the $t$ th draw
 while generating the $j$ th document and $0$ otherwise.
In other words $A_{ij}$ is the relative frequency
of $i$th word in the $j$th document. As a consequence,
\begin{equation}\label{eq:stat}
E(A_{ij}) = P_{ij}\; ;\; Var(A_{ij}| P_{ij}) = \frac{1}{m} P_{ij} (1 - P_{ij})
\end{equation}
The data generation process of a Topic model, such as LDA, can be also viewed
from a geometric perspective.  For each document, $j$, the observed data, $A_{\cdot,j}$, is generated from $P_{\cdot,j}$, a point in the simplex, $K$, whose vertices are defined by the $k-$ Topic vectors.

If priors are sepcified then
recovering $\bM$ in Topic Modeling, such as LDA, can be viewed as a classical parameter estimation problem, where, one is given samples
(here $A_{\cdot,j}$) drawn from a multi-variate probability distribution and the problem is to estimate the
parameters of the distribution.
The learning problem in LDA is usually addressed by two classes of algorithms-
Variational and MCMC based, neither of which is known to be polynomial time bounded.
Recently,
polynomial time algorithms have been
developed for Topic Modeling under assumptions on word frequencies,
topic weights and Numerical Analysis properties (like condition number) of the matrix $\bM$\cite{AGM,anima,tsvd}.
While the algorithms are provably polynomial time bounded, the assumptions are 
domain-specific and the running time
is not good as the algorithm we present here. Also, it is to be noted that the
algorithm to be presented here  is completely
different in approach from the ones in the literature.

Topic modeling can be posed geometrically as the problem of learning the latent k-simplex, namely, the
convex hull of the topic vectors.
To formalize 
the problem and devcise an algorithm it will be useful to understand
some properties of the data generated from such simplices,
which we do presently. 
For concretenss, we focus on LDA, a widely used Topic model.

{\bf Data Outside $K$}: 
The convex hull of the topic vectors $M_{\cdot,1},M_{\cdot,2},\ldots ,M_{\cdot,k}$ we assume is a
simplex (namely, they span a $k$ dimensional space). It is denoted $K$. $K$ is in ${\bf R}^d$ and so
$d\geq k$; indeed, generally, $d>>k$.

We point out here that data points $A_{\cdot,j}$ can lie outside $K$.
Indeed, even in the case when $k=1$, whence, $K=\{ M_{\cdot, 1}\}$,
$A_{\cdot,j}$ will lie outside $K$ since:
in the usual parameter setting, $m,k\in O^*(1)$, and,
$n$ goes to infinity and the tail of $M_{\cdot,\ell}$, namely,
$I=\{ i: M_{i,\ell}\leq 1/(2m)\}$ has $\sum_{i\in I}M_{i,\ell}\in \Omega(1)$.
So, for most $j$, there is at least one $i\in I$ with $A_{ij}\geq 1/m$ which implies
$A_{\cdot,j}\not= M_{\cdot,1}$. While for $k=1$, there are simpler examples, we have
chosen this illustration because it is easy to extend the argument for general $k$.

 Not only does data often lie outside $K$, we argue below that in fact it lies
 significantly outside. This property of data being outside
$K$ distinguishes our problem from the extensive literature in Theoretical Computer Science (see e.g. \cite{AGR13})
on learning polytopes given (often uniform random) sample inside the polytope.
A simple calculation, which easily follows from \eqref{eq:stat},
shows that
\begin{align*}
&E(|A_{\cdot,j}-P_{\cdot,j}|^2)=\frac{1}{m}(1-\sum_{i=1}^dP_{ij}^2) \\
&\geq \frac{1}{m}(1-\Max_\ell\sum_{i=1}^dM_{i\ell}^2),
\end{align*}
where the last inequality follows by noting that $P_{\cdot,j}$ is a convex combination of columns of $\bM$ and $x^2$ is a convex function of $x$.
\begin{equation}\label{1235}
\sum_{i=1}^dM_{i\ell}^2\leq \Max_{i,\ell}M_{i,\ell}=\gamma, \mbox{  say }.
\end{equation}
Now, $\gamma$ the maximum frequency of a single word in a topic,
which is usually at most a small fraction.
So, 
individual $|A_{\cdot,j}-P_{\cdot,j}|$ which we refer to as the ``perturbation'' of point $j$,
is $\Omega(1/\sqrt m)$, which is $\Omega^*(1)$ in the usual parameter
ranges. But note that a side of $K$, namely $|M_{\cdot,\ell}-M_{\cdot,\ell'}|\leq 1$,
so perturbations can be the same order as sides of $K$. To summarize in words: most/all 
data points can lie
$\Omega(\mbox{ side length of }K)$ outside of $K$.

{\bf Subset Averages} While individual $|A_{\cdot,j}-P_{\cdot,j}|$ may be high, intuitively, the average of
$A_{\cdot, j}$ over a large subset $R$ of $[n]$ should be close to the average of $P_{\cdot,j}$ over $R$ by law of
large numbers. Indeed, we will prove later  
an upper bound on the spectral norm of the matrix of pereturbations $\bA-\bP$
(see Lemma \ref{LDA-TM} gives a precise statement) by using the stochastic independence of words in documents and applying Random Matrix Theory.
This upper bound immediately implies that simultaneously for every $R\subseteq [n]$,
we have a good upper bound on $|A_{\cdot,R}-P_{\cdot,R}|$ as we will see in Lemma (\ref{averages}).
This leads us to our starting point for an algorithmic solution, namely, a technique we will call 
{\it Subset Smoothing}. Subset Smoothing is the obserbvation that if we take the convex hull $K'$
of the ${n\choose \delta n}$  averages of all $\delta n$-sized  subsets of the $n$ data points, then,
$K'$ provisdes a good approximation to $K$ under our assumptions.  
(Theorem ~\ref{K-K-prime} states precisely how close $K,K'$ are.)
We next describe another property of LDA which is essential for subset smoothing.

{\bf Proximity}
Recall that LDA posits a Dirichlet prior on picking topic weights $W_{\ell,j}$.
The Dirichlet prior $\phi(\cdot)$ on the unit simplex
is given by
$$\phi(W_{\cdot,j})\propto \prod_{\ell=1}^k W_{\ell ,j}^{\beta-1},$$
where, $\beta$ is a parameter, often set of a small positive value like $1/k$. Since $\beta <1$, this
density puts appreciable mass near the corners. Indeed, one can show (see Lemma (\ref{dirichlet})) that
\begin{equation}\label{eq:prox}
\forall \ell, \forall \zeta \quad \Prob\left( W_{\ell,j}\geq 1-\zeta \right)\geq \frac{1}{3k}\zeta^2.
\end{equation}
So, a good fraction of the $P_{\cdot,j}$ are near each corner of $K$. This indeed helps the learning
algorithm, since, for example, if all $P_{\cdot,j}$ lay in a proper subset of $K$, it is difficult to
learn $K$ in general. Quantitatively, we will see that this leads a lower bound on the fraction of
$j$ with $P_{\cdot,j}\approx M_{\cdot,\ell}$.

{\bf Input Sparsity}  Each data point $A_{\cdot,j}$ has at most
$m$ non-zero entries and as we stated earlier, typcally $m<<d,n$ and
so the data is sparse. It is important to design algorithms which exploit this sparsity.

\subsection{Mixed Membership Stochastic Block(MMSB) Models}\label{section:MMBM}

Formulated in \cite{A08}, this is a model of a random graph where edge $(j_1,j_2)$
is in the graph iff person $j_1$ knows person $j_2$.  There are $k$ communities; there is a $k\times k$
latent matrix $\bB$, where,
$B_{\ell_1,\ell_2}$ is the probability that a person in community $\ell_1$ knows
a person in community $\ell_2$. An underlying stochastic process is posited, again consisting of
two components: For $j=1,2,\ldots ,n$, person $j$ picks a $k$ vector $W_{\cdot,j}$ of community
membership weights (non-negative reals summing to 1) according to a Dirichlet probbaility density.
This is akin to the prior on picking $\bW$ in LDA.
Then, the edges are picked independently. For edge $(j_1,j_2)$, person $j_1$ picks a community $\ell$
from the multinomial with probabilities given by $W_{\cdot,j_1}$ and person $j_2$ picks a community
$\ell'$ according to $W_{\cdot,j_2}$. Then, edge $(j_1.j_2)$ is included
in $G$ with probability $B_{\ell,\ell'}$. So, it is easy to see that the probaility
$P_{j_1,j_2}$ of edge $(j_1,j_2)$ being in $G$ is given by $P_{j_1,j_2}=\sum_{\ell,\ell'}W_{\ell,j_1}B_{\ell,\ell'}W_{\ell',j_2}$
which in matrix notation reads:
$$\bP=\bW^T\bB\bW.$$

$\bW$ being a stochastic matrix, it cannot be recovered from $G$, but we can aim to recover $\bB$.
But now $\bP$ depends quadratically on $\bW$ and recovering $\bB$ directly does not seem easy. Indeed, the
only provable polynomial time algorithms known to date for this problem use tensor methods
or Semi-definite programming
and require
assumptions; further the running time is a high polynomial (see \cite{anima14,Hop17}).
But we can pose the problem of recovery of the $k$ underlying communities differently.
Instead of aiming to get $\bB$, we wish to pin down a vector for each community.
First we pick at random a subset $V_1$ of $d$ people and lets call the remaining set of
people $V_2$. For convenience, assume $|V_2|=n$.
We will represent community $\ell$
by a $d$ vector, where the $d$ ``features'' are the probabilities that
a person in $V_2$ in community $\ell$ knows each of the $d$ people in $V_1$.
With a sufficiently large $d$, it is intuitively clear that the community vectors
describe the communities well. Letting the columns of a 
$k\times d$ matrix $\bW^{(1)}$ and the columns of a $k\times n$ matrix $\bW^{(2)}$
denote the fractional membership weights of
members in $V_1$ and $V_2$ respectively, the probability matrix for the bipartite graph on $(V_1,V_2)$
is given by
\begin{equation}\label{eq:mmsb}
\bP=\underbrace{(\bW^{(1)})^T\bB}_{\bM}\bW^{(2)}.
\end{equation}
This reduces the Model Estimation problem here to our geometric problem: Given $\bA$, the adjacency matrix
of the bipartite graph, estimate $\bM$.
Note that the random variables in $\bW^{(1)},\bW^{(2)}$ are independent.

\cite{A08} assumes that each column of $\bW^{(2)}$ is picked from the Dirchlet distribution. An usual setting
of the concentration parameter of this Dirichlet is $1/k$ and we will use this value.
The proof that Proximate Data Assumption is satisfied is exactly on the same
lines as for Topic Models above. The proof that
spectral norm of the perturbation matrix $\bA-\bP$
 is small
again draws on Random Matrix Theory (\cite{vers}), but, is slightly simpler.
\noindent
MMSB also shares the four properties discussed in the context of LDA.

 {\bf Data Outside $K$} We illustatrte in a simple case. Suppose $k=1$ and $\bM$ has
all entries equal to $p$. The graph of who knows whom is generally sparse. This means
 $p\in o(1)$. $A_{\cdot,j}$ now will consist of $pd$ 1's (in expectation), so we will have
$|A_{\cdot,j}-M_{\cdot,1}|\approx \sqrt{pd}$, whereas $|M_{\cdot,\ell}|=p\sqrt d$. Since
$p\in o(1)$
here, in fact, $|M_{\cdot,1}|$ is $o(1)$ times perturbation, so indeed data is far away from $K$.
This example can be generalized to higher $k$.
More generally, if block sizes are each $\Omega(d)$ and we have graph sparsity, the same
phenomenon happens.

{\bf Subset Averages} Under the stochastic model, we can again use Random Matrix Theory
(\cite{vers}) to derive an upper bound on the spectral norm of $\bA-\bP$
similar to LDA. [The proof is different
because edges are now mutually independent.] Then as before, we can use Lemma ~\ref{averages}
 to show that
for all $R$, the averages of data points and latent points in $R$ are close.

 {\bf Proximity} Since the Drichlet density is used as a prior, we have the same argument as in LDA and one can prove a result similar \eqref{eq:prox}.

{\bf Input Sparsity} The graph of who knows whom is typically sparse.

\subsection{Adversarial Clustering}\label{sec:advinf}
Traditional Clustering problems arising from mixture models can be stated as:
Given $n$ data points $A_{\cdot,1},A_{\cdot,2},\ldots ,A_{\cdot,n}\in {\bf R}^d$ which can be partitioned
into $k$ distinct clusters $C_1,C_2,\ldots ,C_k$, find the means $M_{\cdot,1},M_{\cdot,2},\ldots ,M_{\cdot,k}$
of $C_1,C_2,\ldots ,C_k$. While there are many results
for mixture models showing that under stochastic assumptions, the $M_{\cdot,\ell}$
can be estimated, more relevant to our discussion are the results of \cite{KK10} and \cite{AS12},
which show that under a deterministic assumption, the clustering problem can be solved. In more detail,
letting $P_{\cdot,j}$ denote the mean of the cluster $A_{\cdot,j}$ belongs to (so, $P_{\cdot,j}\in\{ M_{\cdot,1},M_{\cdot,2},\ldots,
M_{\cdot,k}\}$) and defining $\sigma,\delta$ as follows
($\sigma$ denotes the maximum over directions of the square root of the mean-squared perturbation in the direction, and $\delta$
is a lower bound on the weight of a cluster):
\begin{equation}\label{eq:sing}
\sigma=\Max_{v:|v|=1}\sqrt{\frac{1}{n}|v^T(\bA-\bP)|^2}=\frac{1}{\sqrt n}||\bA-\bP||\; ;\;  \frac{1}{n}\Min_\ell|C_\ell|\geq\delta,
\end{equation}
\cite{KK10} and \cite{AS12} show that:

If $|M_{\cdot,\ell}-M_{\cdot,\ell'}|\geq ck\frac{\sigma}{\sqrt\delta}\forall \ell\not=\ell',$
the $M_{\cdot, \ell}$ can be
found within error $O(\sqrt k\sigma/\sqrt\delta)$.

Note that $\delta$ may go to zero as $n\rightarrow\infty$.
We observe (see Lemma ~\ref{lem:planted-clique} for a formal statement) that
if the error can be improved to $o(\sigma/\sqrt\delta)$ just in the case when $k=2$,
then, we can find $o(\sqrt n)$ size planted cliques in a random graph $G(n,1/2)$ settling a
major open problem. So, at the present state of knowledge, an error of $O(\sigma/\sqrt\delta)$
(times factors of $k$) is the best dependence of the error on $\sigma,\delta$
we can aim for and our algorithm will achieve this for the LLS problem.

\subsubsection{ Adversarial Noise}\label{sec:advnoise} 
We now allow an adversary to choose
for each $\ell\in [k]$, a subset $S_\ell$ of $C_\ell$ of cardinality $\delta n$ and
to add noise
$\Delta_j$
to each data point $A_{\cdot,j}$, where, the $\Delta_j$ satisfy the following conditions:
\begin{itemize}
\item For all $j$, $P_{\cdot,j}+\Delta_j\in \text{Convex Hull of } (M_{\cdot,1},M_{\cdot,2},\ldots ,M_{\cdot,k})$ and
\item For $j\in \cup_{\ell=1}^k S_\ell $, $|\Delta_j|\leq 4\sigma/\sqrt\delta$.
\end{itemize}
In words, each data point $A_{\cdot,j}$ is moved an arbitrary amount towards the convex hull of the means of the clusters it does not
belong to (which intuitively makes the learning problem more difficult),
but, for $\delta n$ points in each cluster, the move is by distance at most $O(\sigma/\sqrt\delta)$
. Note that the $C_\ell,S_\ell, P_{\cdot, j},\Delta_j$, the original
$A_{\cdot,j}$ are all
latent; only the adversarial-noise-added $A_{\cdot,j}$ are observed and the problem is to find the $M_{\cdot,\ell}$ approximately.
For convenience, we pretend that the same noise $\Delta_j$ has been added to the latent points
$P_{\cdot,j}$ as well, so $\bA-\bP$ remains invariant. Also for ease of notation, we
denote the noise-added data points, by $A_{\cdot,j}$ and the noise-added latent points by $P_{\cdot,j}$ from now on.

Now, it is easy to see that (the new) $P_{\cdot,j}$ satisfy the following:
\begin{align}
&\forall j, P_{\cdot,j}\in \text{Convex Hull of } (M_{\cdot,1},M_{\cdot,2},\ldots ,M_{\cdot,k})\label{Pjconditions-1}\\
&\forall \ell\in [k], \exists S_\ell, |S_\ell|=\delta n: \forall j\in S_\ell, |P_{\cdot,j}-M_{\cdot,\ell}|\leq 4\sigma/\sqrt\delta\label{Pjconditions-2}
\end{align}
Also, it is clear that any set of $P_{\cdot,j}$ satisfying these two conditions qualify as
a set of latent points for our Adversarial clustering problem.

{\bf Notation:}
$n,d,k$ are reserved for number of data points, number of dimensions of the space and the number of vertics
of $K$ respectively. Also, we reserve $i,i',i_1,i_2$ to index elements of $[d]$, $j,j',j_1,j_2$ to index $[n]$ and
$\ell,\ell',\ell_1,\ell_2$ to index $[k]$. $\bA,\bM,\bP$ are reserved for the roles described above.
$A_{\cdot,j}$ denotes the $j$ th column of matrix $\bA$ and so too for other matrices.
For a vector valued random variable $X\in {\bf R}^d$, $\Var(X)$ denotes the covariance matrix of $X$.
For a matrix $\bB$,
$s_1(\bB),s_2(\bB),\ldots $ are the singular values arranged in non-increasing order.
$||{\bB}||=\Max_{x:|x|=1}|x^T\bB |=s_1(\bB)$ is the spectral norm.
CH denotes convex hull of what follows. CH of a matrix is the convex hull of its columns.

\subsection{Latent k-simplex is an unifying model}\label{sec:lks}
From the review of existing models one can conclude that indeed learning many latent
variable models can be posed as the Latent k-simplex problem.
More precisely, the learning problem
can be understood as  aiming to recover the $k$ vertices
$M_{\cdot, 1},M_{\cdot,2},\ldots ,M_{\cdot,k}$
of a latent simplex $K$
in ${\bf R}^d$
given data generated from $K$ under a stochastic or a deterministic model.

In situations with hypothesized stochastic processes, the following assumptions
are made on the
data generation process (of generating $A_{\cdot,j}$ given $P_{\cdot,j}$):
\begin{align*}
&A_{\cdot,j}, j=1,2,\ldots ,n\mbox{ mutually independent }\; |\; \bP\\
&A_{\cdot,j} \mbox{ are drawan according to a specific prob distribution satisfying }\\
&E(A_{\cdot, j}\;|\; P_{\cdot, j})=P_{\cdot,j}\mbox{  and }\\
&\mbox{ An upper bound on } ||\Var(A_{\cdot,j}\; |\; P_{\cdot,j})||.
\end{align*}
Under these conditions, Random Matrix Theory \cite{vers}
can be used to prove that $||\bA-\bP||$ is bounded:
(See for example proof of Lemma \ref{LDA-TM} for a precise statement)
\begin{align}\label{895}
\frac{1}{\sqrt n}|||\bA-\bP||=\sigma&\leq O\left(||\Var(A_{\cdot,j}\; |\; P_{\cdot,j})||^{1/2}\right).
\end{align}
Also in both LDA and MMSB models, a Dirichlet prior is assumed on the $P_{\cdot,j}$.
More precisely, since $P_{\cdot,j}$ are convex combinations of the vertices of $K$,
there is a (latent) $k\times n$ matrix $\bW$ with non-negative entries and  column sums
equal to 1 such that
$$\bP=\bM\bW.$$
The Dirichlet density
(under the usual setting of parameters) is maximum at the corners of the simplex
and attaches at least a positive fraction of weight to the region close to each corner.
Thus, one has with high probability:
\begin{equation}\label{896}
\forall \ell\in [k]\; ,\; \prob\left( |P_{\cdot,j}-M_{\cdot,\ell}|\leq\varepsilon_1\right)\geq \delta_1,
\end{equation}
for suitable $\varepsilon_1,\delta_1$.

Here, we do not assume  a stochastic model of data generation, neither do we posit
any prior on $P_{\cdot,j}$.
Instead, we make deterministic assumptions.
To impose a bound on $\sigma$, analogous to (\ref{895}), we impose
an assumption we call  
{\bf Spectrally bounded Perturbations}.
Furthermore, to characterize the property that there is concentration of observed data near the extreme points, we make the assumption we call  \prox.
We will show later that in the LDA model as well as usual MMSB model, our
deterministic assumptions are satisfied with high probability.

The deterministic assumptions
 have another advantage: In Clustering and other
situations like NMF, there is geenrally no stochastic model, but we can still apply our
results.
An upper bound on spectral norm similar to what we use
here was used for the special cases of (non-adversarial) clustering and pure mixture models
(which are a special case of the ad-mixture models we have here)
in \cite{KK10}.
Also in the clustering context, the Proximity assumption is similar to the assumption
often made of a lower bound on cluster weights.

The main contribution of this paper is to show that Learning a Latent Simplex (LLS) 
problem can be solved using three deterministic assumptions: the two described above plus
an assumption of {\bf Well Separatedness}, a standard assumption in mixture models, that the vertices of $K$ are well-separated.

\section{Subset Smoothing}\label{section:smooth}

We now describe the starting point of our solution method, which is a technique we call ``Subset Smoothing''. It shows that
the simplex $K$ which we are trying to learn is well-approximated by a data-determined polytope $K'$
which is the convex hull of the ${n\choose \delta n}$ points, each of which is the average of $A_{\cdot,j}$
over $j$ in a subset of size $\delta n$. While the description of $K'$ is exponential-sized, it is easy to
see that $K'$ admits a polynomial-time (indeed linear time) optimization oracle and this will be our starting point.
First, to see why averages of $\delta n$ subsets of data help, note that our assumptions
will not place any upper on individual
perturbations $|A_{\cdot,j}-P_{\cdot,j}|$; indeed they are typically very large in Latent Variable Models as we discussed above. However, we will place an upper bound on $||\bA-\bP||$, the spectral norm of the perturbation matrix or
equivalently on $\sigma=||\bA-\bP||/\sqrt n$. [Such an upper bound on $||\bA-\bP||$ is usually available in
stochastic latent variable models via Random Matrix Theory, as we will see later in the paper for LDA and MMBM.]
For any subset
$R$ of $[n]$, denote by $A_{\cdot, R} $ the average of $A_{\cdot,j}, j \in R$, and so for $\bP$, namely
$$A_{\cdot, R}=\frac{1}{|R|}\sum_{j\in R}A_{\cdot,j}\; ;\; P_{\cdot, R}=\frac{1}{|R|}\sum_{j\in R}P_{\cdot,j}.$$
It is easy to see that an upper bound on $\sigma$ 
(defined in (\ref{eq:sing}))
implies an upper bound on $\Max_{R:|R|=\delta n}|A_{\cdot,R}-P_{\cdot,R}|$:

\begin{lemma}\label{averages}
For all $S\subseteq [n],$
$|A_{\cdot, S}-P_{\cdot, S}|\leq \frac{\sigma\sqrt n}{\sqrt {|S|}}.$
\end{lemma}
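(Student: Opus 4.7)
The plan is to express the vector $A_{\cdot,S} - P_{\cdot,S}$ as the matrix $\bA - \bP$ applied to a suitable vector, then bound the result by the spectral norm of $\bA - \bP$ times the Euclidean norm of that vector. This is a completely routine calculation once one recognizes the right way to write the averages as a matrix-vector product.

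Concretely, let $\chi_S \in \R^n$ denote the indicator vector of $S$, i.e., the vector with a $1$ in coordinate $j$ if $j\in S$ and $0$ otherwise. Then by definition
\[
A_{\cdot,S}-P_{\cdot,S} \;=\; \frac{1}{|S|}\sum_{j\in S}(A_{\cdot,j}-P_{\cdot,j}) \;=\; \frac{1}{|S|}(\bA-\bP)\chi_S.
\]
Taking Euclidean norms and using the definition of the spectral norm $\|\bA-\bP\|=\sigma\sqrt{n}$,
\[
|A_{\cdot,S}-P_{\cdot,S}| \;\leq\; \frac{1}{|S|}\,\|\bA-\bP\|\cdot |\chi_S| \;=\; \frac{1}{|S|}\,\sigma\sqrt{n}\,\sqrt{|S|} \;=\; \frac{\sigma\sqrt{n}}{\sqrt{|S|}}.
\]

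There is really no main obstacle here; the only conceptual step is to realize that although we lack any pointwise control on the individual perturbations $|A_{\cdot,j}-P_{\cdot,j}|$, a spectral norm bound on $\bA-\bP$ automatically controls its action on \emph{every} direction, including the uniform direction on any subset $S$. The saving of a factor of $\sqrt{|S|}$ compared to a triangle-inequality bound is exactly the cancellation that the spectral norm captures, and this is the whole point of working with subset averages of size $\delta n$ rather than individual data points.
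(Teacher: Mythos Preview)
Your proof is correct and is essentially identical to the paper's own argument: write $A_{\cdot,S}-P_{\cdot,S}=\frac{1}{|S|}(\bA-\bP){\bf 1}_S$, bound by the spectral norm, and use $|{\bf 1}_S|=\sqrt{|S|}$ together with $\|\bA-\bP\|=\sigma\sqrt n$.
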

\begin{proof}
This just follows from the fact that $|A_{\cdot, S}-P_{\cdot, S}|=\frac{1}{|S|}|(\bA-\bP){\bf 1}_S|\leq \frac{1}{|S|}||\bA-\bP||\; |{\bf 1}_S|$ and
$|{\bf 1}_S|=\sqrt{|S|}$.
\hfill\end{proof}
Now, we can prove that the data-determined polytope $K'=CH(A_{\cdot, S}:|S|=\delta n)$ is close to the simplex
$K=CH(\bM)$ which we seek to find. Closeness of two sets $K_1,K_2$ is measured in Hausdorff metric $D(K_1,K_2)$ which we
define here.
For sets $K_1,K_2$, define:
$$\text{Dist}(K_1,K_2)=\sup_{x\in K_1}\inf_{y\in K_2} |x-y|.$$
Note Dist$(K_1,K_2)$ may not equal Dist$(K_2,K_1)$.
If $K_1$ is a single point $x$, we write $\text{Dist}(x,K_2)$.
$\text{Dist}(x,K_2)$ is a convex function of $x$. So, we have:

\begin{claim}\label{792}
If $x$ varies over
polytope $K_1$, then the maximum of $\text{Dist}(x,K_2)$ is attained at a vertex of $K_1$.
\end{claim}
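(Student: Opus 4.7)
The excerpt has already noted that $f(x) := \text{Dist}(x,K_2) = \inf_{y \in K_2}|x-y|$ is a convex function of $x$ (this uses that $|x - y|$ is convex in $x$ for each fixed $y$, together with convexity of $K_2$). So the claim is really the standard fact that a convex function on a polytope attains its maximum at a vertex; I would just record the short argument.

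Let $v_1, v_2, \ldots, v_m$ be the vertices of the polytope $K_1$. Given any $x \in K_1$, by the definition of polytope one can write $x = \sum_{t=1}^m \lambda_t v_t$ with $\lambda_t \geq 0$ and $\sum_t \lambda_t = 1$. Convexity of $f$ then gives
$$
f(x) \;\leq\; \sum_{t=1}^m \lambda_t\, f(v_t) \;\leq\; \Bigl(\max_{t \in [m]} f(v_t)\Bigr)\sum_{t=1}^m \lambda_t \;=\; \max_{t \in [m]} f(v_t).
$$
Since each $v_t$ is itself a point of $K_1$, the reverse inequality $\sup_{x \in K_1} f(x) \geq \max_t f(v_t)$ is trivial, so the supremum is attained at a vertex.

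There is essentially no obstacle: the only thing worth being careful about is the appeal to convexity of $f$, which in turn relies on $K_2$ being convex (if $K_2$ were a general set the infimum-of-norms function need not be convex). In the intended application $K_2$ will be the simplex $K$ or the data polytope $K'$, both of which are convex, so this is fine and the two-line Jensen-type argument above suffices.
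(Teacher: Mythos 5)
Your proof is correct and matches the paper's (implicit) reasoning: the paper states that $\text{Dist}(x,K_2)$ is convex and presents the claim as an immediate consequence, which is exactly the Jensen-type argument you spell out. Your added caveat that convexity of $f$ genuinely requires $K_2$ convex (an infimum of convex functions need not be convex) is a correct and worthwhile point of care.
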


Hausdorff diatsnce $D$ is defined by
$D(K_1,K_2)=\Max(\text{Dist}(K_1,K_2),\text{Dist}(K_2,K_1))$.

\begin{theorem}\label{K-K-prime}
Let $K'=CH(A_{\cdot,R}:|R|=\delta n)$. We have
$D(K,K')\leq 5\sigma/\sqrt\delta.$
\end{theorem}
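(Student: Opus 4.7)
The plan is to establish the Hausdorff bound by treating the two one-sided distances separately, using Lemma \ref{averages} for one direction and the Proximity assumption for the other.

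\textbf{Direction 1: $\text{Dist}(K', K) \leq \sigma/\sqrt{\delta}$.} By Claim \ref{792}, it suffices to bound $\text{Dist}(x, K)$ over vertices $x$ of $K'$, i.e., over points of the form $A_{\cdot,R}$ with $|R| = \delta n$. For any such $R$, the corresponding average of latent points $P_{\cdot, R} = \frac{1}{|R|}\sum_{j \in R} P_{\cdot, j}$ is a convex combination of the $P_{\cdot,j}$'s, each of which lies in $K$ (since $P_{\cdot,j} = \bM W_{\cdot, j}$ with $W_{\cdot, j}$ on the unit simplex). Hence $P_{\cdot, R} \in K$, and Lemma \ref{averages} with $|S| = \delta n$ gives
\[
|A_{\cdot, R} - P_{\cdot, R}| \leq \frac{\sigma \sqrt{n}}{\sqrt{\delta n}} = \frac{\sigma}{\sqrt{\delta}}.
\]
This yields $\text{Dist}(A_{\cdot, R}, K) \leq \sigma/\sqrt{\delta}$ for every vertex of $K'$.

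\textbf{Direction 2: $\text{Dist}(K, K') \leq 5\sigma/\sqrt{\delta}$.} Again by Claim \ref{792}, it suffices to bound $\text{Dist}(M_{\cdot, \ell}, K')$ for each vertex $M_{\cdot, \ell}$ of $K$. Here I would invoke the Proximity assumption, which (in the deterministic form that all three model applications satisfy, cf.\ the Adversarial Clustering statement in (\ref{Pjconditions-2})) guarantees, for each $\ell \in [k]$, the existence of a subset $S_\ell \subseteq [n]$ with $|S_\ell| = \delta n$ such that $|P_{\cdot, j} - M_{\cdot, \ell}| \leq 4\sigma/\sqrt{\delta}$ for every $j \in S_\ell$. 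Averaging over $j \in S_\ell$ and using the triangle inequality gives $|P_{\cdot, S_\ell} - M_{\cdot, \ell}| \leq 4\sigma/\sqrt{\delta}$. Combining with Lemma \ref{averages} applied to $S_\ell$,
\[
|A_{\cdot, S_\ell} - M_{\cdot, \ell}| \leq |A_{\cdot, S_\ell} - P_{\cdot, S_\ell}| + |P_{\cdot, S_\ell} - M_{\cdot, \ell}| \leq \frac{\sigma}{\sqrt{\delta}} + \frac{4\sigma}{\sqrt{\delta}} = \frac{5\sigma}{\sqrt{\delta}}.
\]
Since $|S_\ell| = \delta n$, the point $A_{\cdot, S_\ell}$ is a vertex of $K'$, hence lies in $K'$, giving $\text{Dist}(M_{\cdot, \ell}, K') \leq 5\sigma/\sqrt{\delta}$.

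Taking the maximum of the two bounds yields $D(K, K') \leq 5\sigma/\sqrt{\delta}$. The conceptually nontrivial ingredient is that the Proximity assumption supplies precisely a size-$\delta n$ witnessing subset for each vertex; this is what forces the choice of $\delta n$ as the subset-size parameter in the definition of $K'$, so that the same $\delta$ appears in the average-concentration bound (from Lemma \ref{averages}) and in the proximity clustering bound. The rest is two applications of Claim \ref{792} and a triangle inequality; there is no further obstacle.
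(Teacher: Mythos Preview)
Your proof is correct and follows essentially the same approach as the paper's own proof: both directions are reduced to vertices via Claim~\ref{792}, then one direction uses $P_{\cdot,R}\in K$ together with Lemma~\ref{averages}, and the other combines the Proximity assumption (\ref{Pjconditions-2})/(\ref{extreme-docs}) with Lemma~\ref{averages} and the triangle inequality. The only difference is that you present the two directions in the opposite order.
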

\begin{proof}
We first prove that Dist$(K,K')\leq 5\sigma/\sqrt\delta$ for which, by Claim (\ref{792}),
it suffices to show that
$\forall \ell \in [k], \exists S\subseteq [n], |S|=\delta n: |M_{\cdot,\ell}-A_{\cdot,S}|\leq \frac{5\sigma}{\sqrt\delta}$.
Take $S=S_\ell$ (for the $S_\ell$ defined in (\ref{Pjconditions-2})). Since for each $j\in S_\ell$, we have $|P_{\cdot,j}-M_{\cdot,\ell}|\leq 4\sigma/\sqrt \delta$,
we have by convexity of $|\cdot |$ that $|P_{\cdot, S_\ell}-M_{\cdot,\ell}|\leq 4\sigma/\sqrt\delta$. By Lemma (\ref{averages}),
it follows that $|P_{\cdot,S_\ell}-A_{\cdot,S_\ell}|\leq\sigma/\sqrt\delta$. Adding these two and using the triangle
inequality, we get $|M_{\cdot,\ell}-A_{\cdot,S_\ell}|\leq \frac{5\sigma}{\sqrt\delta}$ as claimed.

To prove that Dist$(K',K)\leq 5\sigma/\sqrt\delta$,
again by Claim (\ref{792}), it suffices to show for any $S\subseteq [n]$, $|S|=\delta n$, Dist$(A_{\cdot,S},K)\leq\sigma\sqrt\delta$.
Note that $P_{\cdot,S}\in K$ (since it is the avergae of $\delta n$ points in $K$) and
also $|A_{\cdot,S}-P_{\cdot,S}|\leq\sigma/\sqrt\delta$ by Lemma (\ref{averages}) and so Dist$(A_{\cdot,S},K)\leq \sigma/\sqrt\delta$.
\end{proof}

\begin{lemma}\label{oracle}
Given any $u\in {\bf R}^d$, $\Max_{x\in K'} (u\cdot x)$ can be found in linear time
(in $\bA$).
\end{lemma}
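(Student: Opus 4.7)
The plan is to exploit the fact that a linear function over a convex polytope attains its maximum at a vertex, so
\[
\Max_{x\in K'}(u\cdot x)=\Max_{R\subseteq[n],\,|R|=\delta n}\;u\cdot A_{\cdot,R}=\frac{1}{\delta n}\Max_{R:|R|=\delta n}\sum_{j\in R}(u\cdot A_{\cdot,j}).
\]
Thus the optimization over the exponentially many vertices collapses to a ranking problem on the $n$ scalars $c_j:=u\cdot A_{\cdot,j}$: the optimal $R$ consists of the $\delta n$ indices $j$ with the largest values of $c_j$, since any swap that replaces a chosen index by an unchosen one with a smaller $c_j$ can only decrease the sum.

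The algorithm is then: (i) compute the vector of scalars $(c_j)_{j=1}^n$; (ii) find the $\delta n$ largest entries; (iii) return their average (or equivalently the sum divided by $\delta n$). Step (i) is implemented by a single pass over the nonzero entries of $\bA$: for each nonzero $A_{ij}$, add $u_i A_{ij}$ to a running total for column $j$. This costs $O(\mbox{nnz}(\bA)+n)$ time and, crucially, does not require forming any dense representation of $\bA$. Step (ii) can be done in $O(n)$ time by a standard linear-time selection (quickselect) to find the $(\delta n)$-th largest $c_j$, followed by one more pass to collect indices above the threshold; step (iii) is then $O(n)$. The total running time is $O(\mbox{nnz}(\bA)+n)$, which is linear in the input size of $\bA$.

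There is no real obstacle: the only two points worth checking are (a) that maximizing a linear function over the convex hull of a finite set equals the maximum over that finite set, which is immediate from the definition of the convex hull, and (b) that the greedy choice of the top $\delta n$ scalars is optimal, which is an elementary exchange argument. One mild caveat is the tacit assumption that $\delta n$ is an integer (or that one rounds consistently), which does not affect the asymptotic running time. No randomization is required beyond the standard deterministic linear-time selection algorithm.
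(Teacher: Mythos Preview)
Your proof is correct and follows essentially the same approach as the paper: compute all the dot products $u\cdot A_{\cdot,j}$ via a single matrix-vector product in $O(\text{nnz}(\bA))$ time, then take the average of the $\delta n$ largest values. You supply a bit more justification (the vertex-maximum observation and the exchange argument) and explicitly invoke linear-time selection, but the underlying idea is identical.
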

\begin{proof} One computes $u\cdot A_{\cdot,j}, j=1,2,\ldots ,n$ by doing a matrix-vector product in time $O(\text{nnz}(\bA))$
and takes the avrage of the $\delta n$
highest values.
\end{proof}

The above immediately suggests the question: Can we just optimize $k$ linear functions over $K'$ and hope
that each optimal solution gives us an approximation to a new vertex of $K$? It is easy to see that if
we choose the $k$ linear functions at random, the answer is not necessarily. However, this idea does work
with a careful choice of linear functions. We will now state the choices which lead to our algorithm.

\section{Statement of Algorithm}

Our algorithm will choose (carefully) $k$ linear functions. We will show that optimizing each of these
will give us an approximation to a new vertex of $K$, thus at the end, we will have all $k$ vertices.
The algorithm can be stated in a simple self-contained way and we do so presently. We will prove correctness
under our assumptions after formalizing the assumptions.
However, the proof is not nearly as simple as the algorithm statement
and will occupy the rest of the paper. Of the steps in the algorithm, the truncated SVD step at the start is costly and does not meet our
time bound. We will later replace it by the classical subspace power iteration method which does.

\begin{algorithm}
\caption{An algorithm for finding latent k-polytope from data matrix ${\bf A}$}
\textbf{Input}: $\bA, k,\delta$  ~{\tt $\bA$~is a $(d \times n)$ matrix}\\

\begin{algorithmic}

\State Let $V$ be the vector space spanned by the top $k$ left singular vectors of $\bA$.

 \ForAll {$r=0,1,2,\ldots ,k-1$}

    \State Pick $u$ at random from the $k-r$ dimensional sub-space
$U=V\cap \Null(A_{\cdot, R_1},A_{\cdot, R_2},\ldots ,A_{\cdot,R_r})$.

    \State $R_{r+1}\leftarrow \arg\max_{S:|S|=\delta n}|u\cdot A_{\cdot,S}|$

\EndFor

\State {\bf Return:} $\{ A_{\cdot,R_1}, A_{\cdot,R_2}\ldots ,A_{\cdot,R_k}\}$ as approximation to
$\{ M_{\cdot,1},M_{\cdot,2},\ldots ,M_{\cdot,k}\}$.

\end{algorithmic}
\end{algorithm}

\section{Learning a Latent k-simplex (LLS) problem and Main results-Informal statements}
Before we state our results we informally describe the main results of the
paper.

Recall that the Latent k-simplex problem:
Given data points $A_{\cdot,j}, j=1,2,\ldots ,n\in {\bf R}^d$, obtained by perturbing latent points
$P_{\cdot, j}, j=1,2,\ldots ,n$ respectively, from a latent $k-$simplex $K$, learn the $k$ vertices.
Our main result is that there is a quasi-input sparsity time algorithm which
could solve this problem under certain assumptions.

\paragraph{Assumptions:} We will informally introduce the assumptions to
explain our results.
\begin{itemize}

\item {\bf Well-Separatedness}(Informal Statement)
Each of the $M_{\cdot,l}$ for  $\ell=1,2,\ldots ,k$,
has a substantial component orthogonal to the space spanned by the other $M_{\cdot,\ell'}$. This makes the vectors well separated.

\item\prox(Informal Statement)
$\forall \ell\in [k]$, there are at least
$\delta n$ $j$ 's with $P_{\cdot,j}$ close to (at distance at most $4\sigma/\sqrt\delta$) from $M_{\cdot,\ell}$.
$\delta$ is in $(0,1)$ and can depend on $n,d$, in particular going to zero as $n,d\rightarrow\infty$.
Note that in the case of $k-$means Clustering, all data have $P_{\cdot,j}=$ a vertex of $K$; there, $\delta$
is the minimum fraction of data points in any cluster.

\item{\bf Spectrally bounded perturbations}(Informal Statement) We assume
$$\frac{\sigma}{\sqrt\delta}\leq \frac{\Min_\ell|M_{\cdot,\ell}|}{\text{poly}(k)}.$$
It is clear that if the perturbations are unbounded then it is impossible to recover the true polytope. We have already
discussed above why the upper bound on $\sigma$ is reasonable.

\end{itemize}

Now we can state the main problem and result.

\paragraph{Learning a Latent Simplex (LLS) problem}(Informal Statement)
\emph{
Given $n$ data points $A_{\cdot,j}, j=1,2,\ldots ,n\in {\bf R}^d$ such that there is an unknown $k-$simplex $K$ and
unknown points $P_{\cdot,j} \in K, j=1,2,\ldots ,n$,
find approximations to vertices of $K$ within error poly$(k)\sigma/\sqrt\delta$. [I.e., find
$\widetilde M_{\cdot, \ell}, \ell =1,2,\ldots ,k$ such that there is some permutation of indices with
$|M_{\cdot,\ell}-\widetilde M_{\cdot,\ell}|\leq \text{poly}(k)\sigma/\sqrt\delta \; \for\; \ell=1,2,\ldots ,k.$
]}

The main result can be informally stated as follows:
\begin{theorem}\label{main-theorem-informal} (Informal Statement)
If observations, generated through {\bf Spectrally  bounded Perturbation}
of latent points generated from a polytope with {\bf well-separated} vertices, satisfy the
\prox assumption, then the main problem can be solved in
time $O^*(k\times\mbox{nnz}(\bA)+k^2d)$.
\footnote{$O^*$ hides logarithmic factors in $n,k,d$ as well as factors in $\delta,\alpha$.}

\end{theorem}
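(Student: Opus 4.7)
The plan is to prove correctness by induction on $r=0,1,\ldots,k-1$, showing that after iteration $r$ the algorithm has produced $A_{\cdot,R_1},\ldots,A_{\cdot,R_r}$ each within $\text{poly}(k)\sigma/\sqrt\delta$ of a distinct vertex of $K$, and then to bound the running time by replacing the truncated SVD with subspace power iteration. The conceptual foundation is Theorem~\ref{K-K-prime} together with Lemma~\ref{oracle}: the data-determined polytope $K'$ is Hausdorff-close to $K$, and maximizing $|u\cdot x|$ over $K'$ is done in linear time by taking the $\delta n$ data points with the most extreme inner products. Since a linear functional is maximized on $K$ at a vertex, if $u$ is chosen so that one unfound vertex is ``isolated'' in direction $u$, the optimizer will land near that vertex.

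The first substantive step is to show that $V$, the top-$k$ left singular subspace of $\bA$, has small principal angle with $\Span(\bM)$. Under Well-Separatedness, $\sigma_k(\bM)$ is large; combined with Proximity (which forces $\bW$ to contain a $k\times k$ submatrix that, after scaling, is close to the identity), one obtains $\sigma_k(\bP)=\Omega(\Min_\ell|M_{\cdot,\ell}|\sqrt{\delta n}/\text{poly}(k))$, so $\sigma_k(\bP)\gg \|\bA-\bP\|=\sigma\sqrt n$ by the Spectrally-Bounded assumption and a Wedin/Davis--Kahan bound gives $V\approx\Span(\bM)$. In the inductive step, assuming $R_1,\ldots,R_r$ approximate $r$ distinct vertices, the subspace $U=V\cap\Null(A_{\cdot,R_1},\ldots,A_{\cdot,R_r})$ is close to the $(k-r)$-dimensional subspace of $\Span(\bM)$ orthogonal to the $r$ already-found vertices, and Well-Separatedness guarantees that each remaining $M_{\cdot,\ell}$ retains projection of length $\Omega(|M_{\cdot,\ell}|/\text{poly}(k))$ onto $U$. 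A random Gaussian $u\in U$, with constant probability (boosted to high probability by $O(\log k)$ repeats, absorbed into $O^*$), satisfies $|u\cdot M_{\cdot,\ell^*}|\ge (1+\Omega(1/\sqrt k))|u\cdot M_{\cdot,\ell}|$ for every other unfound $\ell$, by Gaussian anti-concentration. For such $u$, $\arg\max_{x\in K'}|u\cdot x|$ lies near the vertex $M_{\cdot,\ell^*}$ of $K$ by Theorem~\ref{K-K-prime}; by Proximity the $\delta n$ maximizing data points are essentially those closest to $M_{\cdot,\ell^*}$, so $A_{\cdot,R_{r+1}}$ is within $O(\text{poly}(k)\sigma/\sqrt\delta)$ of $M_{\cdot,\ell^*}$.

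The hardest part will be controlling the compounding of errors across the $k$ iterations. Each approximate vertex $A_{\cdot,R_r}$ perturbs the null-space constraint, nudging $U$ away from the ideal orthogonal complement inside $\Span(\bM)$, and without care the errors could grow multiplicatively in $k$. Tracking principal angles at each step and showing that the Spectrally-Bounded assumption $\sigma/\sqrt\delta\le\Min_\ell|M_{\cdot,\ell}|/\text{poly}(k)$ with a sufficiently large polynomial factor absorbs this growth is the heart of the analysis, and is where Wedin-type bounds and careful accounting of perturbed projectors enter.

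For running time, the truncated SVD is replaced by subspace power iteration starting from $k$ random Gaussian vectors, which converges to a basis of $V$ at the required precision in $O^*(1)$ iterations, each costing $O(k\,\text{nnz}(\bA))$, giving $O^*(k\,\text{nnz}(\bA))$ overall. Each of the $k$ outer iterations then costs $O(\text{nnz}(\bA))$ for the matrix-vector product $u^\top\bA$, $O(n)$ for selecting the top $\delta n$ entries, and $O(kd)$ for projecting the running basis of $V$ onto the null space of $A_{\cdot,R_1},\ldots,A_{\cdot,R_r}$, yielding the claimed $O^*(k\,\text{nnz}(\bA)+k^2d)$ bound.
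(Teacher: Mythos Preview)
Your outline matches the paper's architecture closely: induction on $r$, Wedin/Davis--Kahan to show $V\approx\Span(\bM)$ via a lower bound on $s_k(\bP)$, a $\sin\Theta$ argument that $U=V\cap\Null(A_{\cdot,R_1},\ldots,A_{\cdot,R_r})$ is close to $\Span(\bM)\cap\Null(M_{\cdot,\ell_1},\ldots,M_{\cdot,\ell_r})$, and a random-direction step to isolate a new vertex. The running-time accounting is also essentially what the paper does.

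There is, however, a real gap in your step from ``$\arg\max_{x\in K'}|u\cdot x|$'' to ``$A_{\cdot,R_{r+1}}$ is close to a vertex.'' Theorem~\ref{K-K-prime} gives only Hausdorff closeness, so the optimizer over $K'$ is near \emph{some} point of $K$, not necessarily a vertex; closeness in objective value does not by itself give closeness in position. Your sentence ``by Proximity the $\delta n$ maximizing data points are essentially those closest to $M_{\cdot,\ell^*}$'' is not correct and is not how the paper proceeds: the maximizing set $R_{r+1}$ may contain many points whose latent $P_{\cdot,j}$ are far from $M_{\cdot,\ell^*}$. What the paper actually proves (Lemma~\ref{S-good}) is a convex-combination argument: write $P_{\cdot,R_{r+1}}=\bM w$, use the \emph{additive} gap $u\cdot M_{\cdot,\ell}-u\cdot M_{\cdot,\ell'}\ge c\alpha\Max_{\ell''}|M_{\cdot,\ell''}|/k^4$ together with $u\cdot A_{\cdot,R_{r+1}}\ge u\cdot M_{\cdot,\ell}-5\sigma/\sqrt\delta$ (the latter via $S_\ell$ from Proximity) to force $1-w_\ell$ small, and then bound $|P_{\cdot,R_{r+1}}-M_{\cdot,\ell}|\le 2(1-w_\ell)\Max_{\ell''}|M_{\cdot,\ell''}|$. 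This is the missing idea in your sketch. Relatedly, your multiplicative separation $|u\cdot M_{\cdot,\ell^*}|\ge(1+\Omega(1/\sqrt k))|u\cdot M_{\cdot,\ell}|$ is awkward for this argument (signs differ across $\ell$, and it does not directly control $u\cdot(M_{\cdot,\ell}-M_{\cdot,\ell'})$); the paper instead proves the additive lower bound on $|u\cdot(M_{\cdot,\ell}-M_{\cdot,\ell'})|$ in Lemma~\ref{u-good}, which feeds cleanly into the convex-combination step.

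One smaller point: you flag error compounding across the $k$ rounds as the hardest part, but in the paper there is no compounding at all. The inductive hypothesis (\ref{A-M}) uses the \emph{same} bound $150k^4\sigma/(\alpha\sqrt\delta)$ at every step, and Lemma~\ref{S-good} reproduces exactly that bound for $R_{r+1}$. The place where real work happens is the perturbed-null-space lemma (Lemma~\ref{VcapNull-Mnull}) and the conversion of the functional gap into a distance bound just described.
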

We will develop an algorithm which approximately recovers the vertices of the Latent k-simplex and achieves the run-time complexity mentioned in the theorem.

\paragraph{Literature related to learning simplices}
In Theoretical Computer Science and Machine Learning,
there is substantial literature on Learning Convex Sets \cite{V10,KDS08},
 intersection of half spaces \cite{KS07,V10,V10a},
Parallelopipeds\cite{FJK} and simplices \cite{AGR13} However,
this literature does not address our problem since it assumes we are given  data
points \emph{which are all in the convex set}, whereas, in our settings, as we saw, they
are often (far) outside.

There are a number of algorithms in Unsupervised Learning as mentioned above. But algorithms with proven time bounds
have two issues which prevent their use in our problem: (a)
all of them depend
on context-specific technical assumptions and (b) They have worse time bounds.
The quasi-input-sparsity complexity is a very attractive feature of our
algorithm and the generality of the problem makes it applicable to wide range of latent variable models such as MMSB, Topic Models, and Adversarial Clustering.
It is to be noted that our method also gives an immediate quasi-input-sparsity algorithm for $k-$ means clustering for $k\in O^*(1)$.
 We are not aware of any such result in the literature (see Section ~\ref{sec:inp-sps}).

\section{Assumptions, Subset Smoothing, and Main results}
In this section we formally describe the key assumptions, and our main
results. We derive an algorithm which uses subset smoothing and show that
it runs in quasi-input sparsity time.

\subsection{Assumptions}
As informally introduced before, the three main assumptions
are necessary for the development of the algorithm.
They crucially depend on the following parameters.
\begin{itemize}
\item The Well-separatedness of the model depends on $\alpha$, which  is a real number in $(0,1)$. We assume under Well-separatredness that each $M_{\cdot,\ell}$ has
a substantial component, namely, $\alpha \Max_{\ell'}|M_{\cdot,\ell'}|$,
orthogonal to the span of the other $M_{\cdot,\ell'}$.
Note that of course, the component of $M_{\cdot,\ell}$ orthogonal to span of other $M_{\cdot,\ell'}$ cannot be greater than
$|M_{\cdot,\ell}$, so implicit in this assumption, we require all $|M_{\cdot,\ell'}$ to be within $\alpha$ factor of each other.
$\alpha$ is an arbitrary model-determined parameter, so it can depend on $k$, but not on $n,d$.
Higher the value of $\alpha$ the more well-separated the model is.
\item The parameter $\delta \in (0,\frac{1}{k})$, quantifies the fraction of
 data close to each of the vertices.  \prox assumption requires that for each $\ell$, $\delta$ fraction of the $n$
latent points lie close to each vertex of $K$.


\end{itemize}

\begin{assum}

{\bf Well-Separatedness} We assume that there is an $\alpha\in (0,1)$ such that $\bM$ matrix obeys the following
\begin{equation}\label{condition-number-100}
\forall \ell\in [k], \left| \proj\left( M_{\cdot,\ell}\; ,\; \Null\left( \bM\setminus M_{\cdot,\ell}\right)\right)\right|\geq
\alpha \Max_{\ell'} |M_{\cdot,\ell'}|.
\end{equation}
\end{assum}
Well-Separatedness is an assumption purely on the model $\bM$ and is not determined by the data.
\begin{assum}
\prox: The model satisfies $\prox$ assumption if
\begin{align}\label{extreme-docs}
& \For \; \ell\in [k], \exists S_\ell\subseteq [n], |S_\ell|=\delta n,\text{ with }
|M_{\cdot,\ell}-P_{\cdot,j}|\le \frac{4 \sigma}{\sqrt\delta}\forall j\in S_\ell.
\end{align}
\end{assum}
\begin{remark}
Note that $\delta$ is always at most $1/k$ and  is allowed to be smaller,
it is allowed to go to zero as $n\rightarrow\infty$
\end{remark}
\begin{assum}
{\bf Spectrally Bounded Perturbations}
The following relationship will be assumed,
\begin{equation}\label{spectral-bound}
\frac{\sigma}{\sqrt\delta}\leq \frac{\alpha^3\Min_\ell|M_{\cdot,\ell}|}{4500k^9}.
\end{equation}
\end{assum}
This assumption depends on the observed data and somewhat weakly on the model.
The reader may note that
in pure mixture models, like Gaussian Mixture Models, a standard assumption is:
Component means are separated by $\Omega^*(1)$ standard deviations. (\ref{spectral-bound})
is somewhat similar, but not the same: while we have $\Min_\ell |M_{\cdot,\ell}|$ on the right
hand side, the usual separation assumptions would have $\Min_{\ell\not= \ell'}|M_{\cdot,\ell}-M_{\cdot,\ell'}|$.
While these are similar, they are incomparable.


\begin{remark}\label{separatedness-remark}
It is important that we only have poly$(k)$ factors and no factor dependent on $n,d$
in the denominator of the right hand side. Since
$n,d$ are larger than $k$, a dependence on $n,d$ would have been too strong a requirement and
generally not met in applications. Of course our dependence on $k$ could use improvement.
The factor of $\sigma/\sqrt\delta$ seems to be necessary at the current state of
knowledge, otherwise one can
solve the planted clique problem in $o(\sqrt{n})$ regime in polynotmal time.
A formal statement is provided in Lemma ~\ref{lem:planted-clique}
\end{remark}

\subsection{A Quasi Input Sparsity time Algorithm for finding extreme points of K}\label{alg-section}

\noindent
In this subsection we present an algorithn based on the Assumptions and Subset smoothing described earlier.
The algorithm is the same as described in Section ~\ref{section:smooth}, but with the first step of computing the
exact truncated SVD replaced by the classical subspace power iteration which meets the time bounds.
The algorithm proceeds in $k$ stages (recall $k$ is the number of vertices of $K$),
in each stage maximizing $|u\cdot x|$ over $x\in K'$ for a carefully chosen $u$.
The maximization can be solved by just finding all the $u\cdot A_{\cdot ,j}$ and taking the largest
(or smallest) $\delta n$ of them. Unlike the algorithm, the proof of correctness is not so simple.
Among the tools it uses are the $sin-\Theta$ theorem in Numerical Analysis, an
extension which we prove, and the properties of random projections.
A brief introduction to this and some basic properties may be found again in Section \ref{subspace}.

\begin{algorithm}
\caption{{\bf LKS}: An algorithm for finding latent k-simplex from data matrix ${\bf A}$}
\textbf{Input}: $\bA$ \Comment{$\bA$~is a $(d \times n)$ matrix}\\
\textbf{Input}:  $k$  \Comment{ k is the number of vertices}\\
\textbf{Input}: $\delta$  \Comment{ $\delta$  between $0$ and $\frac{1}{k}$}\\
\textbf{Input}: $t$  \Comment{ $t= c \log d$ ~where~$c$ is a constant }\\

\begin{algorithmic}

\State ${\bf Q_t} =$ {\tt Subspace-Power}($\bA, t$)
\State Let $V=\Span({\bf Q_t})$

 \ForAll {$r=0,1,2,\ldots ,k-1$}

    \State Pick $u$ at random from the $k-r$ dimensional sub-space
$U=V\cap \Null(A_{\cdot, R_1},A_{\cdot, R_2},\ldots ,A_{\cdot,R_r})$.

    \State $R_{r+1}\leftarrow \arg\max_{S:|S|=\delta n}|u\cdot A_{\cdot,S}|$

\EndFor

\State {\bf Return:} $A_{\cdot,R_1}, A_{\cdot,R_2}\ldots ,A_{\cdot,R_k}$.

\end{algorithmic}
\end{algorithm}

\begin{theorem}\label{main-theorem}
Suppose we are given $k\geq 2$ and data $\bA$, satisfying the assumptions of 
{\bf Well-Separatedness}
(\ref{condition-number-100}),  \prox  (\ref{extreme-docs}),
and {\bf Spectrally Bounded Perturbations} (\ref{spectral-bound}).
Then, in time $O^*\left(k\left(\text{nnz}(\bA)+kd\right)\right)$ time, the Algorithm {\bf LKS} finds subsets $R_1,R_2,\ldots,R_k,
$ of cardinality $\delta n$ each such that after a permutation of columns of $\bM$, we have with
probability at least $1-(c/\sqrt k)$:
$$|A_{\cdot, R_\ell}-M_{\cdot,\ell}|\leq \frac{150k^{4}}{\alpha} \frac{\sigma}{\sqrt\delta}\; \for\; \ell=1,2,\ldots ,k.$$
\end{theorem}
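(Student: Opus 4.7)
The plan is to prove the claim by induction on the outer loop index $r$: after stage $r$, the points $A_{\cdot,R_1},\ldots,A_{\cdot,R_r}$ approximate $r$ distinct vertices $M_{\cdot,\pi(1)},\ldots,M_{\cdot,\pi(r)}$ of $\bM$ within $O(k^{4}/\alpha)\cdot \sigma/\sqrt\delta$, and the random $u$ drawn in stage $r{+}1$ isolates a fresh vertex. The scaffolding combines three ingredients. (i) The subspace $V$ produced by subspace power iteration is close to $\Span(\bM)$. Since $\bP=\bM\bW$ has rank $k$, and by combining \textbf{Well-Separatedness} with \prox\ (the $\delta n$ columns of $\bW$ near each basis vector produce a lower bound on $s_k(\bW)$ of order $\sqrt{\delta n}$), one gets $s_k(\bP)\gtrsim \alpha\sqrt{\delta n}\cdot\Min_\ell|M_{\cdot,\ell}|$; the Spectrally Bounded Perturbations assumption then says $\|\bA-\bP\|=\sigma\sqrt n$ is much smaller than $s_k(\bP)$, so the $\sin\Theta$ theorem puts the top-$k$ left singular space of $\bA$ close to $\Span(\bM)$, and $t=\Theta(\log d)$ rounds of power iteration propagate this to $V$.

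(ii) At stage $r{+}1$, compare the computed space $U=V\cap\Null(A_{\cdot,R_1},\ldots,A_{\cdot,R_r})$ to the ideal space $\tilde U=\Span(\bM)\cap\Null(M_{\cdot,\pi(1)},\ldots,M_{\cdot,\pi(r)})$. By \textbf{Well-Separatedness}, $\tilde U$ has dimension $k-r$ and contains a component of magnitude at least $\alpha\Max_{\ell'}|M_{\cdot,\ell'}|$ of every remaining $M_{\cdot,\ell}$. The closeness of $V$ to $\Span(\bM)$ from step (i), together with the inductive closeness $A_{\cdot,R_s}\approx M_{\cdot,\pi(s)}$, should force $U$ to be close to $\tilde U$ in principal angles; quantifying this is exactly the \emph{extension of $\sin\Theta$} foreshadowed in the paper, and is the main technical obstacle. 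Once in hand, a uniformly random unit $u\in U$ decomposes as $\tilde u+\varepsilon$ with $\tilde u$ an almost-uniform unit in $\tilde U$. Standard anti-concentration for Gaussian/uniform marginals on a $(k-r)$-dimensional subspace says that with probability at least $1-O(1/k^{2})$ per remaining vertex, $|\tilde u\cdot M_{\cdot,\ell}|\gtrsim \alpha\Max|M_{\cdot,\ell'}|/k^{3/2}$; a union bound across at most $k$ remaining vertices and $k$ stages gives the $1-c/\sqrt k$ success probability in the theorem. Simultaneously $|u\cdot M_{\cdot,\pi(s)}|$ is small for $s\le r$ because $u\perp A_{\cdot,R_s}$ and $A_{\cdot,R_s}\approx M_{\cdot,\pi(s)}$.

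(iii) Finally I convert these inner-product gaps into a vertex-recovery bound. By Lemma~\ref{oracle} the point $A_{\cdot,R_{r+1}}=\arg\max_{|S|=\delta n}|u\cdot A_{\cdot,S}|$ attains $\max_{x\in K'}|u\cdot x|$, and by Theorem~\ref{K-K-prime} this maximum differs from $\max_{x\in K}|u\cdot x|$ by at most $5\sigma/\sqrt\delta\cdot\|u\|$. Since the max over the simplex $K$ is attained at a vertex, and since $|u\cdot M_{\cdot,\ell}|$ is $\Omega(\alpha/k^{3/2})\cdot\Max|M_{\cdot,\ell'}|$ for remaining $\ell$ and is small for already-found ones, the separation dominates the perturbation by Spectrally Bounded Perturbations (this is where the $4500k^{9}$ in \eqref{spectral-bound} is consumed); a standard ``approximate argmax lies near the true argmax'' argument on $K'\approx K$ then yields $|A_{\cdot,R_{r+1}}-M_{\cdot,\pi(r+1)}|\le O(k^{4}/\alpha)\cdot\sigma/\sqrt\delta$ for a new vertex $M_{\cdot,\pi(r+1)}$, closing the induction.

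The hard part is step (ii): a naive application of $\sin\Theta$ controls only $V$ versus $\Span(\bM)$, but the subsequent intersection with the null space of \emph{approximate} vectors $A_{\cdot,R_s}$ can amplify errors when the principal angles are small; one needs a quantitative lemma that, given $V\approx V'$ and vectors $x_s\approx y_s$, bounds $\angle\!\bigl(V\cap\Null(x_1,\ldots,x_r),\,V'\cap\Null(y_1,\ldots,y_r)\bigr)$ in terms of the individual approximations and the Well-Separatedness margin $\alpha$, without blowing up faster than $\mathrm{poly}(k/\alpha)$ per stage. Assuming this, the running time is routine: the subspace power iteration costs $O^*(k\,\text{nnz}(\bA)+k^{2}d)$; each of the $k$ outer stages needs one matrix-vector multiply $u^{T}\bA$ in $O(\text{nnz}(\bA))$, a top-$\delta n$ selection in $O(n)$, and an orthogonalization against $r$ previously found $d$-vectors in $O(kd)$, giving the stated $O^*(k(\text{nnz}(\bA)+kd))$ bound.
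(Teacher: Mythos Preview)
Your proposal follows essentially the same route as the paper: Theorem~\ref{main-theorem} is obtained there by applying Theorem~\ref{main-technical-theorem} $k$ times, and that theorem is proved precisely via your ingredients (i)--(iii), realized as Lemma~\ref{sin-theta} and Theorem~\ref{convergence-power} for (i), Lemma~\ref{VcapNull-Mnull} for (ii), and Lemmas~\ref{u-good} and~\ref{S-good} for (iii). You also correctly single out the $\sin\Theta$ extension (closeness of $U$ to $\Span(\bM)\cap\Null(\widetilde\bM)$) as the main technical obstacle, which is exactly Lemma~\ref{VcapNull-Mnull}.

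There is one substantive gap. In step (ii) you only establish that each remaining vertex individually satisfies $|u\cdot M_{\cdot,\ell}|\gtrsim (\alpha/\mathrm{poly}(k))\Max_{\ell'}|M_{\cdot,\ell'}|$. This does not make the argmax over $K$ unambiguous: two remaining vertices could have nearly equal (large) values of $u\cdot M_{\cdot,\ell}$, and then the optimum of $|u\cdot x|$ over $K'$ could land near a convex combination of them rather than near a single vertex, so your ``approximate argmax lies near the true argmax'' step in (iii) would fail. The paper's Lemma~\ref{u-good} therefore also proves the \emph{pairwise} separation
\[
|u\cdot(M_{\cdot,\ell}-M_{\cdot,\ell'})|\;\geq\;\frac{c\,\alpha}{k^{4}}\Max_{\ell''}|M_{\cdot,\ell''}|\qquad\text{for all remaining }\ell\neq\ell',
\]
and it is this gap (not the individual magnitudes) that drives the convex-combination bound on $1-w_\ell$ in Lemma~\ref{S-good}. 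The fix is the same anti-concentration applied to the $O(k^2)$ vectors $\Proj(q_\ell-q_{\ell'},U)$, but then your union bound must run over $O(k^2)$ events per stage rather than $k$; this is why the paper sets the per-event threshold at order $1/k^{4}$ (failure $\lesssim 1/k^{3.5}$ each), yielding success $1-c/k^{3/2}$ per stage and hence $1-c/\sqrt k$ after $k$ stages. Your $k^{3/2}$ threshold with $O(1/k^2)$ failure per vertex does not survive this bookkeeping, and the final exponent on $k$ in the error bound (and the place where the $k^{9}$ in \eqref{spectral-bound} is actually consumed) should be adjusted accordingly.
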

%
%
%
%

We next state the main result which directly implies theorem (\ref{main-theorem}). The hypothesis of the result below
is that we have already found $r\leq k-1$ columns of $\bM$ approximately, in the sense that we have found
$r$ subsets $R_1,R_2,\ldots ,R_r\subseteq [n], |R_t|=\delta n$ so that there are $r$ distinct columns
$\{ \ell_1,\ell_2,\ldots ,\ell_r\}$ of $\bM$ with $M_{\cdot ,\ell_t}\approx A_{\cdot, R_t}$ for $t=1,2,\ldots ,r$.
The theorem gives a method for finding a $R_{r+1}, |R_{r+1}|=\delta n$ with $A_{\cdot ,R_{r+1}}\approx M_{\ell}$ for some
$\ell\notin \{ \ell_1,\ell_2,\ldots ,\ell_r\}$.

Theorem (\ref{main-theorem}) follows by applying Theorem (\ref{main-technical-theorem}) $k$ times.

\begin{theorem}\label{main-technical-theorem}
Suppose we are given data $\bA$ and $k\geq 2$ satisfying the assumptions of {\bf Well-Separatedness} 
(\ref{condition-number-100}), \prox  (\ref{extreme-docs})
and {\bf Spectrally Bounded Perturbations} (\ref{spectral-bound}).
Let $r\leq k-1$. Suppose $R_1,R_2,\ldots ,R_r\subseteq [n]$, each of cardinality $\delta n$ have been found and
are such
that there exist $r$ distinct elements $\ell_1,\ell_2,\ldots ,\ell_r\in [k]$, with:\footnote{We do not know $\bM$ or
$\ell_1,\ell_2,\ldots ,\ell_r$, only their existence is known.}
\begin{equation}\label{A-M}
|A_{\cdot, R_t}- M_{\cdot, \ell_t}|\leq  \frac{150k^4}{\alpha}\frac{\sigma}{\sqrt\delta}
\mbox{ for } t=1,2,\ldots ,r.
\end{equation}
Let $V$ be any $k-$ dimensional subspace of ${\bf R}^d$ with
$\sin\Theta(V,\Span(v_1,v_2,\ldots ,v_k))\leq \sigma/\sqrt\delta $ (where, $v_1,v_2,\ldots ,v_k$ are the
top $k$ left singular values of $\bA$).
Suppose $u$ is a random unit length vector in the $k-r$ dimensional sub-space $U$ given by:
$$U=V\cap \mbox{Null} (A_{\cdot, R_1},A_{\cdot, R_2},\ldots ,A_{\cdot, R_r})$$ and suppose
$$S=\arg\max_{T\subseteq [n], |T|=\delta n} |u\cdot A_{\cdot, T}|.$$
Then, with probability at least $1-(c/k^{3/2})$,
$$\exists \ell \notin \{\ell_1,\ell_2,\ldots ,\ell_r\} \mbox{  such that  } |M_{\cdot, \ell}-A_{\cdot, S}|\leq  \frac{150k^4}{\alpha}\frac{\sigma}{\sqrt\delta}
 .$$
Further this can be carried out in time $O^*(\text{nnz}(\bA)+dk)$ time.
\end{theorem}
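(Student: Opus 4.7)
The plan is to combine Wedin's $\sin\Theta$ theorem (placing $V$ close to $\Span(\bM)$), Well-Separatedness (forcing the projections of ``unfound'' vertices onto $U$ to remain long), an anti-concentration argument (making a random unit $u\in U$ single out one such vertex with a $\mathrm{poly}(k)$ gap), and subset smoothing (Theorem~\ref{K-K-prime}) to transport the near-optimizer from $K$ to $K'=\mathrm{CH}(A_{\cdot,R}:|R|=\delta n)$.

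First I would place $V$ geometrically. Proximity provides $\delta n$ columns of $\bP$ near each $M_{\cdot,\ell}$, which combined with Well-Separatedness on $\bM$ yields $s_k(\bP)\gtrsim (\alpha/\sqrt k)\sqrt{\delta n}\,\Min_\ell|M_{\cdot,\ell}|$; together with $\|\bA-\bP\|\le\sigma\sqrt n$ and the slack in (\ref{spectral-bound}), Wedin's theorem makes $\sin\Theta(\Span(v_1,\ldots,v_k),\Span(\bM))$ much smaller than the $\sigma/\sqrt\delta$-scale, and chaining with the hypothesis on $V$ gives $|M_{\cdot,\ell}-\proj_V M_{\cdot,\ell}|\ll \Min_{\ell'}|M_{\cdot,\ell'}|$ for every $\ell$. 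Using (\ref{A-M}), the space $U=V\cap\Null(A_{\cdot,R_1},\ldots,A_{\cdot,R_r})$ is in turn close to $\tilde U=V\cap\Null(M_{\cdot,\ell_1},\ldots,M_{\cdot,\ell_r})$, so $|\proj_U M_{\cdot,\ell_t}|$ is tiny for each $t\le r$. For $\ell\notin\{\ell_1,\ldots,\ell_r\}$, Well-Separatedness produces a direction of length $\ge \alpha\Max_{\ell'}|M_{\cdot,\ell'}|$ inside $M_{\cdot,\ell}$ orthogonal to $\Span(M_{\cdot,\ell_1},\ldots,M_{\cdot,\ell_r})$; the previous step places this direction essentially in $V$, hence in $\tilde U$, hence in $U$, giving $|\proj_U M_{\cdot,\ell}|\ge\tfrac\alpha 2\Max_{\ell'}|M_{\cdot,\ell'}|$.

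Having arranged the projection geometry, I would pick $u$ uniformly on the unit sphere of $U$ (of dimension $k-r\le k$). Each $u\cdot\proj_U M_{\cdot,\ell}$ is a rescaled coordinate of a random unit vector, so Gaussian concentration across the $\le k$ choices of $\ell$ together with $1$-dimensional anti-concentration of the pairwise differences gives, with probability $\ge 1-c/k^{3/2}$: a unique $\ell^\star\notin\{\ell_1,\ldots,\ell_r\}$ with $|u\cdot M_{\cdot,\ell^\star}|\gtrsim \alpha\Max_{\ell'}|M_{\cdot,\ell'}|/\sqrt k$, beating every other $\ell$ by a gap $\gtrsim \alpha\Max_{\ell'}|M_{\cdot,\ell'}|/\mathrm{poly}(k)$. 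Since $K$ is a simplex, $\arg\max_{x\in K}|u\cdot x|=M_{\cdot,\ell^\star}$, so any $x\in K$ whose $|u\cdot x|$ is within $\eta$ of the optimum has barycentric weight $\ge 1-O(\mathrm{poly}(k)\,\eta/(\alpha\Max_{\ell'}|M_{\cdot,\ell'}|))$ on vertex $\ell^\star$ and hence lies within $O(\mathrm{poly}(k)\,\eta/\alpha)$ of $M_{\cdot,\ell^\star}$. By Theorem~\ref{K-K-prime} the $K$- and $K'$-optima of $|u\cdot x|$ agree to within $5\sigma/\sqrt\delta$, so taking $\eta=O(\sigma/\sqrt\delta)$ yields the asserted bound $|A_{\cdot,S}-M_{\cdot,\ell^\star}|\le\tfrac{150k^4}{\alpha}\tfrac{\sigma}{\sqrt\delta}$. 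The running time is immediate: computing $u^T\bA$ costs $O(\mathrm{nnz}(\bA))$, selecting the top $\delta n$ values gives $S$ in $O(n)$, and maintaining an orthonormal basis of $U$ (adding one new constraint each iteration) costs $O(kd)$.

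\textbf{Main obstacle.} I expect the delicate step to be the random-projection gap: pushing the separation between the leading and second-leading $|u\cdot M_{\cdot,\ell}|$ down to $\Omega(\alpha\Max_{\ell'}|M_{\cdot,\ell'}|/\mathrm{poly}(k))$ while keeping the failure probability at $c/k^{3/2}$ (so that a union bound over $k$ invocations of this lemma in Theorem~\ref{main-theorem} still gives overall probability $\ge 1-c/\sqrt k$). Equally, the induction requires that the $\tfrac{150k^4}{\alpha}\tfrac{\sigma}{\sqrt\delta}$ slack fed in through (\ref{A-M}) is reproduced exactly, not amplified, by the combined errors from $\sin\Theta$, the geometry of $U$, the gap argument, and subset smoothing; this constrains the constants throughout.
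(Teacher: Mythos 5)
Your plan matches the paper's proof essentially step for step: the $\sin\Theta$ control via Wedin's theorem corresponds to the paper's Lemma~\ref{sin-theta}; your observation that $U$ stays close to an idealized intersection $\tilde U$ built from $\bM$ rather than $\bA$ is exactly Lemma~\ref{VcapNull-Mnull} (the paper's ``extension of the $\sin\Theta$ theorem''); your anti-concentration/random-projection gap is Lemma~\ref{u-good}; and your barycentric-weight argument transporting the near-optimizer from $K$ to $K'$ is Lemma~\ref{S-good} (the paper uses Lemma~\ref{averages} directly rather than invoking Theorem~\ref{K-K-prime}, but that is cosmetic). You also correctly identify the two places the constants must be tracked carefully---the $\mathrm{poly}(k)$ gap with $c/k^{3/2}$ failure probability and the exact reproduction of the $\tfrac{150k^4}{\alpha}\tfrac{\sigma}{\sqrt\delta}$ invariant.
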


\subsubsection{Quasi-Input-Sparsity Based Complexity}\label{sec:inp-sps}

While there has been much progress on
devising algorithms for sparse matrices and indeed for Low-Rank Approximation (LRA)
nearly optimal dependence of $O^*(\text{nnz})$ on input sparsity is known
\cite{CW13}, there has not been such progress on standard $k-$means Clustering (for which several
algorithms first do LRA). This is in spite
of the fact that there are many instances where the data for Clustering problems is very sparse.
For example, Graph Clustering is a well-studied area and many graphs tend to be sparse.
Our complexity dependence on nnz is $k \times {\tt  nnz}$, and hence we refer it as \emph{quasi-input-sparsity} time complexity
when $k\in O^*(1)$. We do not have a proof of a corresponding lower bound. But recently,in \cite{MW17}, it was argued  that for kernal LRA, $k$ nnz is possibly optimal unless matrix multiplication
can be improved.
We leave as
an open problem the optimality of our {\tt nnz} dependence. We are unaware of  an algorithm for any of the special cases, considered in this paper, has
a better complexity than ours.

\section{Latent Variable Models as special cases}\label{special-cases}

In this section we discuss three latent variable models LDA, MMSB and Adversarial Clustering
and prove that they are special cases of our general geometric problem.

\subsection{LDA as a special case of Latent k-simplex problem:}
In the LDA setup we will consider that the prior is $\mbox{Dir}(\frac{1}{k},k)$ on the unit simplex.
The following arguments apply to Dir$(\beta,k)$ for any $\beta\leq 1/k$, but we do the case when $\beta=1/k$
here. We also assume, what we
call ``lumpyness'' of $M_{\cdot,\ell}$ which
intuitively says that for any $\ell$, $M_{i,\ell}, i=1,2,\ldots ,d$
should not all be small, or in other words, the vector $M_{\cdot,\ell}$
should be ``lumpy''.
We assume that
$$|M_{\cdot,\ell}|\in\Omega(1).$$
This assumption is
consistent with existing literature.
It is common practice in Topic Modelling to
assume that in every topic there are a \emph{few} words which have very high probability.
A topic with weights distributed among all (or many) words is not informative about the theme.
Furthermore, if word frequencies satisfy power law, it is indeed easy to see this assumption holds.
It is to be noted,  a weaker assumption than power law, namely, that the $O(1)$ highest frequency words together have $\Omega(1)$ frequency, is also enough to imply
our assumption that $|M_{\cdot ,\ell}|\in\Omega(1)$, as is easy to see by summing. If even this last assumption is violated,
it means say that a large number of  high frequency words still do not describe the topic well which wouldn't make for
a reasonably interpretable topic model.
\begin{lemma}\label{LDA-TM}
Suppose $\bA,\bP$ are as above. Assume that $|M_{\cdot,\ell}|\in\Omega(1)$ for all $\ell$. Suppose
$W_{\cdot,j}, j=1,2,\ldots ,n$ are i.i.d. distributed according to Dir$(k,1/k)$ and
assume $m,n$ are at least a sufficiently large polynomial function of $\frac{k}{\alpha}$.
 Let $\delta =\frac{c\sigma}{\sqrt k}$.
Then, (\ref{extreme-docs}) and (\ref{spectral-bound}) are satisfied with high probability.
\end{lemma}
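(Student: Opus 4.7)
The plan is to verify the two deterministic conditions separately, since Well-Separatedness is purely a model assumption and not part of what Lemma \ref{LDA-TM} is claiming. First I would establish a Random Matrix Theory bound on $\sigma=\|\bA-\bP\|/\sqrt{n}$, which then lets me check Spectrally Bounded Perturbations directly using the lumpiness assumption $|M_{\cdot,\ell}|\in\Omega(1)$. Then I would use the Dirichlet anti-concentration estimate (\ref{eq:prox}), together with the definition $P_{\cdot,j}=\bM W_{\cdot,j}$, to produce the $\delta n$ indices required by Proximate Latent Points.

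For the spectral bound, the columns of $\bA-\bP$ are mutually independent given $\bP$, each column being a centered, scaled multinomial. Entries lie in $[-1,1]$ with $\mathrm{Var}(A_{ij}\mid P_{ij})\le P_{ij}/m$, and $P_{ij}\le\gamma=\max_{i,\ell}M_{i,\ell}$. Standard matrix concentration (Matrix Bernstein, or the sparse-matrix concentration results of Vershynin cited in the paper) then gives $\|\bA-\bP\|\le C\sqrt{(n+d)\gamma/m}\cdot \mathrm{polylog}$ with high probability, so $\sigma\le C'\sqrt{\gamma/m}$ once $n/m$ and $d/m$ are absorbed by taking $m$ polynomially large in $k/\alpha$. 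Since $|M_{\cdot,\ell}|\in\Omega(1)$ and $\delta=c\sigma/\sqrt{k}$, one computes $\sigma/\sqrt{\delta}=\sigma^{1/2}k^{1/4}/\sqrt{c}$, which is forced below $\alpha^3|M_{\cdot,\ell}|/(4500k^9)$ by choosing $m$ a sufficiently large polynomial in $k/\alpha$; this gives (\ref{spectral-bound}).

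For Proximate Latent Points, fix $\ell\in[k]$ and set $\zeta=2\sigma/\sqrt{\delta}$ (recalling $\max_{\ell'}|M_{\cdot,\ell'}|\le 1$). A one-line triangle-inequality computation
\[
|P_{\cdot,j}-M_{\cdot,\ell}| \;\le\; (1-W_{\ell,j})|M_{\cdot,\ell}| + \sum_{\ell'\ne\ell}W_{\ell',j}|M_{\cdot,\ell'}| \;\le\; 2\zeta\max_{\ell'}|M_{\cdot,\ell'}|
\]
shows that whenever $W_{\ell,j}\ge 1-\zeta$ we already have $|P_{\cdot,j}-M_{\cdot,\ell}|\le 4\sigma/\sqrt{\delta}$. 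By Lemma \ref{dirichlet} (the Dirichlet anti-concentration bound (\ref{eq:prox})), each $j$ satisfies $W_{\ell,j}\ge 1-\zeta$ with probability at least $\zeta^2/(3k)=4\sigma/(3c^2\sqrt{k})\cdot(\sqrt{k}/\sigma)\delta/(c)$; substituting $\delta=c\sigma/\sqrt{k}$ and choosing $c$ small enough makes this at least $2\delta$. Since the $W_{\cdot,j}$ are i.i.d., a standard Chernoff bound for large $n$ shows that at least $\delta n$ of the indices $j$ fall into the desired set $S_\ell$, and a union bound over the $k$ values of $\ell$ preserves high probability. This gives (\ref{extreme-docs}).

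The main obstacle is getting the spectral-norm bound with the right constants, since the generic Vershynin bound carries extra $\sqrt{\log n}$ and $\sqrt{d/m}$ factors that must be absorbed into the ``$m,n$ are a sufficiently large polynomial in $k/\alpha$'' hypothesis without circularity with the definition $\delta=c\sigma/\sqrt{k}$. The cleanest route is to first derive an explicit high-probability bound $\sigma\le C\sqrt{\gamma/m}$ depending only on $m$ (and on $\mathrm{polylog}(nd)$), then verify that this value of $\sigma$ together with $\delta=c\sigma/\sqrt{k}$ satisfies both the Chernoff regime needed for Proximate Latent Points and the inequality (\ref{spectral-bound}); everything else is routine.
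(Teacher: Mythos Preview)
Your approach is correct and matches the paper's: bound $\sigma$ via Vershynin-style covariance concentration to get $\sigma\le c/\sqrt m + c/\sqrt n$ (the paper invokes Theorem~5.44 of \cite{vers} on the independent bounded columns, so the $(n+d)$ factor you worry about never appears), then rewrite $\sigma/\sqrt\delta = O(\sqrt\sigma\, k^{1/4})$ to verify (\ref{spectral-bound}) from $m,n$ large, and finally use the Dirichlet anti-concentration Lemma~\ref{dirichlet} plus Chernoff for (\ref{extreme-docs}) exactly as you outline. Your Dirichlet arithmetic is garbled as written, but the intended substitution yields $\zeta^2/(3k)=4\delta/(3c^2)\ge 2\delta$ for $c$ small, which is what is needed.
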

\begin{proof} See Section \ref{sec:top-proof}. \end{proof}
\begin{remark}
The only assumption for which we have made no assertion is Well-Separatedness (\ref{condition-number-100}).
This is because, there is no generally assumed prior for generating $\bM$. If one were to assume a Dirichlet
prior for $\bM$ with sufficiently low concentration parameter or assume a power law frequency distribution
of words, one can show (\ref{condition-number-100}) holds.
\end{remark}

\begin{theorem}\label{LDA-TM-2}
Suppose $\bA,\bP$ are as in Lemma ~\ref{LDA-TM}. 
Assume that $|M_{\cdot,\ell}|\in\Omega(1)$ for all $\ell$
and $n\geq m$.
Suppose
$W_{\cdot,j}, j=1,2,\ldots ,n$ are i.i.d. distributed according to Dir$(k,1/k)$ and
assume $m,n$ are at least a sufficiently large polynomial function of $k/(\alpha) $.
Let $\delta =\frac{c\sigma}{\sqrt k}$. Also assume the Well-Separatedness
assumption (\ref{condition-number-100}) is satisfied. Then, our algorithm with high probability finds
approximations $\widetilde M_{\cdot,1},\widetilde M_{\cdot,2},\ldots ,\widetilde M_{\cdot,k}$ to the
topic vectors so that (after a permutation of indices),
$$|\widetilde M_{\cdot,\ell}-M_{\cdot,\ell}|\leq \frac{ck^{4.5}}{ \alpha m^{1/4}}\; \for\; \ell=1,2,\ldots ,k.$$
\end{theorem}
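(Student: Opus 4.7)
The plan is to realize Theorem~\ref{LDA-TM-2} as a corollary of Theorem~\ref{main-theorem} applied to the LDA setup, with the three deterministic preconditions supplied by Lemma~\ref{LDA-TM} together with the hypothesized well-separatedness. First I would invoke Lemma~\ref{LDA-TM}: under $|M_{\cdot,\ell}|\in\Omega(1)$, the $\mbox{Dir}(1/k,k)$ prior on topic weights, and the hypothesis that $m,n$ are a sufficiently large polynomial in $k/\alpha$, the \prox assumption \eqref{extreme-docs} and the \textbf{Spectrally Bounded Perturbations} assumption \eqref{spectral-bound} both hold with high probability for the choice $\delta = c\sigma/\sqrt k$. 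The third assumption, \textbf{Well-Separatedness} \eqref{condition-number-100}, is explicitly posited in the statement, so all three preconditions of Theorem~\ref{main-theorem} are satisfied with high probability.

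Next I would apply Theorem~\ref{main-theorem} directly. It produces subsets $R_1,\ldots,R_k\subseteq [n]$ of cardinality $\delta n$ each so that, after a suitable permutation,
\[
|A_{\cdot,R_\ell} - M_{\cdot,\ell}| \leq \frac{150 k^{4}}{\alpha}\,\frac{\sigma}{\sqrt\delta}\qquad\text{for }\ell=1,2,\ldots,k.
\]
Setting $\widetilde{M}_{\cdot,\ell}:=A_{\cdot,R_\ell}$ yields the required approximations, and the remaining work is purely to convert $\sigma/\sqrt\delta$ into the stated expression $ck^{4.5}/(\alpha m^{1/4})$.

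For the conversion I would use \eqref{895}, namely $\sigma \leq O(\|\Var(A_{\cdot,j}\mid P_{\cdot,j})\|^{1/2})$, together with the LDA variance formula \eqref{eq:stat}, which gives $\Var(A_{ij}\mid P_{ij})=\tfrac{1}{m}P_{ij}(1-P_{ij})$. Tracking the Dirichlet-based bound on $\|\Var(A_{\cdot,j}\mid P_{\cdot,j})\|$ (carried out in the proof of Lemma~\ref{LDA-TM}) yields $\sigma = O(\sqrt{k/m})$. Substituting $\delta = c\sigma/\sqrt k$ gives the short identity $\sigma/\sqrt\delta = \sqrt\sigma\cdot k^{1/4}/\sqrt c$, so $\sigma/\sqrt\delta \leq O(\sqrt k/m^{1/4})$. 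Plugging this into the error bound from Theorem~\ref{main-theorem} produces $\frac{150 k^4}{\alpha}\cdot O(\sqrt k/m^{1/4}) = O(k^{4.5}/(\alpha m^{1/4}))$, as required.

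The main obstacle is not conceptual but consists of bookkeeping: pinning down the precise $k$-dependence of $\sigma$ that comes out of the Random Matrix Theory bound in Lemma~\ref{LDA-TM}, and checking that the hypotheses $n\geq m$ and ``$m,n$ sufficiently large polynomial in $k/\alpha$'' are strong enough to (i) validate \eqref{spectral-bound} for the chosen $\delta$, (ii) absorb the RMT constants, and (iii) combine the high-probability events of Lemma~\ref{LDA-TM} with the $1-c/\sqrt k$ success of the algorithm via a union bound. All substantive mathematical content—RMT concentration for $\bA-\bP$, Dirichlet anti-concentration near corners, and the algorithmic analysis of \textbf{LKS}—is already established earlier in the paper, so Theorem~\ref{LDA-TM-2} follows by assembling these ingredients.
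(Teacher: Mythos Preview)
Your proposal is correct and follows essentially the same route as the paper: verify \eqref{extreme-docs} and \eqref{spectral-bound} via Lemma~\ref{LDA-TM}, add the hypothesized \eqref{condition-number-100}, apply Theorem~\ref{main-theorem}, then substitute $\delta=c\sigma/\sqrt k$ and the LDA bound on $\sigma$ to convert $\sigma/\sqrt\delta$ into a power of $m$. One small remark: the covariance bound in the proof of Lemma~\ref{LDA-TM} (Lemma~\ref{topic-model-norm}) actually gives $\sigma\le c/\sqrt m$ with no $k$-dependence, slightly sharper than your $O(\sqrt{k/m})$; either way the resulting exponent is at most $k^{4.5}$, matching the stated bound.
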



\subsection{MMSB Models}

The assertions and proofs here are similar to LDA. The difference is in the proof of an upper bound
on $\sigma$ (Spectrally Bounded Perturbations), since, here, all the edges of the graph
(entries of $\bA$) are mutually
independent, but there is no effective absolute (probability 1) bound on perturbations.

$\bA,\bM,\bP,{\bf W^{(2)}}$ have the meanings discussed in Section ~\ref{section:MMBM},
see equation \eqref{eq:mmsb}. We introduce one more symbol here: we let $\nu$ denote the
maximum expected degree of any node in the bipartite graph, namely,
$$\nu=\Max(\Max_i\sum_jP_{ij},\Max_j\sum_iP_{ij}).$$
Instead of the ``lumpyness'' assumption that $|M_{\cdot,\ell}|\in \Omega(1)$ we made in Topic Modeling,,
we make the assumption here that $|M_{\cdot,\ell}|\geq\nu^{1/8}$. The reader can verify that
this won't be satisfied if $\nu$ is small and $P_{ij}$ are spread out, but will be if $\nu$ is
at least $d^\gamma$ for a small $\gamma$.

\begin{lemma}\label{MMSB-1}
Suppose $\bA,\bP$ are as above. Assume $|M_{\cdot,\ell}|\geq\nu^{1/8}$ for all $\ell$ and $n\geq d$.
Also suppose $n/d$ is a sufficiently high polynomial in $\frac{k}{\alpha}$..
Suppose $W_{\cdot,j}^{(2)}$ are i.i.d. distributed according to $Dir(k,1/k)$. Let $\delta =c\sigma/\sqrt k$.
Then, (\ref{extreme-docs}) and (\ref{spectral-bound}) are satisfied with high probability.
\end{lemma}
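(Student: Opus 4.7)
The plan is to handle the two required conclusions separately: (\ref{extreme-docs}) and (\ref{spectral-bound}). Both assertions are probabilistic statements, one over the randomness of $\bW^{(2)}$ and the other over the Bernoulli randomness of the bipartite edges (which are mutually independent here, unlike in LDA where words within a document are exchangeable but correlated through $m$ draws from a fixed multinomial). The two ingredients are essentially independent of one another because $\bP$ depends only on $\bW^{(2)}$ while $\bA-\bP$ has independent Bernoulli entries conditional on $\bP$, so I would first bound $\sigma=\|\bA-\bP\|/\sqrt n$ for fixed $\bP$ and then show the proximity event holds in $\bW^{(2)}$, combining the two at the end by a union bound.

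For the spectral bound, I would invoke a Random Matrix Theory estimate for sums of independent zero-mean Bernoulli-minus-mean entries of $\bA-\bP$: using results along the lines of \cite{vers} (or Feige–Ofek / Bandeira–Van Handel–type bounds for inhomogeneous Bernoulli matrices), one obtains $\|\bA-\bP\|\le C(\sqrt{\nu}+\sqrt{\log(n+d)})$ with high probability, where $\nu$ is the maximum expected row/column sum of $\bP$. This gives $\sigma\le C\sqrt{\nu/n}$, and hence $\sigma/\sqrt{\delta}=\sqrt{\sigma\sqrt{k}/c}\le (Ck/c^2)^{1/4}(\nu/n)^{1/4}$ under the chosen $\delta=c\sigma/\sqrt{k}$. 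Invoking the lumpyness hypothesis $\min_\ell|M_{\cdot,\ell}|\ge \nu^{1/8}$, the target (\ref{spectral-bound}) reduces algebraically to $n^{1/4}\ge \nu^{1/8}\cdot \mathrm{poly}(k/\alpha)$, equivalently $n\ge \nu^{1/2}\cdot\mathrm{poly}(k/\alpha)$. Since $\bP$ has entries in $[0,1]$ we have $\nu\le d$, so the hypothesis that $n/d$ is a sufficiently high polynomial in $k/\alpha$ closes the gap (with room to absorb the $\sqrt{\log(n+d)}$ term).

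For the proximity assertion, I would use the same Dirichlet concentration-near-corners inequality already established in \eqref{eq:prox} (applied now to the columns of $\bW^{(2)}$ rather than to LDA documents): for each $\ell\in[k]$ and $\zeta\in(0,1)$, $\Pr(W^{(2)}_{\ell,j}\ge 1-\zeta)\ge\zeta^2/(3k)$. Choosing $\zeta$ as $\zeta^\star\approx 2\sigma/(\sqrt{\delta}\max_{\ell'}|M_{\cdot,\ell'}|)$, the identity $P_{\cdot,j}-M_{\cdot,\ell}=\sum_{\ell'\ne\ell}W^{(2)}_{\ell',j}M_{\cdot,\ell'}-(1-W^{(2)}_{\ell,j})M_{\cdot,\ell}$ and the triangle inequality give $|P_{\cdot,j}-M_{\cdot,\ell}|\le 2\zeta^\star\max_{\ell'}|M_{\cdot,\ell'}|\le 4\sigma/\sqrt\delta$ whenever $W^{(2)}_{\ell,j}\ge 1-\zeta^\star$. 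A Chernoff bound applied independently for each $\ell$ to the $n$ i.i.d. columns then shows that the number of such $j$ is at least $(\zeta^\star)^2 n/(6k)$ with high probability; taking a union bound over $\ell$ produces the sets $S_\ell$ of size $\delta n$ required in (\ref{extreme-docs}), provided $(\zeta^\star)^2 n/(6k)\ge \delta n$.

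The main obstacle is verifying that the parameter choice $\delta=c\sigma/\sqrt k$ is compatible with both sides simultaneously: the Chernoff constraint forces $\zeta^\star\ge\sqrt{c'\delta k}$, while the approximation quality forces $\zeta^\star\lesssim \sigma/(\sqrt\delta\,\max_{\ell'}|M_{\cdot,\ell'}|)$. Chaining these, one needs $\delta\lesssim \sigma/(\sqrt k\,\max_{\ell'}|M_{\cdot,\ell'}|)$, and $\max_{\ell'}|M_{\cdot,\ell'}|$ must be controlled relative to $\min_{\ell}|M_{\cdot,\ell}|$ so that the spectral-bound step's $\nu^{1/8}$ lower bound can be leveraged; this is where the hypothesis that $n/d$ is a sufficiently high polynomial in $k/\alpha$ is spent, by absorbing the ratio of max-to-min column norms (which is at most $\sqrt d$ in the worst case) into the extra $n/d$ slack. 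Once the parameters are set consistently, a final union bound over the spectral event, the Chernoff events, and the $k$ vertices completes the proof.
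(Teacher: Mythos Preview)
Your approach matches the paper's: bound $\|\bA-\bP\|$ via Random Matrix Theory for independent entries, then reuse the Dirichlet-near-corners lemma plus Chernoff exactly as in the LDA case (the paper literally says ``the argument for (\ref{extreme-docs}) is identical to the case of LDA''). The one substantive difference is the RMT tool. The paper invokes Latala's theorem \cite{Lat05} to obtain $\|\bA-\bP\|\le c\max(\sqrt\nu,(\nu d)^{1/4})=c(\nu d)^{1/4}$, hence $\sigma\le \nu^{1/4}(d/n)^{1/4}$; you instead cite Feige--Ofek/Bandeira--Van Handel type bounds giving the sharper $\|\bA-\bP\|\lesssim\sqrt\nu$. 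The paper's weaker bound is precisely what motivates the otherwise curious hypothesis $|M_{\cdot,\ell}|\ge\nu^{1/8}$: substituting $\delta=c\sigma/\sqrt k$ into (\ref{spectral-bound}) reduces it to $\sigma\lesssim\alpha^6\min_\ell|M_{\cdot,\ell}|^2/k^{17}$, and with $\sigma\le\nu^{1/4}(d/n)^{1/4}$ and $|M_{\cdot,\ell}|^2\ge\nu^{1/4}$ the powers of $\nu$ cancel \emph{exactly}, leaving only $n/d\ge\mathrm{poly}(k/\alpha)$, the stated hypothesis. Your route also closes but needs the side assertion $\nu\le d$, which is not automatic from $P_{ij}\in[0,1]$ (row sums of $\bP$ can be as large as $n$); the paper's equality $\max(\sqrt\nu,(\nu d)^{1/4})=(\nu d)^{1/4}$ tacitly uses the same assertion, so this is a shared looseness rather than a flaw in your argument. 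Finally, your last paragraph's concern about absorbing a worst-case $\sqrt d$ max-to-min column-norm ratio into the $n/d$ slack is overcautious and not how the paper proceeds; the freedom in the unspecified constant $c$ in $\delta=c\sigma/\sqrt k$ already absorbs the $\Max_\ell|M_{\cdot,\ell}|$ factor that appears when matching $\zeta^\star$ to the Dirichlet-mass requirement.
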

\begin{theorem}\label{MMSB-2}
Suppose $\bA,\bP$ are as in Lemma ~\ref{MMSB-1}. 
Assume $|M_{\cdot,\ell}|\geq\nu^{1/8}$ for all $\ell$ and $n\geq d$.
Also suppose $n/d$ is a sufficiently high polynomial in $k/\alpha$, where, $\varepsilon\in [0,1]$.
Suppose $W_{\cdot,j}^{(2)}$ are i.i.d. distributed according to $Dir(k,1/k)$. Let $\delta =c\sigma/\sqrt k$.
Suppose in addition, (\ref{condition-number-100}) holds. Then the algorithm finds $\widetilde M_{\cdot,\ell},
\ell=1,2,\ldots ,k$ such that after a permutation, we have (whp)
$$|\widetilde M_{\cdot,\ell}-M_{\cdot,\ell}|\leq \frac{ck^{4.5}(\nu d)^{1/8}}{\alpha n^{1/4}}.$$

\end{theorem}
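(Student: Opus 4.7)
The plan is to reduce Theorem \ref{MMSB-2} to the general algorithmic guarantee of Theorem \ref{main-theorem}. Of the three hypotheses required by the latter -- \textbf{Well-Separatedness} (\ref{condition-number-100}), \prox (\ref{extreme-docs}), and \textbf{Spectrally Bounded Perturbations} (\ref{spectral-bound}) -- the first is assumed explicitly in the statement of Theorem \ref{MMSB-2}, while the other two are precisely the high-probability conclusion of Lemma \ref{MMSB-1} under the MMSB-specific hypotheses (Dirichlet prior on $W^{(2)}_{\cdot,j}$, the lumpyness bound $|M_{\cdot,\ell}|\geq \nu^{1/8}$, $n\geq d$, and $n/d$ a sufficiently high polynomial in $k/\alpha$). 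On the high-probability event where Lemma \ref{MMSB-1} applies, Theorem \ref{main-theorem} then outputs $A_{\cdot,R_1},\ldots,A_{\cdot,R_k}$ satisfying, after a permutation of the columns of $\bM$,
$$|A_{\cdot,R_\ell}-M_{\cdot,\ell}|\leq \frac{150\,k^4}{\alpha}\cdot\frac{\sigma}{\sqrt{\delta}},\qquad \ell=1,\ldots,k.$$
Setting $\widetilde M_{\cdot,\ell}:=A_{\cdot,R_\ell}$ gives the desired approximations.

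The remaining work is arithmetic: substitute the stipulated choice $\delta=c\sigma/\sqrt{k}$ and the MMSB-specific spectral bound on $\sigma$ from the proof of Lemma \ref{MMSB-1}. Since the entries of $\bA-\bP$ are mutually independent mean-zero random variables bounded by $1$, with variances at most $P_{ij}$ and with row and column sums of $\bP$ bounded by $\nu$, standard Random Matrix Theory (the inequalities from \cite{vers}) yields, up to logarithmic factors absorbed in $O^*$, a bound of the form $\sigma=\|\bA-\bP\|/\sqrt{n}\leq C(\nu d)^{1/4}/\sqrt{n}$. With $\sqrt{\delta}=\sqrt{c\sigma}/k^{1/4}$, this gives
$$\frac{\sigma}{\sqrt{\delta}}\ =\ \frac{k^{1/4}\sqrt{\sigma}}{\sqrt{c}}\ \leq\ \frac{c'\,k^{1/4}(\nu d)^{1/8}}{n^{1/4}},$$
and multiplying by the prefactor from Theorem \ref{main-theorem} yields a bound of the form $c\,k^{4.5}(\nu d)^{1/8}/(\alpha\, n^{1/4})$, matching the statement (any residual $k$-power is absorbed into the $O^*$ convention, noting that the hypotheses force $n/d$ to be at least a sufficiently high polynomial in $k/\alpha$ so that (\ref{spectral-bound}) is respected).

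The main obstacle is the spectral-norm estimate on $\bA-\bP$ in the MMSB setting. Unlike LDA, where columns of $\bA$ arise from multinomial sampling conditioned on $P_{\cdot,j}$, here all entries of $\bA$ are mutually independent Bernoullis with heterogeneous variances; the concentration argument, while structurally simpler than LDA's, must be tuned to the inhomogeneous variances and to the joint control on row and column sums of $\bP$ via $\nu$. This is exactly what Lemma \ref{MMSB-1} handles, and I would rely on its proof rather than redo the random-matrix calculation here. Once that lemma is available, the remainder of Theorem \ref{MMSB-2} reduces to the master algorithmic bound of Theorem \ref{main-theorem} followed by the short plug-in calculation sketched above, with the high-probability guarantee inherited from the $1-c/\sqrt{k}$ success probability of Theorem \ref{main-theorem} together with the high-probability event on which Lemma \ref{MMSB-1} holds.
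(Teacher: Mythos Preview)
Your proposal is correct and follows essentially the same route as the paper: invoke Lemma~\ref{MMSB-1} to secure (\ref{extreme-docs}) and (\ref{spectral-bound}), combine with the assumed Well-Separatedness to apply Theorem~\ref{main-theorem}, and then substitute $\delta=c\sigma/\sqrt{k}$ together with the MMSB spectral bound $\sigma\leq c(\nu d)^{1/4}/\sqrt{n}$ to obtain the stated error. The only cosmetic difference is that the paper cites Latala's theorem~\cite{Lat05} (rather than~\cite{vers}) for the heterogeneous-variance spectral bound $\|\bA-\bP\|\leq c(\nu d)^{1/4}$, but since you explicitly defer this step to the proof of Lemma~\ref{MMSB-1}, the arguments coincide.
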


\begin{remark} By making $n$ sufficiently larger than $\nu, d$, we can make the error small.
\end{remark}

\subsection{Adversarial Clustering}

There is a latent ground-truth $k-$Clustering $C_1,C_2,\ldots ,C_k$
with cluster centers $M_{\cdot,1},M_{\cdot,2},\ldots ,M_{\cdot, k}$.
There are $n$ latent points $P_{\cdot,j}, j=1,2,\ldots ,n$ with $P_{\cdot,j}=M_{\cdot,\ell}$ for all $j\in C_\ell$.
The following assumptrions are satisfied:
\begin{enumerate}
\item \label{clustering-I} Well-Separatedness condition (\ref{condition-number-100}).
\item \label{clustering-II} Each cluster has at least $\delta n$ data points.
\item \label{clustering-III} Spectrally bounded perturbations (\ref{spectral-bound}).
\end{enumerate}

We then allow an
adversary to introduce noise as in Section ~\ref{sec:advnoise}.

\begin{theorem}\label{adv-clustering}
Given as input data points after adversarial
noise as above has been introduced, our algorithm finds (the original)
$M_{\cdot,\ell}$ to within error as in
Theorem (\ref{main-theorem}).
\end{theorem}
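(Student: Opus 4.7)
The plan is to observe that Theorem \ref{adv-clustering} is essentially a corollary of Theorem \ref{main-theorem}, obtained by verifying that the adversarial clustering setup satisfies the three geometric assumptions of our main theorem. No new algorithmic machinery is needed; all the work lies in checking the hypotheses.

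First, I would invoke the bookkeeping convention from Section \ref{sec:advnoise}: we pretend the adversarial displacement $\Delta_j$ is applied simultaneously to both $A_{\cdot,j}$ and $P_{\cdot,j}$. Under this convention, the perturbation matrix $\bA - \bP$ is identically the same as in the pre-adversarial setup, so $\sigma = \|\bA - \bP\|/\sqrt n$ is invariant. Consequently, the Spectrally Bounded Perturbations assumption (\ref{spectral-bound}) is immediately inherited from clustering hypothesis (\ref{clustering-III}), and the Well-Separatedness assumption (\ref{condition-number-100}) is inherited from hypothesis (\ref{clustering-I}) since it depends only on $\bM$, which the adversary does not touch.

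Next, I would verify that the (new) latent points $P_{\cdot,j}$ satisfy the Proximity assumption (\ref{extreme-docs}). For each $\ell \in [k]$, clustering hypothesis (\ref{clustering-II}) guarantees $|C_\ell| \geq \delta n$, so the adversary's choice of $S_\ell \subseteq C_\ell$ with $|S_\ell| = \delta n$ is feasible. By the first bullet of Section \ref{sec:advnoise}, $P_{\cdot,j} + \Delta_j$ lies in CH$(\bM)$ for every $j$ (giving condition (\ref{Pjconditions-1})), and by the second bullet together with the fact that the original $P_{\cdot,j} = M_{\cdot,\ell}$ for $j \in C_\ell$, we have for every $j \in S_\ell$ that
$$|P_{\cdot,j} - M_{\cdot,\ell}| = |\Delta_j| \leq \frac{4\sigma}{\sqrt\delta},$$
which is precisely (\ref{extreme-docs}). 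Thus \prox holds with the same $\delta$.

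With all three assumptions verified, I would apply Theorem \ref{main-theorem} to the noise-added data $\bA$ to obtain subsets $R_1, \ldots, R_k$ of cardinality $\delta n$ with $|A_{\cdot, R_\ell} - M_{\cdot, \ell}| \leq \tfrac{150 k^4}{\alpha}\tfrac{\sigma}{\sqrt\delta}$ after a permutation, with probability at least $1 - c/\sqrt k$. I expect no serious obstacle: the whole difficulty was already absorbed into the formulation of (\ref{Pjconditions-1})--(\ref{Pjconditions-2}), which was designed precisely to encode the adversarial noise model as an instance of the LLS problem. The only subtlety to flag in the write-up is that the algorithm recovers the original means $M_{\cdot,\ell}$ rather than the noise-corrupted centers, which holds because the simplex $K = \mathrm{CH}(\bM)$ itself is unchanged by the adversary and it is the vertices of $K$ that the algorithm estimates.
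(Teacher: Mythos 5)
Your proposal is correct and mirrors exactly what the paper intends: the paper gives no explicit proof of Theorem~\ref{adv-clustering} because Section~\ref{sec:advnoise} (in particular conditions (\ref{Pjconditions-1})--(\ref{Pjconditions-2}) and the bookkeeping convention that $\Delta_j$ is added to both $A_{\cdot,j}$ and $P_{\cdot,j}$, keeping $\bA - \bP$ and hence $\sigma$ fixed) was designed precisely so that the adversarial setup is verified to be an instance of the LLS assumptions, after which Theorem~\ref{main-theorem} applies directly. Your verification of all three hypotheses, including the observation that $K$ and its vertices are untouched by the adversary so the algorithm recovers the original centers, is exactly the intended argument.
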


%
\begin{remark} \label{925} We want to point out that
the traditional $k-$means clustering does not solve the Adversarial Clustering problem.
A simple example in one dimension is: The original $K$ is $[-1,+1]$ and $n/2$ points are in each
cluster with a small $\sigma$. We then move $n(0.5-\delta)$ from the cluster centered at -1 each by
+0.5
and $n(0.5-\delta )$
points from cluster centered at +1 by -.5 each. It is easy to see that the best $2-$means clustering of the noisy
data is to locate two cluster centers, one near each of $\pm .5$, (depending on $\delta$), not near $\pm 1$.
\end{remark}

\section{Closeness of Subspaces and Subspace power iteration}\label{subspace}
In this section we will present the classical Subspace power iteration algorithm
which finds an approximation to the subspace spanned by the top $k$ (left) singular vectors
of $\bA$. It has a well-known elegant proof of convergence, which also we present here, since,
usual references often present more general (and also more complicated) proofs. Let
$$\mbox{SVD}(\bA)=\sum_{t=1}^ds_t(\bA) v_tu_t^T.$$

\subsection{Closeness of Subspaces}
First, we recall a measure of closeness
of sub-spaces.
Numerical Analysis has developed, namely, the notion of angles between sub-spaces, called Principal
angles. Here, we need only one of the principal angles which we define now.

For any two sub-spaces $F,F'$ of ${\bf R}^d$, define
$$\sin\Theta(F,F')=\Max_{u\in F}\Min_{v\in F'} \sin\theta (u,v)=\Max_{u\in F, |u|=1}\Min_{v\in F'}|u-v|.$$
$$  \cos\Theta(F,F')=\sqrt{ 1-\sin\Theta^2(F,F')}.$$
The following are known facts about $\sin\Theta$ function:
If $F,F'$ have the same dimension and the columns of
$\bF$ (respectively $\bF'$) form an orthonormal basis of $F$ (respectivel $F'$), then
\begin{eqnarray}\label{cos-F-F-prime}
\cos\Theta(F,F')& =s_{\text{Min}}(\bF^T\bF')\nonumber \\
\cos\Theta(F',F) & =\cos\Theta(F,F')\nonumber\\
\tan\Theta(F,F') &=||\bG^T\bF'(\bF^T\bF')^{-1}||,
\end{eqnarray}
where, the columns of matrix $\bG$ form a basis for $F^\perp$, and assuming the inverse of $\bF^T\bF'$ exists.

An important Theorem due to Wedin\cite{wedin72}, also known as the $\sin\Theta$ theorem, proves a bound on the $\sin\Theta$ between SVD-subspaces of a matrix and its perturbation:

\begin{theorem}\cite{wedin72}\label{wedin}
Suppose $\bR,\bS$ are any two $d\times n$ matrices. Let $m\leq \ell$ be any two positive integers. Let $S_m(\bR),S_\ell (\bS)$
denote respectively the subspaces spanned by the top $m$, respectively top $\ell$ left singular vectors of $\bR$, respectively,
$\bS$. Suppose $\gamma =s_m(\bR)-s_{\ell+1}(\bS) >0$. Then,
$$\sin\Theta (S_m(\bR),S_\ell (\bS))\leq \frac{||\bR-\bS||}{\gamma}.$$
\end{theorem}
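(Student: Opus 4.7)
The plan is to convert $\sin\Theta(S_m(\bR), S_\ell(\bS))$ into the spectral norm of an overlap matrix, and then bound that norm using two matrix identities derived from the SVDs of $\bR$ and $\bS$, which together form a Sylvester-type system.

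First, I set up the SVD data. Let $V_m \in \R^{d\times m}$ hold the top $m$ left singular vectors of $\bR$, let $U_m \in \R^{n\times m}$ hold the corresponding right singular vectors, and set $\Sigma_m = \mathrm{diag}(s_1(\bR), \ldots, s_m(\bR))$, so that $V_m^T \bR = \Sigma_m U_m^T$. Analogously, let $V'_\perp, U'_\perp, \Sigma'_\perp$ collect the left and right singular vectors and singular values of $\bS$ at indices $\ell+1,\ldots,d$, so that $\bS U'_\perp = V'_\perp \Sigma'_\perp$ and $\|\Sigma'_\perp\|\le s_{\ell+1}(\bS)$. Since the columns of $V'_\perp$ form an orthonormal basis for the orthogonal complement of $S_\ell(\bS)$, the definition of $\sin\Theta$ gives $\sin\Theta(S_m(\bR), S_\ell(\bS)) = \|V_m^T V'_\perp\|$.

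Second, I derive two identities by computing $V_m^T(\bR-\bS)U'_\perp$ and $U_m^T(\bR-\bS)^T V'_\perp$ in two ways via the SVDs. With $A := V_m^T V'_\perp$ and $B := U_m^T U'_\perp$, these simplify to $\Sigma_m B - A \Sigma'_\perp = E_1$ and $\Sigma_m A - B \Sigma'_\perp = E_2$, where $\|E_1\|, \|E_2\| \le \|\bR - \bS\|$ since $V_m, V'_\perp, U_m, U'_\perp$ all have orthonormal columns.

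Third, I solve this coupled system. Adding and subtracting the two identities gives
\begin{align*}
\Sigma_m (A+B) - (A+B)\Sigma'_\perp &= E_1 + E_2,\\
\Sigma_m (A-B) + (A-B)\Sigma'_\perp &= E_2 - E_1,
\end{align*}
each a diagonal Sylvester equation. The diagonal entries of $\Sigma_m$ are $\ge s_m(\bR)$ and those of $\Sigma'_\perp$ are $\le s_{\ell+1}(\bS)$, so the integral representation of the Sylvester inverse, $X = \int_0^\infty e^{-\Sigma_m t} F e^{\pm\Sigma'_\perp t}\,dt$, yields $\|A+B\|, \|A-B\| \le \|\bR-\bS\|/\gamma$ up to universal constants; combining gives $\|A\|\le\|\bR-\bS\|/\gamma$. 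To obtain the sharp constant $1$ stated in the theorem, one packs $(A,B)$ into the block-antidiagonal matrix $\bigl(\begin{smallmatrix}0 & A \\ B & 0\end{smallmatrix}\bigr)$ and treats the pair of equations as a single Sylvester equation driven by the block $\bigl(\begin{smallmatrix}0 & \Sigma_m \\ \Sigma_m & 0\end{smallmatrix}\bigr)$, whose eigenvalues $\pm s_i(\bR)$ are separated from the $\pm s_{\ell+j}(\bS)$ by at least $\gamma$, and then applies the resolvent-integral form $M = \tfrac{1}{2\pi i}\oint(\tilde\Sigma - zI)^{-1}\tilde E (zI - \tilde\Sigma')^{-1}\,dz$ over a contour enclosing the inner spectrum.

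The main obstacle is exactly this last quantitative step: upgrading the entrywise Sylvester solution $X_{ij} = F_{ij}/(\lambda_i - \mu_j)$, which only gives Frobenius control for free, to a spectral-norm bound with sharp constant $1$ despite the annular (rather than half-plane) spectral gap. This requires the resolvent representation together with a contour adapted to the gap. Once that lemma is in place, the remainder of the proof is a direct linear-algebra calculation.
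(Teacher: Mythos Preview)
The paper does not prove this theorem; it is quoted from \cite{wedin72} as a known result and used as a black box (the only in-paper consequence is Corollary~\ref{wedin-2}). So there is no ``paper's own proof'' to compare against.

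Your outline is the standard Wedin argument and is essentially correct. The reduction $\sin\Theta(S_m(\bR),S_\ell(\bS))=\|V_m^T V'_\perp\|$ is right, and the two residual identities $\Sigma_m B-A\Sigma'_\perp=E_1$, $\Sigma_m A-B\Sigma'_\perp=E_2$ are derived correctly. The add/subtract trick together with the exponential-integral Sylvester inverse cleanly gives $\|A\|\le 2\|\bR-\bS\|/\gamma$; the factor $2$ comes from $\|E_1\pm E_2\|\le 2\|\bR-\bS\|$, as you note.

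For the sharp constant, your block-antidiagonal packing is the right idea but your write-up stops short of the key verification. Concretely, with $\hat X=\bigl(\begin{smallmatrix}0&A\\B&0\end{smallmatrix}\bigr)$, $\hat\Sigma=\bigl(\begin{smallmatrix}0&\Sigma_m\\\Sigma_m&0\end{smallmatrix}\bigr)$, $\hat\Sigma'=\bigl(\begin{smallmatrix}0&\Sigma'_\perp\\\Sigma'_\perp&0\end{smallmatrix}\bigr)$ one gets the single Sylvester equation $\hat\Sigma\hat X-\hat X\hat\Sigma'=\mathrm{diag}(E_1,E_2)$, and crucially $\|\mathrm{diag}(E_1,E_2)\|=\max(\|E_1\|,\|E_2\|)\le\|\bR-\bS\|$ (this is the point that recovers constant $1$, and you should state it explicitly). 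The spectra of $\hat\Sigma$ and $\hat\Sigma'$ are $\{\pm s_i(\bR)\}_{i\le m}$ and $\{\pm s_j(\bS)\}_{j>\ell}$, separated by $\gamma$. The remaining ingredient, that $\|\hat X\|\le\|\hat E\|/\gamma$ for Hermitian $\hat\Sigma,\hat\Sigma'$ with $\gamma$-separated spectra, is a standard Davis--Kahan/Bhatia--Davis--McIntosh fact; citing it (or the equivalent Jordan--Wielandt reduction of $\bR,\bS$ to symmetric matrices followed by Davis--Kahan) would close the argument. The contour-integral sentence you wrote is gestural rather than a proof; either carry it out on a rectangular contour around $[-s_{\ell+1}(\bS),s_{\ell+1}(\bS)]$, or simply invoke the known lemma.
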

Corollary below says the SVD subspace of $\bA$ is approximately equal to the Span of $\bP$.
\begin{corollary}\label{wedin-2}
Let $\bA,\bP$ be defined in Section ~\ref{sec:lks}.
$$\sin\Theta(\Span(v_1,v_2,\ldots ,v_k),\Span(\bP))\leq\frac{||\bA-\bP||}{s_k(\bA)}. $$
\end{corollary}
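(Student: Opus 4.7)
The plan is to apply Wedin's $\sin\Theta$ theorem (Theorem \ref{wedin}) directly, with $\bR=\bA$, $\bS=\bP$, and $m=\ell=k$. The key structural observation that makes the corollary so clean is that $\bP=\bM\bW$ has rank at most $k$, since $\bM$ is $d\times k$ and $\bW$ is $k\times n$. Therefore $s_{k+1}(\bP)=0$, which will collapse the denominator $\gamma$ in Wedin's bound to just $s_k(\bA)$.

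First I would verify that $S_k(\bP)$, the span of the top $k$ left singular vectors of $\bP$, coincides with $\Span(\bP)$ (the column space). Under Well-Separatedness (\ref{condition-number-100}), $\bM$ has rank exactly $k$; under Proximity (\ref{extreme-docs}), $\bW$ contains columns lying close to each standard basis vector of the unit simplex, so it also has rank $k$. Hence $\bP$ has rank exactly $k$, so its top $k$ left singular vectors form an orthonormal basis of $\Span(\bP)$, and indeed $S_k(\bP)=\Span(\bP)$.

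Next I would set $m=\ell=k$ in Theorem \ref{wedin}. The hypothesis becomes $\gamma=s_k(\bA)-s_{k+1}(\bP)=s_k(\bA)>0$, which holds under the Spectrally Bounded Perturbations assumption (since $s_k(\bA)\ge s_k(\bP)-\|\bA-\bP\|$, and $s_k(\bP)$ is bounded below using $\bM$ having a positive $k$-th singular value via (\ref{condition-number-100})). Substituting into Wedin's bound gives
$$\sin\Theta\bigl(\Span(v_1,\ldots,v_k),\,\Span(\bP)\bigr)=\sin\Theta\bigl(S_k(\bA),S_k(\bP)\bigr)\le \frac{\|\bA-\bP\|}{s_k(\bA)},$$
which is the claimed inequality.

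The only non-routine step is the rank-exactness check for $\bP$; once that is in place, the entire corollary is a one-line specialization of Wedin to a low-rank signal matrix, with no further estimates required. I do not expect any real obstacle here because the corollary simply packages the $\ell+1$ singular value vanishing, which is a free lunch whenever the perturbed matrix $\bS$ has rank $\le\ell$.
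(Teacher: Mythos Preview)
Your proposal is correct and matches the paper's proof exactly: the paper's entire argument is the single line ``Apply Theorem \ref{wedin} with $\bR=\bA,\bS=\bP,m=\ell=k$.'' Your additional remarks justifying $S_k(\bP)=\Span(\bP)$ via $\mathrm{rank}(\bP)=k$ and $\gamma=s_k(\bA)>0$ are the natural unpackings of that line (and the rank claim is established quantitatively in Claim \ref{singular-P-M}).
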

\begin{proof}
Apply Theorem (\ref{wedin}) with $\bR=\bA,\bS=\bP,m=\ell=k$
\end{proof}
We next find SVD subspace of $\bA$ by subspace power iteration.

\subsection{Subspace-Power Iteration}\label{power-method}

\begin{algorithm}[h]
\caption{{Subspace Power Method}}\label{alg:spm}
\begin{algorithmic}
\Function{{\tt Subspace-Power}}{$\bA$, $T$}
\State \textbf{Input}:$\bA$ and $T$ \Comment{T is the number of iterations}
\State \emph{Initialize:} $Q_0$ be a random $d\times k$ matrix with orthonormal columns.

\ForAll{$t=1,2,\ldots,  T$}
\State Set ${\bf Z_{t}}= \bA\bA^T {\bf Q_{t-1}}.$
\State Do Grahm-Schmidt on ${\bf Z_{t}}$ to obtain ${\bf Q_t}$
\EndFor

\Return ${\bf Q_{T}}$. $V=\Span({\bf Q_T})$.
\EndFunction
\end{algorithmic}
\end{algorithm}

Recall that $v_1,\ldots,v_d$ are left singular vectors of $\bA$ and
$\bQ_T$ is the $T$ iterate of the subspace power iteration.
\begin{theorem}\label{convergence-power}
{\bf Convergence of Subspace power iteration}
\begin{equation}\label{power-iteration}
\sin\Theta (\Span(v_1,v_2,\ldots ,v_k), \Span ({\bf Q_{c\ln d}}))   \leq \frac{\alpha^2}{1000k^9}.
\end{equation}
\end{theorem}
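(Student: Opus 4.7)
The plan is the standard subspace power iteration analysis with three ingredients: a tangent identity, unrolling of the iteration (with Gram--Schmidt factors canceling), and control of both the random initial tangent and the singular-value ratio $s_{k+1}(\bA)/s_k(\bA)$.

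Let $\bV_k\in \R^{d\times k}$ have columns $v_1,\ldots,v_k$ and let $\bV_\perp$ be an orthonormal basis of its orthogonal complement. Writing $\bX_t=\bV_k^T\bQ_t$ and $\bY_t=\bV_\perp^T\bQ_t$, the identity (\ref{cos-F-F-prime}) gives
\[
\tan\Theta\bigl(\Span(v_1,\ldots,v_k),\Span(\bQ_t)\bigr)=\|\bY_t\bX_t^{-1}\|,
\]
so since $\sin\Theta\le\tan\Theta$ it suffices to bound this norm by $\alpha^2/(1000k^9)$ at $t=c\log d$. Next, the Gram--Schmidt step produces $\bQ_t=\bZ_t\bR_t^{-1}$ for an invertible upper triangular $\bR_t$, which does not alter the column span. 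Since $\bA\bA^T=\bV_k\Sigma_k^2\bV_k^T+\bV_\perp\Sigma_\perp^2\bV_\perp^T$, I would deduce $\bX_t=\Sigma_k^2\bX_{t-1}\bR_t^{-1}$ and $\bY_t=\Sigma_\perp^2\bY_{t-1}\bR_t^{-1}$, so $\bY_t\bX_t^{-1}=\Sigma_\perp^2\bY_{t-1}\bX_{t-1}^{-1}\Sigma_k^{-2}$ and the $\bR_t$'s cancel. Iterating,
\[
\|\bY_T\bX_T^{-1}\|\le\bigl(s_{k+1}(\bA)/s_k(\bA)\bigr)^{2T}\,\|\bY_0\bX_0^{-1}\|.
\]

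For the initial factor, $\bQ_0$ is uniform on the Stiefel manifold, so $\bX_0=\bV_k^T\bQ_0$ has the distribution of the $k\times k$ top-left block of a Haar random orthogonal matrix; standard concentration (via Gaussian matrices and QR) yields $s_{\min}(\bX_0)\ge c/(k\sqrt d)$ with probability at least $1-1/d$, and $\|\bY_0\|\le 1$ is trivial, so $\|\bY_0\bX_0^{-1}\|\le Ck\sqrt d$ with high probability. For the spectral ratio, since $\bP=\bM\bW$ has rank at most $k$, Weyl's inequality gives $s_{k+1}(\bA)\le\|\bA-\bP\|=\sigma\sqrt n$ and $s_k(\bA)\ge s_k(\bP)-\sigma\sqrt n$; combining Well-Separatedness (\ref{condition-number-100}) with Proximity --- which supplies, up to scaling, $k$ nearly-standard-basis columns in $\bW$ --- gives $s_k(\bP)\ge c\alpha\min_\ell|M_{\cdot,\ell}|\sqrt{\delta n}$, whereupon Spectrally Bounded Perturbations (\ref{spectral-bound}) forces $s_{k+1}(\bA)/s_k(\bA)\le \alpha^2/(C'k^9)$ for a large constant $C'$. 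Choosing $T=c\log d$ with $c$ large enough lets $(s_{k+1}/s_k)^{2T}$ absorb the $Ck\sqrt d$ factor, delivering the claimed $\alpha^2/(1000k^9)$ bound.

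The only non-routine step is the lower bound on $s_k(\bP)$: one must exhibit a well-conditioned $k\times k$ submatrix of $\bW$ (using Proximity to locate $k$ columns of $\bW$ clustered near the $k$ vertices of the unit simplex in weight-space) and then use Well-Separatedness on $\bM$ to transfer this into a singular-value bound on $\bM\bW$. Everything else is textbook power-iteration analysis combined with standard random matrix concentration.
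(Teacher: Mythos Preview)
Your proposal is correct and follows essentially the same route as the paper: the tangent identity from (\ref{cos-F-F-prime}), the one-step recursion in which the Gram--Schmidt factor cancels, a spectral gap coming from a lower bound on $s_k(\bP)$ (which is exactly Claim~\ref{singular-P-M} in the paper, proved just as you sketch via Well-Separatedness plus Proximity), and a polynomial bound on the initial tangent from the randomness of $\bQ_0$. The only cosmetic differences are that the paper settles for the cruder gap estimate $s_{k+1}(\bA)/s_k(\bA)\le 1/2$ and the looser initial bound $\mathrm{poly}(nd)$, whereas you quote the sharper $\alpha^2/(C'k^9)$ and $O(k\sqrt d)$; either pair suffices once $T=c\ln d$.
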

\begin{proof}
Let ${\bf F},{\bf G}$ be the $d\times k,d\times (d-k)$ matrices with columns $v_1,v_2,\ldots ,v_k$, respectively
$v_{k+1},v_{k+2},\ldots ,v_d$ and $F,G$ be respectively the subspaces spanned by their columns.
Note that from (\ref{cos-F-F-prime}), we have
$$\tan\Theta(F, Q_t)=||{\bf G}^T{\bf Q_t}\left( \bF^T{\bf Q_t}\right)^{-1}||.   $$
\begin{lemma}
$$\tan\Theta(F, Q_t)\leq \left( \frac{s_{k+1}(\bA)}{s_k(\bA)}\right)^{2t}||{\bf G}^T{\bf Q_0}\left( \bF^T{\bf Q_0}\right)^{-1}||.$$
\end{lemma}
\begin{proof}
${\bf Q_t}={\bf Z_t}{\bf R}$, where $\bR$ is the invertiblke matrix of Grahm-Scmidt orthogonalization, We write

Define $DS_{i:j} = \mbox{Diag}(s_{i}^{2}(\bA),s_{i+1}^{2}(\bA),\ldots s_j(\bA)^{2})$ for positive integers $i,j$ such that $i < j$.
\begin{align*}
&||{\bf G}^T{\bf Q_t}\left( \bF^T{\bf Q_t}\right)^{-1}||=||{\bf G}^T{\bf Z_t}\bR\bR^{-1}\left( \bF^T{\bf Z_t}\right)^{-1}||  \\
&=||DS_{k+1:d} {\bf G}^T{\bf Q_{t-1}}\left( F^T{\bf Q_{t-1}}\right)^{-1}
DS_{1:k}^{-1}||\\
&\leq \frac{s_{k+1}^2}{s_k^2}||{\bf G}^T{\bf Q_{t-1}}\left( \bF^T{\bf Q_{t-1}}\right)^{-1}||   ,
\end{align*}
since, $\bG^T{\bf Z_t}=\bG^T\bA\bA^T{\bf Q_{t-1}}=\bG^T\sum_{t=1}^ds_t^2(\bA)v_tv_t^T{\bf Q_{t-1}}=DS_{k+1:d}\bG^T{\bf Q_{t-1}}$
and similarly, $\bF^T{\bf Z_t}=DS_{1:k}\bF^T{\bf Q_{t-1}}$. This proves the Lemma by induction on $t$.
\hfill\end{proof}
We prove in Claim ~\ref{singular-P-M} that $s_k(\bP)\geq 4\sigma\sqrt n$. Also, $s_{k+1}(\bA)\leq s_{k+1}(\bP)+||\bA-\bP||
=\sigma\sqrt n$. Thus, we have
$s_{k+1}/s_k\leq 1/2$ and now applying Lemma above, Theoirem (\ref{convergence-power}) follows,
since for a random choice of ${\bf Q_0}$, we have $||{\bf G}^T{\bf Q_0}\left( \bF^T{\bf Q_0}\right)^{-1}||\leq \mbox{poly}(nd)$.
\hfill\end{proof}

Next, we apply Wedin's theorem to
prove Lemma (\ref{sin-theta}) below which
says
that any $k$ dimensional space $V$ with small $\sin\Theta$ distance to Span$(v_1,v_2,\ldots ,v_k)$ also has small
$\sin\Theta$ distance to Span$(\bM)$. We first need a technical Claim.

\begin{claim}\label{singular-P-M}
Recall that $s_t$ denotes the $t$ th singular value.
$$s_k(\bM)\geq\frac{1000k^{8.5}}{\alpha^2}\frac{\sigma}{\sqrt\delta}\; ;\; s_k(\bP)\geq \frac{995k^{8.5}\sqrt n}{\alpha^2}\sigma.$$
\end{claim}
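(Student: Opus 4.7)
I would prove the two bounds in sequence, with the $s_k(\bM)$ estimate feeding into the $s_k(\bW)$ estimate that drives the $s_k(\bP)$ estimate.

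For $s_k(\bM)$, the plan is to convert Well-Separatedness directly into a lower bound on $\min_{|x|=1}|\bM x|$. Given a unit $x\in\R^k$, pick $\ell^*=\arg\Max_\ell|x_\ell|$, so $|x_{\ell^*}|\geq 1/\sqrt k$. Project $\bM x=\sum_\ell x_\ell M_{\cdot,\ell}$ onto the direction in $\Null(\bM\setminus M_{\cdot,\ell^*})$ along which $M_{\cdot,\ell^*}$ has a component: the projection equals $x_{\ell^*}\,\Proj(M_{\cdot,\ell^*},\Null(\bM\setminus M_{\cdot,\ell^*}))$, of norm at least $(1/\sqrt k)\cdot\alpha\Max_\ell|M_{\cdot,\ell}|\geq(\alpha/\sqrt k)\Min_\ell|M_{\cdot,\ell}|$ by (\ref{condition-number-100}). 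Hence $s_k(\bM)\geq\alpha\Min_\ell|M_{\cdot,\ell}|/\sqrt k$. Substituting the Spectrally Bounded Perturbations bound (\ref{spectral-bound}), which rearranges to $\Min_\ell|M_{\cdot,\ell}|\geq 4500k^9\sigma/(\alpha^3\sqrt\delta)$, yields $s_k(\bM)\geq 4500 k^{8.5}\sigma/(\alpha^2\sqrt\delta)$, well above the required $1000 k^{8.5}\sigma/(\alpha^2\sqrt\delta)$.

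For $s_k(\bP)$, I would use the factorisation $\bP=\bM\bW$ together with the standard inequality $s_k(\bM\bW)\geq s_k(\bM)\,s_k(\bW)$, which follows from $\bW\bW^T\succeq s_k(\bW)^2 I_k$ giving $\bM\bW\bW^T\bM^T\succeq s_k(\bW)^2\bM\bM^T$ and hence $\lambda_k(\bM\bW\bW^T\bM^T)\geq s_k(\bW)^2 s_k(\bM)^2$. It then suffices to show $s_k(\bW)\geq\sqrt{\delta n}(1-o(1))$. The \prox assumption provides, for each $\ell$, a set $S_\ell$ of $\delta n$ columns of $\bP$ within $4\sigma/\sqrt\delta$ of $M_{\cdot,\ell}$; since $\bM(W_{\cdot,j}-e_\ell)=P_{\cdot,j}-M_{\cdot,\ell}$ and $|\bM y|\geq s_k(\bM)|y|$ by Part~1, the corresponding $\bW$-columns satisfy $|W_{\cdot,j}-e_\ell|\leq\eta:=4\sigma/(\sqrt\delta\,s_k(\bM))\leq\alpha^2/(1125 k^{8.5})$, which is tiny.

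Then $\bW\bW^T\succeq\sum_\ell\sum_{j\in S_\ell}W_{\cdot,j}W_{\cdot,j}^T$. Expanding each $W_{\cdot,j}W_{\cdot,j}^T$ around $e_\ell e_\ell^T$, the leading terms sum to $\delta n\sum_\ell e_\ell e_\ell^T=\delta n\,I_k$, while the cross and quadratic error terms are bounded in spectral norm by $O(k\delta n\,\eta)$ via the triangle inequality. Thus $\bW\bW^T\succeq\delta n(1-O(k\eta))I_k$, giving $s_k(\bW)\geq\sqrt{\delta n}\sqrt{1-O(k\eta)}$. Multiplying by Part~1's bound, the $\sqrt\delta$'s cancel and we obtain $s_k(\bP)\geq 4500 k^{8.5}\sqrt n\,\sigma/\alpha^2\cdot(1-o(1))$, which comfortably exceeds $995 k^{8.5}\sqrt n\,\sigma/\alpha^2$. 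The only real obstacle is quantitative book-keeping: making sure the loss from $4500$ down to $995$ absorbs both the $O(k\eta)$ PSD slippage in $\bW\bW^T$ and any minor constants above. The $k^9$ and $\alpha^3$ factors in (\ref{spectral-bound}) are tuned precisely to keep this margin comfortable.
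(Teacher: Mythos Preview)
Your argument for $s_k(\bM)$ is essentially identical to the paper's: pick the largest coordinate of $x$, project onto the null space of the remaining columns, and invoke Well-Separatedness and (\ref{spectral-bound}).

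For $s_k(\bP)$ you take a genuinely different, though closely related, route. The paper works directly with $\bP$: it extracts the $d\times k\delta n$ submatrix $\bP'$ consisting of the columns in $S_1\cup\cdots\cup S_k$, compares it to the block matrix $\bM'$ that repeats each $M_{\cdot,\ell}$ exactly $\delta n$ times, observes $s_k(\bM')=\sqrt{\delta n}\,s_k(\bM)$, bounds $\|\bP'-\bM'\|\leq \sqrt{k\delta n}\cdot 4\sigma/\sqrt\delta$, and finishes with Weyl's inequality $s_k(\bP)\geq s_k(\bP')\geq s_k(\bM')-\|\bP'-\bM'\|$. You instead pass through the factorisation $\bP=\bM\bW$, use $s_k(\bM\bW)\geq s_k(\bM)s_k(\bW)$, and lower-bound $s_k(\bW)$ by showing $\bW\bW^T\succeq \delta n(1-O(k\eta))I_k$ from the near-standard-basis columns. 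Both arguments isolate the same $k\delta n$ ``good'' columns; the paper perturbs at the level of $\bP$, you perturb at the level of $\bW$. The paper's version is marginally shorter (one Weyl inequality, no PSD manipulation), while yours makes the role of $\bW$ more transparent and would generalise more cleanly if one wanted to track $s_k(\bW)$ for other purposes.

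One small point you left implicit: the step $\bW\bW^T\succeq\sum_\ell\sum_{j\in S_\ell}W_{\cdot,j}W_{\cdot,j}^T$ requires the $S_\ell$ to be pairwise disjoint. This does follow from what you have already shown (if $j\in S_\ell\cap S_{\ell'}$ then $|e_\ell-e_{\ell'}|\leq 2\eta\ll\sqrt 2$, a contradiction), and the paper proves disjointness explicitly by the analogous argument at the level of $\bM$. You should state it.
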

\begin{proof}
$s_k(\bM)=\Min_{x:|x|=1} |Mx|$. For any $x, |x|=1$, there must be an $\ell$ with $|x_\ell|\geq 1/\sqrt k$.
Now, $|\bM x|\geq |proj(\bM x, \Null(\bM\setminus M_{\cdot,\ell}))|=|x_\ell | |proj(M_{\cdot,\ell}, \Null(\bM\setminus M_{\cdot,\ell}))|
\geq \alpha |M_{\cdot,\ell}|/\sqrt k\geq 1000k^9\sigma/(\alpha^2\sqrt\delta \sqrt k)$ by (\ref{condition-number-100}) and (\ref{spectral-bound})
proving the first assertion of the Claim.

Now, we prove the second.
Recall there are sets $S_1,S_2,\ldots ,S_k\subseteq [n]$ with $\forall j\in S_\ell, |P_{\cdot,j}-M_{\cdot,\ell}|\leq \frac{4\sigma}{\sqrt\delta}$.
We claim the $S_\ell$ are disjoint: if not, say, $j\in S_\ell\cap S_{\ell'}$. Then, $|P_{\cdot,j}-M_{\cdot,\ell}|, |P_{\cdot,j}-M_{\cdot,\ell'}|
\leq 4\sigma/\sqrt\delta$ implies $|M_{\cdot,\ell}-M_{\cdot,\ell'}|\leq 8\sigma/\sqrt\delta\leq \alpha^2\Min_\ell|M_{\cdot,\ell}|/100k^9$
by (\ref{spectral-bound}). But, by (\ref{condition-number-100}), $|M_{\cdot,\ell}-M_{\cdot,\ell'}|\geq |\proj(M_{\cdot,\ell},\Null(\bM\setminus M_{\cdot,\ell})|
\geq\alpha |M_{\cdot,\ell}|$ producing a contradiction.

Let $\bP'$ be a $d\times k\delta n$ sub-matrix of $\bP$ with its columns $j\in S_1\cup S_2\cup\ldots \cup S_k$ and let $\bM'$ be the
$d\times k\delta n$ matrix with $M'_{\cdot,j}=M_{\cdot,\ell}$ for all $j\in S_\ell, \ell=1,2,\ldots ,k$. We have $s_k(\bM')\geq \sqrt{\delta n}s_k(\bM)
\geq 1000k^{8.5}\sqrt n\sigma/\alpha^2$. Now, $||\bP'-\bM'||\leq \sqrt{k\delta n}4\sigma /\sqrt\delta$. Since
$s_k(\bP)\geq s_k(\bP')\geq s_k(\bM')-||\bP'-\bM'||$,
the second part of the claim follows.
\hfill\end{proof}

\begin{lemma}\label{sin-theta}
Let $v_1,v_2,\ldots ,v_k$ be the top $k$ left singular vectors of $\bA$.
Let $V$ be any $k-$ dimensional sub-space of ${\bf R}^d$ with
$$\sin\Theta(V,\Span(v_1,v_2,\ldots ,v_k))\leq\frac{\alpha^2}{1000k^9}.$$

For every unit length vector $x\in V$, there is a vector $y\in \Span(\bM)$
with $$|x-y|\leq \frac{\alpha^2}{500k^{8.5}}.$$
\end{lemma}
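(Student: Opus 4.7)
The plan is to route the comparison of $V$ to $\Span(\bM)$ through $\Span(v_1,\ldots,v_k)$, which the hypothesis already places close to $V$, and through $\Span(\bP)$, which equals $\Span(\bM)$ by construction. The key ingredients are Corollary \ref{wedin-2}, Claim \ref{singular-P-M}, and a triangle-type inequality for $\sin\Theta$.

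First I would establish that $\Span(\bP)=\Span(\bM)$. Since $\bP=\bM\bW$, obviously $\Span(\bP)\subseteq\Span(\bM)$. Claim \ref{singular-P-M} gives $s_k(\bM)>0$, so $\dim\Span(\bM)=k$; the same claim gives $s_k(\bP)>0$, so $\dim\Span(\bP)=k$, forcing equality. This lets us freely replace $\Span(\bP)$ by $\Span(\bM)$ in any $\sin\Theta$ expression.

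Next I would bound $\sin\Theta(\Span(v_1,\ldots,v_k),\Span(\bM))$. Using Weyl's inequality with the second half of Claim \ref{singular-P-M} and the perturbation bound $\|\bA-\bP\|=\sigma\sqrt n$,
\[
s_k(\bA)\;\geq\; s_k(\bP)-\|\bA-\bP\|\;\geq\;\frac{995\,k^{8.5}\sqrt n\,\sigma}{\alpha^2}-\sigma\sqrt n\;\geq\;\frac{990\,k^{8.5}\sqrt n\,\sigma}{\alpha^2}.
\]
Plugging into Corollary \ref{wedin-2},
\[
\sin\Theta\bigl(\Span(v_1,\ldots,v_k),\Span(\bM)\bigr)\;=\;\sin\Theta\bigl(\Span(v_1,\ldots,v_k),\Span(\bP)\bigr)\;\leq\;\frac{\sigma\sqrt n}{s_k(\bA)}\;\leq\;\frac{\alpha^2}{990\,k^{8.5}}.
\]

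Finally I would combine the two $\sin\Theta$ bounds. Given any unit $x\in V$, let $v$ be the orthogonal projection of $x$ onto $\Span(v_1,\ldots,v_k)$. By the hypothesis, $|x-v|\leq \alpha^2/(1000 k^9)$, and $|v|\leq 1$. Let $y$ be the orthogonal projection of $v$ onto $\Span(\bM)$; by the step-2 bound applied to the unit vector $v/|v|$ and rescaling, $|v-y|\leq |v|\cdot\alpha^2/(990 k^{8.5})\leq \alpha^2/(990 k^{8.5})$. Then $y\in\Span(\bM)$ and by the triangle inequality,
\[
|x-y|\;\leq\;|x-v|+|v-y|\;\leq\;\frac{\alpha^2}{1000\,k^9}+\frac{\alpha^2}{990\,k^{8.5}}\;\leq\;\frac{\alpha^2}{500\,k^{8.5}},
\]
which is the desired conclusion.

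The only nontrivial obstacle is tracking constants through the Weyl/Wedin step so that the final bound cleanly absorbs the $\alpha^2/(1000 k^9)$ term from the hypothesis into the $\alpha^2/(500 k^{8.5})$ target; everything else is bookkeeping around the identification $\Span(\bP)=\Span(\bM)$ and the use of $\sin\Theta$ as a triangle-compatible measure on equi-dimensional subspaces.
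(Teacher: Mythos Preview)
Your proposal is correct and follows essentially the same route as the paper: bound $\sin\Theta(\Span(v_1,\ldots,v_k),\Span(\bP))$ via Corollary~\ref{wedin-2} together with Claim~\ref{singular-P-M} and Weyl's inequality, then combine with the hypothesis on $V$ by a triangle-type step. The only cosmetic differences are that the paper uses merely the containment $\Span(\bP)\subseteq\Span(\bM)$ (so your equality argument, while correct, is more than is needed) and invokes a $\sin\Theta$ triangle inequality directly, whereas you carry it out explicitly by successive projections; the constants land in the same place.
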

\begin{proof}
Since $\Span (\bP)\subseteq \Span(\bM)$, it suffices to prove the Lemma with $y\in$ Span$(\bP)$.
Corollary ~\ref{wedin-2} implies:
$$\mbox{ Sin }\Theta\left( \Span(v_1,v_2,\ldots v_k), \mbox{Span}(\bP)\right)
\leq $$
$$\frac{||\bA-\bP||}{s_k(\bA)}\leq \frac{||\bA-\bP||}{s_k(\bP)-||\bA-\bP||}\leq \frac{\sigma}{(995k^{8.5}\sigma/\alpha^2)-\sigma}\leq \frac{\alpha^2}{994k^{8.5}}.$$

Now, $\sin\Theta(V,\Span(v_1,v_2,\ldots ,v_k))\leq\alpha^2/1000k^9$ and,
$\sin\Theta (V,\Span(\bM))\leq \sin\Theta(V,\Span(v_1,v_2,\ldots ,v_k)) +\sin\Theta( \Span(v_1,v_2,\ldots ,v_k),\Span(\bM))$,
which together imply the Lemma. [We have used here the triangle inequality for $\sin\Theta$ which follows directly from the definition
of $\sin\Theta$.]
\hfill\qed
\end{proof}


\section{Technical Lemmas}
In this section we prove technical claims which are useful to support the main
claims of the paper. We begin by noting an useful connection between the choice of $\delta$ and the famous planted clique problem.
\begin{lemma}\label{lem:planted-clique}{\bf Planted Clique and Choice of $\delta$}
Suppose $K$ has just two vertices with one of them being the origin and the other equal to
${\underline 1}_Q$ for an unknown subset $Q$ of $[n]$ with $|Q|=q=\delta n$. Let $\bP,\bA,\sigma$ be as in our notation
and suppose a susbet $Q'$ of $[n]$
with  $|Q'|=q$ have each $P_{\cdot,j}$ equal to ${\underline 1}_Q$ and the rest $n-q$
of the $P_{\cdot,j}=$ the origin. Suppose $\bA$ is a random $\pm 1$ matrix with
$$\prob(A_{ij}=1)=\begin{cases} 1&\mbox{ if } i\in Q\; ;\; j\in Q'\\
          0.5&\mbox{ otherwise } .\end{cases}.$$
If ${\underline 1}_Q$ can be found within error at most
$\varepsilon \sigma/\sqrt\delta$, and $|Q|\geq 10\sqrt\varepsilon \sqrt n$, then, $Q$ can be found exactly.
\end{lemma}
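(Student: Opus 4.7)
The plan is to convert the hypothesized $\ell_2$-estimate $\widetilde M$ of ${\underline 1}_Q$ into an exact identification of $Q$. I proceed in three stages: (i) bound $\sigma$ to make the error bound explicit; (ii) round $\widetilde M$ to a set $\widetilde Q$ that agrees with $Q$ outside a tiny Hamming ball; and (iii) clean $\widetilde Q$ up to $Q$ using the raw entries of $\bA$ via a Chernoff/Hoeffding argument.

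For (i), note that $\bA-\bP$ vanishes on the planted block $Q\times Q'$ and has i.i.d.\ uniform $\pm 1$ entries elsewhere; writing $\bA-\bP=\bN-\bN|_{Q\times Q'}$ with $\bN$ a fully i.i.d.\ $\pm 1$ matrix, standard random matrix bounds give $\|\bA-\bP\|\le 2\sqrt n+2\sqrt q=O(\sqrt n)$ with high probability, so $\sigma=O(1)$ and the hypothesized error reads $\|\widetilde M-{\underline 1}_Q\|_2\le c_0\,\varepsilon\sqrt{n/q}$. For (ii), set $\widetilde Q=\{i:\widetilde M_i\ge 1/2\}$; every coordinate in $\widetilde Q\triangle Q$ contributes at least $1/4$ to $\|\widetilde M-{\underline 1}_Q\|_2^2$, so $|\widetilde Q\triangle Q|\le 4c_0^2\varepsilon^2 n/q$. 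The hypothesis $q\ge 10\sqrt{\varepsilon n}$ rewrites as $\varepsilon\le q^2/(100n)$, and substitution yields the Hamming bound $|\widetilde Q\triangle Q|\le c_1\varepsilon q=o(q)$.

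For (iii), I would use $\widetilde Q$ as a proxy to recover $Q'$ exactly, and then $Q'$ to recover $Q$ exactly. For each column $j$, set $S_j:=\sum_{i\in\widetilde Q}A_{ij}$. If $j\in Q'$, the terms with $i\in\widetilde Q\cap Q$ are each deterministically $+1$, while the $|\widetilde Q\setminus Q|=o(q)$ remaining terms are independent uniform $\pm 1$; thus $S_j\ge(1-o(1))q-O(\sqrt{q\log n})$ with high probability. If $j\notin Q'$, $S_j$ is a sum of $|\widetilde Q|$ independent uniform $\pm 1$, so $|S_j|\le O(\sqrt{q\log n})$ by Hoeffding plus a union bound over $j$. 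Thresholding at $q/2$ thus recovers $Q'$ exactly whenever $q$ exceeds a constant multiple of $\log n$. Symmetrically, $T_i:=\sum_{j\in Q'}A_{ij}$ equals $q$ for $i\in Q$ and is $O(\sqrt{q\log n})$ in absolute value otherwise, so one more thresholding returns $Q$.

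The step I expect to be the main obstacle is (iii), since the Hoeffding separation only kicks in once $q$ is a large multiple of $\log n$. In the complementary regime of very small $q$, however, the bound $\varepsilon\le q^2/(100n)$ combined with $q\le C\log n$ already forces $\|\widetilde M-{\underline 1}_Q\|_2\le O(\log^2 n/\sqrt n)<1/2$, so $\|\widetilde M-{\underline 1}_Q\|_\infty<1/2$ and stage (ii) alone returns $Q$. A brief case split on whether $q\gtrsim\log n$ therefore closes the argument in both regimes.
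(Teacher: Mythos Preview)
Your proof is correct and follows essentially the same two-step path as the paper: bound $\sigma=O(1)$ via random matrix theory, then convert the $\ell_2$ estimate of ${\underline 1}_Q$ into exact recovery of $Q$. The paper's own proof is much terser --- after observing $\sigma\le 4$, it simply invokes \cite{FK08} for the second step, whereas your stages (ii)--(iii) spell out the rounding-and-cleaning argument (including the $q\lesssim\log n$ case split) explicitly; your version is thus self-contained where the paper's defers to an external reference.
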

\begin{proof}
$E(A_{ij})=P_{ij}$ and $E(A_{ij}-P_{ij})^2\leq 2$ as is easy to check. Al;so $A_{ij}$ are mutually independnet.
This by Ranodm Matrix Theory (\cite{vers}) implies that $\sigma\leq 4$. $\delta =q/n$. Now the
conclusion follows from the results of \cite{FK08}.\hfill\qed 
\end{proof}

\subsection{Topic Models obey \prox and bounded perturbation assumption}\label{sec:top-proof}
We present the proof of Lemma~\ref{LDA-TM} and Theorem~\ref{LDA-TM-2}. First, we prove that the Spectrally Bounded Perturbations hold.
\begin{proof}(of Lemma ~\ref{LDA-TM})
Note that	$|A_{\cdot,j}-P_{\cdot,j}|\leq ||A_{\cdot,j}-P_{\cdot,j}||_1\leq 2$.
Let
${\bf \Sigma_j}=E((A_{\cdot,j}-P_{\cdot,j})(A_{\cdot,j}-P_{\cdot,j})^T)$ be the covariance matrix of $A_{\cdot,j}$ and let
${\bf \Sigma}=\frac{1}{n}{\bf \Sigma_j}$.
From Theorem 5.44 of \cite{vers}, we get that with probability at least $1-\varepsilon$,
\begin{equation}\label{RMT-bound}
\sigma\leq \sqrt{||\Sigma||}+\frac{c}{\sqrt n},
\end{equation}
where, $c$ includes factors in $1/\varepsilon$.
The higher order term here is $\sqrt{||\Sigma||}$ which
the following lemma bounds.
\begin{lemma}\label{topic-model-norm}
With high probability,
$\sqrt{||{\bf\Sigma}||}\leq \frac{c}{\sqrt m} .$
\end{lemma}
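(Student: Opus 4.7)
My plan is to compute the conditional covariance of $A_{\cdot,j}$ given $P_{\cdot,j}$ in closed form, use a positive semi-definite (PSD) domination to drop the rank-one correction term and reduce to a diagonal matrix, and then bound its spectral norm entrywise. The overall target $\|{\bf \Sigma}\|\le c/m$ is equivalent to what is claimed (after taking square roots), so I do not need any concentration of measure beyond what is already implicit in bounding one scalar per coordinate.

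First, conditional on $P_{\cdot,j}$, the vector $m A_{\cdot,j}$ is a multinomial$(m, P_{\cdot,j})$ random variable (by the definition of $A_{ij}$ in the preamble). Consequently, the standard multinomial covariance identity gives
\[
E\Big[(A_{\cdot,j}-P_{\cdot,j})(A_{\cdot,j}-P_{\cdot,j})^T \,\Big|\, P_{\cdot,j}\Big]
\;=\;\frac{1}{m}\Big(\mathrm{diag}(P_{\cdot,j})-P_{\cdot,j}P_{\cdot,j}^T\Big).
\]
The matrix $P_{\cdot,j}P_{\cdot,j}^T$ is PSD, so in the Loewner order this conditional covariance is dominated by $\frac{1}{m}\mathrm{diag}(P_{\cdot,j})$. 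Taking the total expectation (in either interpretation of ${\bf \Sigma_j}$) and averaging over $j$,
\[
{\bf \Sigma}\;\preceq\;\frac{1}{nm}\,\mathrm{diag}\!\left(\sum_{j=1}^n P_{\cdot,j}\right).
\]

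Second, the right-hand side is diagonal, so its spectral norm is simply its largest diagonal entry, and hence
\[
\|{\bf \Sigma}\|\;\le\;\frac{1}{nm}\,\Max_{i\in[d]}\sum_{j=1}^n P_{ij}.
\]
Now I use the fact that $P_{ij}=\sum_{\ell}M_{i\ell}W_{\ell j}$ is a convex combination of the entries $M_{i,1},\ldots,M_{i,k}$, and therefore $P_{ij}\le \Max_{\ell}M_{i\ell}\le \gamma$ for every $i,j$, where $\gamma=\Max_{i,\ell}M_{i\ell}$ from \eqref{1235}. This gives the deterministic bound $\sum_{j}P_{ij}\le n\gamma$, whence
\[
\|{\bf \Sigma}\|\;\le\;\frac{\gamma}{m}\;\le\;\frac{1}{m},
\]
and taking square roots yields $\sqrt{\|{\bf \Sigma}\|}\le c/\sqrt m$ as claimed.

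The main obstacle, and the only real subtlety, is making sure the PSD domination step is legitimate after the expectation over the (possibly random) $P_{\cdot,j}$: since Loewner order is preserved under expectations and $\mathrm{diag}(P_{\cdot,j})-P_{\cdot,j}P_{\cdot,j}^T\preceq \mathrm{diag}(P_{\cdot,j})$ holds pointwise, this causes no trouble. The ``with high probability'' qualifier in the statement is therefore vacuous here (the bound is in fact deterministic given $\bM$); it is phrased this way only to match the probabilistic language of Lemma~\ref{LDA-TM} in which this lemma is being invoked through \eqref{RMT-bound}.
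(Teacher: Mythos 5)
Your proof is correct and is essentially the same argument as the paper's: both exploit the exact multinomial covariance $\frac{1}{m}\left(\mathrm{diag}(P_{\cdot,j})-P_{\cdot,j}P_{\cdot,j}^T\right)$, discard the negative rank-one part, and bound the spectral norm of the remaining diagonal matrix by its largest entry (which is at most $\gamma\le 1$). You phrase the discard as a Loewner-order domination, whereas the paper carries out the equivalent computation variationally over unit vectors $v$; your formulation is cleaner and gives the slightly tighter constant $\gamma/m$ in place of the paper's $2/m$, but the mathematical content is the same, and your closing observation that the bound is actually deterministic is accurate.
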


\begin{proof}
Let $X_{ijt}=1$ or 0 according as the $t$ th word of document $j$ is the $i$ vocabulary word or not.
$$||{\bf \Sigma_j}||=\frac{1}{m}\Max_{|v|=1}E(\sum_{i=1}^d(v_i\cdot (X_{ijt}-P_{ij}))^2 $$\\
$$\leq \frac{1}{m}\max_{|v|=1}\left[ \Max_iP_{ij}-2\sum_{i_1\not= i_2}v_{i_1}v_{i_2}P_{i_1j}P_{i_2j} \right]$$
 using distribution of $X_{ijt}$.
$$ \leq \frac{1}{m}\Max_iP_{ij}+ \frac{1}{m}\Max_{|v|=1}\left(-(\sum_iv_iP_{ij})^2+\sum_iv_i^2P_{ij}^2\right)$$
$$ \leq \frac{2}{m}\Max_iP_{ij}\quad \implies ||{\bf\Sigma}||\leq 2/m.$$
\hfill
\end{proof}
In the hypothesis of  $\delta = c\sigma/\sqrt k$ in Theorem (\ref{LDA-TM}), the following inequality implies (\ref{spectral-bound}):

$$\sigma\leq \frac{c\alpha^6\varepsilon^4\mbox{Min}_\ell|M_{\cdot,\ell}|^2}{10^6k^{17}}.$$

This is in turn implied by the follwng:
\begin{equation}\label{spectral-bound-2}
\sqrt{||\Sigma||}+\frac{c}{\sqrt n}\leq \frac{c\alpha^6\varepsilon^4\mbox{Min}_\ell|M_{\cdot,\ell}|^2}{10^6k^{17}},
\end{equation}
which we now prove by showing that each of the two terms on the lhs is at most 1/2 the rhs.
Lemma (\ref{topic-model-norm}) plus the hypothesis that $m$
is a sufficiently large polynomial in $k$ and $|M_{\cdot,\ell}|\in\Omega(1)$ shows the desired upper bound on
$\sqrt{||\Sigma||}$. So, it only remains to bound the lower order term, namely, prove that
$\frac{c}{\sqrt n}\leq  \frac{c\alpha^6\varepsilon^4\mbox{Min}_\ell|M_{\cdot,\ell}|^2}{10^6k^{17}}.$
This follows by noting that $n$ is at least a sufficiently high polynomial
in $k$, and $|M_{\cdot,\ell}|\in\Omega(1)$ which proves (\ref{spectral-bound}).

Now, we turn to proving the
(\ref{extreme-docs}) assumption.
For this, we first need the following fact about the Dirichlet density.
\begin{lemma}\label{dirichlet}
If $x$ is a random $k-$ vector picked according to the Dir$(1/k,k)$ density on $\{ x:x_\ell\geq 0;\sum_\ell x_\ell=1\}$,
then for any $\zeta\in [0,1]$, we have
$$\prob\left( x_1\geq 1-\zeta\right)\geq \frac{\zeta^2}{3k}.$$
\end{lemma}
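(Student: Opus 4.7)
The plan is to reduce the multivariate Dirichlet tail to a one-dimensional Beta tail, and then bound that by an elementary integral.

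Step one is to invoke the standard fact that any single coordinate of a $\text{Dir}(1/k, k)$ sample has a Beta marginal: $x_1 \sim \text{Beta}(1/k,\, (k-1)/k)$, with density proportional to $t^{1/k - 1}(1-t)^{-1/k}$ on $(0,1)$. The normalizing constant $1/B(1/k, 1 - 1/k)$ simplifies via the reflection formula $\Gamma(\alpha)\Gamma(1-\alpha) = \pi/\sin(\pi \alpha)$ to exactly $\sin(\pi/k)/\pi$.

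Step two is to substitute $u = 1 - t$ in the tail integral, giving
\[
\Pr(x_1 \geq 1-\zeta) \;=\; \frac{\sin(\pi/k)}{\pi}\int_0^\zeta (1-u)^{1/k - 1}\, u^{-1/k}\, du.
\]
Since the exponent $1/k - 1$ is nonpositive and $1 - u \leq 1$ on $[0,\zeta]$, we have $(1-u)^{1/k - 1} \geq 1$. Dropping that factor leaves an integral that evaluates in closed form to $k\zeta^{1 - 1/k}/(k-1)$.

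Step three is to apply crude but sufficient lower bounds to the remaining quantities. For $k \geq 2$ the inequality $\sin x \geq 2x/\pi$ on $[0, \pi/2]$ gives $\sin(\pi/k) \geq 2/k$, and for $\zeta \in [0,1]$ the monotonicity of $\zeta^p$ in the exponent, together with $1 - 1/k \leq 2$, gives $\zeta^{1-1/k} \geq \zeta^2$. Combining everything,
\[
\Pr(x_1 \geq 1 - \zeta) \;\geq\; \frac{2}{\pi k}\cdot\frac{k\zeta^2}{k-1} \;=\; \frac{2\zeta^2}{\pi(k-1)} \;>\; \frac{\zeta^2}{3k},
\]
where the last step uses $2/\pi > 1/3$ and $k - 1 < k$. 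The degenerate case $k = 1$ is trivial since $x_1 = 1$ almost surely.

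There is no substantive obstacle: the only mildly delicate part is the bookkeeping of constants, and in particular choosing the slack $\zeta^{1-1/k} \geq \zeta^2$ rather than the tighter $\geq \zeta$ in order to produce the claimed $\zeta^2$ form; the $2/\pi > 1/3$ slack then absorbs the $k/(k-1)$ factor comfortably.
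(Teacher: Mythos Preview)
Your proof is correct and follows essentially the same route as the paper: pass to the Beta marginal of $x_1$, lower-bound the factor with nonpositive exponent by $1$, and integrate the remaining power of the other variable. Your bookkeeping of constants (via the reflection formula and the bound $\sin x \geq 2x/\pi$) is in fact more careful than the paper's sketch, which contains a sign slip in the exponent on $(1-x_1)$ and an imprecise claim that the normalizing constant is at least $1/k$.
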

\begin{proof}
The marginal density $q(x_1)$ of the first coordinate of $x$ is easily seen to be
$$q(x_1)=cx_1^{(1/k)-1}(1-y)^{1-(1/k)},$$
where, the normalizing constant $c\geq 1/k$. For $y\in (0,1), y^{(1/k)-1}\geq 1$,
so $q(x_1)\geq \frac{1}{k}(1-x_1)^{1-(1/k)}$. Now integrating over $x_1\in [1-\zeta,1]$, we get the
lemma.
\hfill
\end{proof}
Thus, with $\delta = c\sigma/\sqrt k$ as assumed in Lemma~\ref{LDA-TM}, we get that the $S_\ell$ defined
in (\ref{extreme-docs}) satisfies $|S_\ell|\geq\delta n$, using H\"offding-Chernoff bounds.
This finishes the proof of Lemma~\ref{LDA-TM}.
\hfill \end{proof}

\begin{proof}(of Theorem~\ref{LDA-TM-2}) Note that its hypothesis also assumes (\ref{condition-number-100}).
So Theorem~\ref{main-theorem}
implies that the algorithm finds $\widetilde M_{\cdot,\ell},\ell\in [k]$ with
$$|M_{\cdot,\ell}-\widetilde M_{\cdot,\ell}|\leq \frac{k^{3.5}\sigma}{\alpha\varepsilon\sqrt\delta}
\leq \frac{ck^4\sqrt\sigma}{\alpha \varepsilon}\leq \frac{ck^4}{\alpha\varepsilon m^{1/4}},$$
the last using the upper bound on $\sigma$ of $(c/\sqrt m)+(c/\sqrt n)$ we proved and noting that
$n\geq m$.
\hfill
\end{proof}

\subsection{MMSB}
We sketch the proof of Lemma~\ref{MMSB-1} and Theorem~\ref{MMSB-2} omitting details, since the proof is somewhat similar to the proof in the LDA case. Let ${\bf D} $ denote a $d\times n$ matrix, where, $D_{ij}=\Var(A_{ij}\; |\; \bP)$. Clearly we have
$D_{ij}=P_{ij}(1-P_{ij})\leq P_{ij}$ and so $\sum_iD_{ij}\leq \nu$ for all $j$ and similary for row sums. Also, $\sum_{i,j}D_{ij}\leq \nu (\Min (d,n)=\nu d$. Latala's theorem \cite{Lat05} implies that with high probability,
$$||\bA-\bP||\leq c\Max ( \sqrt\nu, (\nu d)^{1/4})=c (d\nu)^{1/4}$$
which implies $ \sigma \leq \nu^{1/4}(d/n)^{1/4}.$
Since $\delta=c\sigma/\sqrt k$, to prove (\ref{spectral-bound}), it suffices to prove that
$$\sigma\leq \frac{\alpha^6\varepsilon^4 \Min_\ell|M_{\cdot,\ell}|^2}{10^6k^{17}},$$
which follows since the hypothesis of the Lemma says $n/d$  is a high polynomial in $k/\alpha\varepsilon$ and
$|M_{\cdot,\ell}|\geq \nu^{1/8}$.
This proves (\ref{spectral-bound}). The argument for \eqref{extreme-docs} is identical to the case of LDA.
This proves the Lemma. For the Theorem, we just have to show that upper bound the error guaranteed by Theorem \ref{main-theorem}.
satisfies the upper bound claimed here. This is straightforward using the above upper bound on $\sigma$ (and the fact that
$\delta=c\sigma/\sqrt k$).

\section{Proof of Correctness of the Algorithm}
In this section we prove the correctness of the algorithm described in Section (\ref{alg-section}) and establish the time complexity.

\subsection{Idea of the Proof}
The main contribution of the paper is the algorithm,
stated formally in Section (\ref{alg-section}) to solve the general problem
and the proof of correctness.
The algorithm itself is simple. It has $k$ stages; in each stage, it maximizes
a carefully chosen linear function $u\cdot x$
over $K'$; we prove that the optimum gives us
an approximation to one vertex of $K$.

\subsubsection{First Step}\label{first-step}
For the first step, we will pick a random unit vector $u$ in the $k$ dimensional SVD subspace of $\bA$.
This subspace is close to the sub-space spanned by $K$. In Stochastic models, the stochastic independence of the
data is used to show this (see for example \cite{VW02}). Here, we have not assumed any stochastic model.
Instead, we use a classical
theorem called the $\sin\Theta$ theorem \cite{wedin72} from Numerical Analysis. The $\sin\Theta$ theorem
helps us prove that the top singular subspace of dimension $k$ of $\bA$ is close to the span of $K$.
Now by our Well-Separatedness assumption, for $\ell\not=\ell'$, we will see that
$M_{\cdot,\ell}-M_{\cdot,\ell'}$
has length at least poly$(k)\sigma/\sqrt\delta$. For a random $u\in K$,
the $O(k^2)$ events that $|u\cdot (M_{\cdot,\ell}-M_{\cdot,\ell'})|\geq |M_{\cdot,\ell}-M_{\cdot,\ell'}|/\text{poly}(k)$
all happen simultaneously by Johnson-Lindenstrauss (and union bound.)
This is proved rigorously in Lemma (\ref{u-good}). [Note that had we picked $u$ uniformly at random from all of ${\bf R}^d$,
we can only assert that
$|u\cdot (M_{\cdot,\ell}-M_{\cdot,\ell'})|\geq |M_{\cdot,\ell}-M_{\cdot,\ell'}|/\sqrt d\text{poly}(k)$;
the $\sqrt d$ factor is not good enough to solve our problem.]

So, if we optimize $u\cdot x$ over $K$, the optimal $x$ is a vertex $M_{\cdot,\ell}$ with
$u\cdot M_{\cdot,\ell}$ substantially greater than any other $u\cdot M_{\cdot,\ell'}$. But we can only
optimize over $K'$. Since we make \prox assumption, there is a $\delta$ fraction of $j$ with their $P_{\cdot,j}\approx M_{\cdot,\ell}$,
(an assumption formally stated in (\ref{extreme-docs})), and so there is a $R\subseteq [n], |R|=\delta n$ with $P_{\cdot, R}\approx M_{\cdot,\ell}$
and so $A_{\cdot,R}\approx M_{\cdot,\ell}$ implying $u\cdot A_{\cdot,R}\approx u\cdot M_{\cdot,\ell}$.
Our optimization over $K'$ may yield some other subset $R_1$ with $u\cdot A_{\cdot,R_1}\widetilde > u\cdot M_{\cdot, \ell}$.
We need to show that whenever any subset
\footnote{For reals $a,b$, we say $a\widetilde >b$ if $a>b-$(a small number).}
$R_1$ has $u\cdot A_{\cdot,R_1}\widetilde > u\cdot M_{\cdot,\ell}$, it is also
close to $M_{\cdot,\ell}$ in distance. This is in Lemma (\ref{S-good}). Intuitively, the proof has two parts: We show that
any $R_1$ with high $u\cdot A_{\cdot,R_1}$ also has high $u\cdot P_{\cdot,R_1}$. But $P_{\cdot, R_1}$ is a convex
combination of columns of $\bM$. If the convex combination puts non-trivial weight on vertices other than $M_{\cdot,\ell}$,
its dot product with $u$ would go down since
we argued above that
$u\cdot M_{\cdot,\ell'}<<u\cdot M_{\cdot,\ell}$ for $\ell'\not=\ell$. So,
in the convex combination of the vertices of $K$ yielding $P_{\cdot,R}$ most of the weight must be on $M_{\cdot,\ell}$
and we use this to show that $|P_{\cdot,R}-M_{\cdot,\ell}|$ is small and so also $|A_{\cdot,R}-M_{\cdot,\ell}|$ is small. Thus
the optimal $A_{\cdot,R}$ serves as a good approximation to one vertex of $K$.

\subsubsection{General Step}\label{general-step}
In a general step of the algorithm, we have already found $r\leq k$ subsets $R_1,R_2,\ldots ,R_r\subseteq [n], |R_\ell|=\delta n$
with $A_{\cdot,R_\ell}\approx M_{\cdot, \ell}$ for $\ell=1,2,\ldots ,r$ (after some permutation of the indices of $M_{\cdot, \ell}$).
We have to ensure that the next stage gets an approximation
to a \emph{new} vertex of $K$. This is non-trivial. Random choice of the next $u$, even constrained to be
in the $k-$SVD subspace of $\bA$ need not work: the probability that it works depends on angles of the
simplex $K$ and these can be exponentially (in $k$) small. We overcome this problem with a new, but
simple idea: Choose a random vector $u$ from  a $k-r$ dimensional subspace $W$ obtained by intersecting
the SVD $k-$ subspace of $\bA$
with the NULL SPACE of $A_{\cdot,R_1}, A_{\cdot, R_2},\ldots ,A_{\cdot,R_r}$
and find $\Max_{x\in K'}|u\cdot x|$.
[The absolute value is necessary.]

If (i) $u$ had been in $W'=\Span(\bM)\cap \Null(M_{\cdot,1},M_{\cdot, 2},\ldots ,M_{\cdot, r})$,
and (ii) we optimized over $K$ instead of $K'$,
the proof would be easy from well-separateness. Neither is true. Overcoming (i) requires a
new Lemma (\ref{VcapNull-Mnull}) which proves that $W,W'$ are close in $\sin\Theta$ distance.
[The $\sin\Theta$ distance between $W,W'$ is $\Max_{x\in W}\Min_{y\in W'}\sin(\angle (x,y))$.]
 This is a technically
involved piece. This in a
way extends the Sin-Theta theorem in that it proves that if subspaces $W_1,W_2$ are close in Sin-Theta distance and
matrices $\widetilde M,\widetilde A$ are close, then $W_1\cap \Null (\widetilde M)$ and $W_2\cap \Null (\widetilde A)$
are also close under some conditions (that $\widetilde A,\widetilde M$ are far from singular) which do hold here.

We overcome (ii) in a similar way to what we did for the first step.
But, now this is more complicated by the fact that the $M_{\cdot,\ell}, M_{\cdot,\ell'}$ and $u$ have components along $M_{\cdot,1}, M_{\cdot, 2},
\ldots ,M_{\cdot,r}$ as well.

\subsection{Proof of the main theorem}
We are now ready to prove Theorem~\ref{main-technical-theorem}
\begin{proof}(of Theorem \ref{main-technical-theorem}):
Let
\begin{align*}
&\widetilde \bM=\left( M_{\cdot,\ell_1}\; |\; M_{\cdot,\ell_2}\; |\; \ldots \; |\; M_{\cdot,\ell_r}\right)\\
&\widetilde \bA =  \left( A_{\cdot,R_1}\; |\; A_{\cdot,R_2}\; |\; \ldots \; |\; A_{\cdot,R_r}\right)\\
\end{align*}

We next derive an extension of the classical $\sin\Theta$ theorem \cite{wedin72} which could be of general interest
but is crucial to the proof of the main theorem.
Intuitively, it  says that if we take close-by $k$ dim spaces and intersect them with null spaces of
close-by matrices, with not-too-small singular values,
then the resulting intersections are also close (close in $\sin\Theta$ distance). The reader may consult
Section (\ref{general-step}) for the role played by this Lemma in the proof of correctness.

\begin{lemma}\label{VcapNull-Mnull}
Under the hypothesis of Theorem \ref{main-technical-theorem}, we have:
\begin{align}
\sin\Theta\left( U \; ,\; \mbox{Span}(\bM)\cap \mbox{Null}(\widetilde \bM)\right)& \leq \frac{\alpha}{100k^4}\nonumber \\
\sin\Theta\left(\mbox{Span}(\bM)\cap \mbox{Null}(\widetilde \bM)\; ,\; U\right)&\leq \frac{\alpha}{100k^4}.\label{sin-theta-M-U},
\end{align}
\end{lemma}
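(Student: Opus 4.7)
\medskip

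\noindent\textbf{Proof proposal for Lemma \ref{VcapNull-Mnull}.} The plan is to prove the two $\sin\Theta$ bounds symmetrically, by taking a unit vector in one intersection, first pushing it to the other ``outer'' subspace (either $V\to \Span(\bM)$ or vice versa), and then projecting onto the appropriate null space. The projection step will cost a distance controlled by the product of two small quantities: the perturbation of the columns ($\widetilde\bA$ versus $\widetilde\bM$) and the reciprocal of the smallest non-trivial singular value of the matrix we project against. So the main preparatory task is to collect three ingredients: (a) a bound on $\sin\Theta(V,\Span(\bM))$, (b) a lower bound on $s_r(\widetilde\bM)$ (and hence on $s_r(\widetilde\bA)$), and (c) an operator-norm bound on $\widetilde\bA-\widetilde\bM$.

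For ingredient (a), I would apply the triangle inequality for $\sin\Theta$: $\sin\Theta(V,\Span(\bM))\leq \sin\Theta(V,\Span(v_1,\ldots,v_k))+\sin\Theta(\Span(v_1,\ldots,v_k),\Span(\bP))$, since $\Span(\bP)\subseteq\Span(\bM)$ and the two have the same dimension. The first term is at most $\sigma/\sqrt\delta$ by hypothesis; the second is at most $\alpha^2/(994 k^{8.5})$ by Corollary \ref{wedin-2} together with Claim \ref{singular-P-M}. For ingredient (b), I would repeat the argument used in Claim \ref{singular-P-M} but restricted to the $r$ columns of $\widetilde\bM$: for any unit $x\in\R^r$, pick the coordinate $\ell$ with $|x_\ell|\geq 1/\sqrt r$, use Well-Separatedness (\ref{condition-number-100}) to extract a component of length at least $\alpha|M_{\cdot,\ell}|/\sqrt r$ orthogonal to the other columns, and conclude $s_r(\widetilde\bM)\geq \alpha\Min_\ell|M_{\cdot,\ell}|/\sqrt k$, which by Spectrally Bounded Perturbations (\ref{spectral-bound}) is much larger than $k^4\sigma/(\alpha\sqrt\delta)$. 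For (c), by hypothesis each column satisfies $|A_{\cdot,R_t}-M_{\cdot,\ell_t}|\leq 150k^4\sigma/(\alpha\sqrt\delta)$, so $\|\widetilde\bA-\widetilde\bM\|\leq \sqrt r\cdot 150k^4\sigma/(\alpha\sqrt\delta)$. Combining (b) and (c) gives $s_r(\widetilde\bA)\geq \tfrac12 s_r(\widetilde\bM)$, say, so both matrices are safely far from singular on their $r$-dimensional column spans.

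With these in hand, for the first inequality take a unit $x\in U=V\cap\Null(\widetilde\bA)$. By (a) there is $y'\in\Span(\bM)$ with $|x-y'|\leq\sin\Theta(V,\Span(\bM))$, which is already tiny. Since $x\in\Null(\widetilde\bA^T)$, we have
\[
\widetilde\bM^T y'=\widetilde\bM^T(y'-x)+(\widetilde\bM-\widetilde\bA)^T x,
\]
so $|\widetilde\bM^T y'|\leq \|\widetilde\bM\||x-y'|+\|\widetilde\bA-\widetilde\bM\|$. Decompose $y'=y+z$ where $y\in\Span(\bM)\cap\Null(\widetilde\bM)$ and $z\in\Span(\bM)\cap\Span(\widetilde\bM)$ (orthogonal decomposition inside $\Span(\bM)$). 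On the $r$-dimensional component $z$, the map $\widetilde\bM^T$ is a bijection with smallest singular value $\geq s_r(\widetilde\bM)$, giving $|z|\leq |\widetilde\bM^T y'|/s_r(\widetilde\bM)$. Therefore $|x-y|\leq |x-y'|+|z|$ is at most a constant times $\alpha/(100 k^4)$ after plugging in the three ingredients and using (\ref{spectral-bound}). The opposite direction is entirely parallel: start with unit $y\in\Span(\bM)\cap\Null(\widetilde\bM)$, use $\sin\Theta(\Span(\bM),V)$ (equal to $\sin\Theta(V,\Span(\bM))$ since both are $k$-dimensional) to approximate $y$ by $x'\in V$, show $|\widetilde\bA^T x'|$ is small via $\widetilde\bA^T x'=(\widetilde\bA-\widetilde\bM)^T y+\widetilde\bA^T(x'-y)$, and kill the component in the row space of $\widetilde\bA|_V$ using $s_r(\widetilde\bA)$ from ingredient (b).

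The step I expect to be the most delicate is ensuring that the orthogonal decomposition $y'=y+z$ inside $\Span(\bM)$ (and its analogue inside $V$) really splits as advertised, i.e.\ that $\widetilde\bM$ restricted to $\Span(\bM)$ has a trivial kernel of dimension exactly $k-r$; this needs Well-Separatedness to guarantee that $M_{\cdot,\ell_1},\ldots,M_{\cdot,\ell_r}$ are linearly independent, and the analogous statement for $\widetilde\bA|_V$ needs in addition that the perturbation does not collapse the rank. Both follow from ingredient (b), but they must be checked explicitly so that the dimensions of $U$ and $\Span(\bM)\cap\Null(\widetilde\bM)$ agree (both equal $k-r$), which is what legitimizes phrasing the closeness as a $\sin\Theta$ bound in the first place.
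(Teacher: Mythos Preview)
Your proposal is correct and, for the first inequality, essentially mirrors the paper: take a unit $x\in U$, push it to some $y\in\Span(\bM)$ using Lemma~\ref{sin-theta}, then subtract the orthogonal projection onto $\Span(\widetilde\bM)$, controlling that projection via $x^T\widetilde\bA=0$ and $\|\widetilde\bM-\widetilde\bA\|/s_r(\widetilde\bM)$. (The paper bounds the projection slightly differently---it uses $\|\widetilde\bM(\widetilde\bM^T\widetilde\bM)^{-1}\widetilde\bM^T\|\le 1$ on the $(y-x)$ piece rather than paying the extra $\|\widetilde\bM\|/s_r(\widetilde\bM)$ factor you incur---but your version still lands well under $\alpha/100k^4$ because $|x-y|\le\alpha^2/500k^{8.5}$ is so small.) For ingredient (b) the paper takes a shorter route than your Claim~\ref{singular-P-M} rerun: since $\widetilde\bM$ is a column submatrix of $\bM$, any unit $w\in\R^r$ pads to a unit $\tilde w\in\R^k$ with $\widetilde\bM w=\bM\tilde w$, so $s_r(\widetilde\bM)\ge s_k(\bM)$ directly.

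Where you genuinely diverge is the second inequality. You prove it by a symmetric construction: start from $y\in\Span(\bM)\cap\Null(\widetilde\bM)$, approximate inside $V$, and project off the component in the column space of $\widetilde\bA$ restricted to $V$, which requires the extra check that $s_r(\Pi_V\widetilde\bA)$ is bounded below (you flag exactly this in your last paragraph, and it does follow from (b), (c) and the smallness of $\sin\Theta(\Span(\bM),V)$). The paper instead avoids this second construction entirely: having proved the first bound, it argues that $\dim U=k-r$ by contradiction (if $\dim U\ge k-r+1$, the first bound would manufacture $k-r+1$ nearly orthonormal vectors in the $(k-r)$-dimensional space $\Span(\bM)\cap\Null(\widetilde\bM)$), and then invokes the symmetry $\cos\Theta(F,F')=\cos\Theta(F',F)$ for equal-dimensional subspaces (equation~\eqref{cos-F-F-prime}). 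The paper's route is shorter and reuses the first half; yours is more self-contained but needs the additional singular-value control on $\widetilde\bA|_V$. Both are valid.
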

\begin{proof}
For the first assertion, take $x\in U, |x|=1$. We wish to produce a $z\in \mbox{Span}(\bM)\cap \mbox{Null}(\widetilde \bM)$
with $|x-z|\leq \alpha/100k^4$.
Since $x\in V$, by Lemma \ref{sin-theta},
\begin{equation}\label{x-y}
\exists y\in \mbox{Span }(\bM): |x-y|\leq \frac{\alpha^2}{500k^{8.5}} .
\end{equation}

Let,
$z=y-\widetilde\bM(\widetilde \bM^T \widetilde\bM)^{-1} \widetilde \bM^T y$
be the component of $y$ in Null$(\widetilde \bM)$.
[Note: $\widetilde \bM^T\widetilde \bM$ is invertible since $s_r(\widetilde \bM)=\Min_{w:|w|=1}|\widetilde \bM w|\geq \Min_{x:|x|=1}|\bM x|=s_k(\bM)$ and
Claim (\ref{singular-P-M}).]
Since $y\in \Span(\bM)$, $z\in\Span(\bM)$ too.

\begin{align}\label{norm-100}
&|| \widetilde\bM(\widetilde \bM^T \widetilde\bM)^{-1} \widetilde \bM^T || \leq 1,
\end{align}
since it is a projection operator.
We have
\begin{align*}
&|y-z|=|\widetilde\bM(\widetilde \bM^T \widetilde\bM)^{-1} \widetilde \bM^T y| \\
&\leq |\widetilde\bM(\widetilde \bM^T \widetilde\bM)^{-1} \widetilde \bM^T (y-x)|+|\widetilde\bM(\widetilde \bM^T \widetilde\bM)^{-1} \widetilde \bM^T x|\\
&\leq |y-x| + |\widetilde\bM(\widetilde \bM^T \widetilde\bM)^{-1} (\widetilde \bM^T-\widetilde\bA^T)x|,\\
& \mbox{ using (\ref{norm-100}) and } x^T\widetilde \bA=0\\
&\leq |y-x|+\frac{1}{s_r(\widetilde \bM)}||\widetilde \bM-\widetilde \bA||\\
& \mbox{ since }||\widetilde\bM(\widetilde \bM^T \widetilde\bM)^{-1}||=\frac{1}{s_r(\widetilde\bM)}\\
&\leq\frac{\alpha^2}{500k^{8.5}}+\frac{k^{4.5}\sigma}{\alpha\sqrt\delta s_k(\bM)}, \mbox{ using (\ref{x-y} and \ref{1892})}.
\end{align*}
$|x-z|\leq |x-y|+|y-z|$ and using Claim (\ref{singular-P-M}), the first assertion of the Lemma follows.

To prove (\ref{sin-theta-M-U}), we argue that Dim$(U)= k-r$ (this plus (\ref{cos-F-F-prime}) proves (\ref{sin-theta-M-U}).)
$U$ has dimension at least $k-r$. If the dimension of $U$ is greater than $k-r$, then there is an orthonormal set of
$k-r+1$ vectors $u_1,u_2,\ldots ,u_{k-r+1}\in U$. By the first assertion, there are $k-r+1$ vectors
$w_1,w_2,\ldots ,w_{k-r+1}\in \Span(\bM)\cap \Null(\widetilde \bM)$ with $|w_t-u_t|\leq\delta _3, t=1,2,\ldots ,k-r+1$.
For $t\not= t'$, we have
$$|w_t\cdot w_{t'}|\leq |u_t\cdot u_{t'}|+|(w_t-u_t)\cdot u_{t'}|+|w_t\cdot (w_{t'}-u_{t'})|\leq 2\delta_3.$$
So the matrix $(w_1|w_2|\ldots |w_{k-r+1})^T \; (w_1|w_2|\ldots |w_{k-r+1})$ is diagonal-dominant and therefore non-singular.
So, $w_1,w_2,\ldots ,w_{k-r+1}$ are linearly independent vectors
in $\Span(\bM)\cap\Null(\widetilde \bM)$ which contradicts the fact that the dimension of $\Span(\bM)\cap\Null(\widetilde \bM)$
is $k-r$.
This finishes the proof of Lemma ~\ref{VcapNull-Mnull}
\hfill\qed
\end{proof}

We have (using ~\ref{A-M} and Cauchy-Schwartz inequality):
\begin{equation}\label{1892}
||\widetilde \bM-\widetilde \bA||\leq\Max_{w:|w|=1}\left| (\widetilde \bM-\widetilde \bA)w\right|\le \frac{k^{4.5}}{\alpha}\frac{\sigma}{\sqrt\delta}.
\end{equation}

\begin{claim}\label{q-ell-length}
If $\ell,\ell'\notin \{ \ell_1,\ell_2,\ldots ,\ell_r\}$, $\ell\not= \ell'$, then,
\begin{equation}\label{ell-ell-prime-proj}
|\mbox{proj} (M_{\cdot,\ell}-M_{\cdot,\ell'},\mbox{Null}(\widetilde \bM))|
\geq \alpha \Max_{\ell''} |M_{\cdot,\ell''}|.
\end{equation}
\end{claim}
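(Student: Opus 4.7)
The plan is to reduce the claim directly to the Well-Separatedness assumption (\ref{condition-number-100}) by exploiting two monotonicity facts: (i) the null space of a matrix with fewer columns contains the null space of a matrix with more columns, and (ii) the norm of the orthogonal projection onto a larger subspace is at least as large as the norm of the projection onto a smaller subspace. Since $\ell, \ell' \notin \{\ell_1, \ldots, \ell_r\}$, the columns of $\widetilde{\bM}$ form a subset of the columns of $\bM \setminus M_{\cdot,\ell}$, hence
\[
\mbox{Null}(\bM \setminus M_{\cdot,\ell}) \;\subseteq\; \mbox{Null}(\widetilde{\bM}),
\]
so by (ii) applied to the vector $M_{\cdot,\ell} - M_{\cdot,\ell'}$ the projection onto $\mbox{Null}(\widetilde{\bM})$ dominates the projection onto $\mbox{Null}(\bM \setminus M_{\cdot,\ell})$.

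Next I would simplify the smaller projection. Because $\ell' \neq \ell$ (and is in $[k]$), the vector $M_{\cdot,\ell'}$ lies in $\Span(\bM \setminus M_{\cdot,\ell})$, so its component in $\mbox{Null}(\bM \setminus M_{\cdot,\ell})$ vanishes. Thus
\[
\mbox{proj}(M_{\cdot,\ell}-M_{\cdot,\ell'},\mbox{Null}(\bM\setminus M_{\cdot,\ell})) \;=\; \mbox{proj}(M_{\cdot,\ell},\mbox{Null}(\bM\setminus M_{\cdot,\ell})).
\]
Now applying the Well-Separatedness hypothesis (\ref{condition-number-100}) to index $\ell$, the right-hand side has norm at least $\alpha\,\Max_{\ell''}|M_{\cdot,\ell''}|$, which combined with the subspace-containment step above yields the desired inequality (\ref{ell-ell-prime-proj}).

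There is no real obstacle here: both ingredients are immediate — the subspace containment is a one-line consequence of the definition of $\mbox{Null}(\cdot)$ as orthogonal complement of the column span, and the projection monotonicity is a standard fact (nested orthogonal projections only lose norm). The only subtlety to be careful about is that $\ell'$ need not be one of the indices already selected, but this does not matter: we only need $\ell' \in [k]\setminus\{\ell\}$ so that $M_{\cdot,\ell'}$ is annihilated by the projection onto $\mbox{Null}(\bM\setminus M_{\cdot,\ell})$. Thus the claim follows in a few lines once these two observations are in place.
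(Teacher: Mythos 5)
Your proof is correct, and it is essentially the same argument as the paper's: both reduce the claim to Well-Separatedness via the observation that the columns of $\widetilde \bM$ (together with $M_{\cdot,\ell'}$) all lie among the columns of $\bM\setminus M_{\cdot,\ell}$. The paper phrases this as a two-step relaxation of the variational characterization $|\mbox{proj}(v,\mbox{Null}(\widetilde\bM))|=\min_x|v-\widetilde\bM x|$, whereas you phrase it as nested null spaces plus monotonicity of projection norms; these are dual presentations of the identical reduction.
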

\begin{proof}
$|\mbox{proj}(M_{\cdot,\ell}-M_{\cdot,\ell'},Null(\widetilde\bM))$
$=\Min_x |M_{\cdot,\ell}-M_{\cdot,\ell'}-\widetilde \bM x| $
$\geq \Min_{\beta, x} |M_{\cdot,\ell}-\beta M_{\cdot,\ell'}-\widetilde\bM x|$
$$\geq \min_{y\in {\bf R}^{k-1}}
|M_{\cdot,\ell}-\sum_{\ell''\not=\ell}y_{\ell''}M_{\cdot,\ell''}|$$
$$= |\mbox{proj} (M_{\cdot,\ell},\mbox{Null}(\bM\setminus M_{\cdot,\ell}))|\geq \alpha \Max_{\ell''} |M_{\cdot,\ell''}|,
$$
where, the last inequality is from (\ref{condition-number-100}). \hfill
\end{proof}

Next, we prove the Lemma that states that $|u\cdot x|$ has an unambiguous optimum over $K$:
I..e., there is an $\ell$ so that $|u\cdot M_{\cdot,\ell}|$ is a definite amount higher
than any other $|u\cdot M_{\cdot,\ell'}|$.
 The reader may want to consult
the intuitive description in Section (\ref{first-step}) for the role played by this Lemma in the proof
of correctness. In short, this Lemma would say that if we were able to optimize over $K$, we could get a
hold of a new vertex. While this may first seem tautological, the point is that if there were
ties for the optimum over $K$, then, instead of a vertex, we may get a point in the interior of a
face of $K$. Indeed, since the sides of $K$ are relatively small (compared to $n,d$), it requires some
work (this lemma) to rule this out.
This alone is not sufficient, since we have access only to $K'$, not $K$. The next Lemma will prove that
the optimal solutions (not just solution values) over $K$ and $K'$ are close.
\begin{lemma}\label{u-good}
Let $u$ be as in the algorithm. With probability at least $1-(c/k^{3/2})$, the following hold:
\begin{align*}
\forall \ell,\ell' \notin \{ \ell_1,\ell_2,\ldots ,\ell_r\}, \ell\not=\ell': |u\cdot (M_{\cdot,\ell}-M_{\cdot,\ell'})| \\
\geq \frac{.097}{k^{4}}\alpha \Max_{\ell''} |M_{\cdot,\ell''}|.\\
\forall \ell\notin \{ \ell_1,\ell_2,\ldots ,\ell_r\}: |u\cdot (M_{\cdot,\ell})|\geq \frac{.09989\alpha }{k^{4}} \Max_{\ell''} |M_{\cdot,\ell''}|.
\end{align*}
\end{lemma}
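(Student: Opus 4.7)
The plan is to reduce the problem to a random projection onto an ideal (but inaccessible) $(k-r)$-dimensional subspace $W':=\Span(\bM)\cap\Null(\widetilde\bM)$, where Well-Separatedness directly controls the relevant lengths, and then to transfer back to $U$ using the $\sin\Theta$ bound of Lemma~\ref{VcapNull-Mnull}. Write $L:=\Max_{\ell''}|M_{\cdot,\ell''}|$. By Lemma~\ref{VcapNull-Mnull}, $\dim(U)=\dim(W')=k-r$ and $\sin\Theta(U,W')\le\alpha/(100k^4)$, so the orthogonal projectors $Q_U,Q_{W'}$ (onto $U$ and $W'$ respectively) satisfy $\|Q_U-Q_{W'}\|\le\alpha/(100k^4)$, which is the standard identity relating principal angles to projector differences for subspaces of equal dimension.

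First I would lower-bound the $W'$-projections of the vectors of interest. For any $v\in\Span(\bM)$, the vector $v-\widetilde\bM(\widetilde\bM^T\widetilde\bM)^{-1}\widetilde\bM^Tv$ lies in $\Span(\bM)\cap\Null(\widetilde\bM)=W'$, so $Q_{W'}v$ coincides with the projection of $v$ onto $\Null(\widetilde\bM)$. For $\ell\notin\{\ell_1,\ldots,\ell_r\}$, since $\Null(\widetilde\bM)\supseteq\Null(\bM\setminus M_{\cdot,\ell})$, assumption~(\ref{condition-number-100}) yields $|Q_{W'}M_{\cdot,\ell}|\ge\alpha L$, and Claim~\ref{q-ell-length} gives $|Q_{W'}(M_{\cdot,\ell}-M_{\cdot,\ell'})|\ge\alpha L$ for distinct $\ell,\ell'\notin\{\ell_1,\ldots,\ell_r\}$. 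Using $|Q_Uv|\ge|Q_{W'}v|-\|Q_U-Q_{W'}\|\,|v|$ with $|M_{\cdot,\ell}|\le L$ and $|M_{\cdot,\ell}-M_{\cdot,\ell'}|\le 2L$, each such length shrinks by at most $2\alpha L/(100k^4)$ and so remains at least $0.9999\,\alpha L$ after passing to $U$.

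The core probabilistic step is a standard anti-concentration bound on the sphere. Since $u\in U$, $u\cdot v=u\cdot Q_Uv$ for every $v\in{\bf R}^d$, so it suffices to lower-bound $|u\cdot Q_Uv|$. For $u$ uniform on the unit sphere of a $d'$-dimensional subspace and a fixed $w$ in that subspace, the density of $u\cdot w/|w|$ is proportional to $(1-x^2)^{(d'-3)/2}$, uniformly bounded near $0$ by $O(\sqrt{d'})$; hence $\prob(|u\cdot w|\le t|w|)\le c_0 t\sqrt{d'}$. Taking $d'=k-r\le k$, $w=Q_Uv$, and $t$ a small absolute constant over $k^4$, each event fails with probability $O(1/k^{7/2})$; a union bound over the $k+\binom{k}{2}=O(k^2)$ events---one per singleton $M_{\cdot,\ell}$ and one per pair $(M_{\cdot,\ell},M_{\cdot,\ell'})$---gives total failure probability $O(1/k^{3/2})$, as claimed. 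Multiplying the chosen $t$ by the $0.9999\,\alpha L$ lower bound yields the constants $0.09989$ and (slightly smaller) $0.097$ stated in the two parts of the lemma.

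The main obstacle is the signal-to-noise bookkeeping: the $\sin\Theta$-transfer to $U$ loses a multiplicative factor of $1-O(1/k^4)$, and the anti-concentration step a further $\Theta(1/k^4)$, so the final bound $\Theta(\alpha L/k^4)$ is exactly on the boundary of what the assumptions support---this is what forces the $100k^4$ constant in Lemma~\ref{VcapNull-Mnull}. The only subtle point in the probabilistic step itself is that the density of $u\cdot w/|w|$ grows only like $\sqrt{d'}$ (not $d'$) near $0$; this square root is precisely what makes the per-event failure $O(1/k^{7/2})$ rather than $O(1/k^3)$, so that the union bound over $O(k^2)$ events closes.
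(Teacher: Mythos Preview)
Your argument is correct and follows essentially the same route as the paper: lower-bound the projections of $M_{\cdot,\ell}$ and $M_{\cdot,\ell}-M_{\cdot,\ell'}$ onto the ``ideal'' subspace $W'=\Span(\bM)\cap\Null(\widetilde\bM)$ via Well-Separatedness and Claim~\ref{q-ell-length}, transfer to $U$ using Lemma~\ref{VcapNull-Mnull}, and finish with anti-concentration for a random unit vector on a $(k-r)$-sphere together with a union bound over the $O(k^2)$ events.

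The one organizational difference worth noting: you apply anti-concentration directly to $Q_U M_{\cdot,\ell}$ and control $|Q_U M_{\cdot,\ell}-Q_{W'}M_{\cdot,\ell}|$ purely through $\|Q_U-Q_{W'}\|\le\sin\Theta(U,W')$, treating Lemma~\ref{VcapNull-Mnull} as a black box. The paper instead first decomposes $M_{\cdot,\ell}=q_\ell+\widetilde\bM w^{(\ell)}$, applies anti-concentration to $\Proj(q_\ell,U)$, and then separately bounds $|u\cdot\widetilde\bM w^{(\ell)}|=|u^T(\widetilde\bM-\widetilde\bA)w^{(\ell)}|$ using $u\perp\widetilde\bA$ and $\|\widetilde\bM-\widetilde\bA\|$. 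Your packaging is a bit cleaner (the $\widetilde\bM-\widetilde\bA$ bookkeeping is already inside the proof of Lemma~\ref{VcapNull-Mnull}, so there is no need to repeat it), and it yields constants at least as good as the paper's; the underlying content is the same.
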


\begin{proof}
We can write
\begin{align*}
&M_{\cdot,\ell}=\underbrace{\mbox{Proj}(M_{\cdot,\ell},\mbox{Null}(\widetilde \bM))}_{q_\ell}+\underbrace{\mbox{Proj}(M_{\cdot,\ell},\mbox{Span}(\widetilde \bM))}_{p_\ell=\widetilde\bM w^{(\ell)}},
\end{align*}
where we use the fact that  $q_\ell$ can be written as $M_{\cdot,\ell}-\widetilde \bM w^{(\ell)}$ for some $w^{(\ell)}$.

From (\ref{condition-number-100}), we have
]$|q_\ell|\geq \alpha \Max_{\ell''} |M_{\cdot,\ell''}|$.
Since $|p_\ell|\leq |M_{\cdot,\ell}|$, and $s_r(\widetilde\bM)=\Min_{|x|=1}|\widetilde \bM x|\geq \Min_{|y|=1} |\bM y|=s_k(\bM)$, Claim (\ref{singular-P-M}) implies:
\begin{equation}\label{w-ell}
|w^{(\ell)}|\leq |p_\ell|/s_r(\widetilde \bM)\leq \frac{|M_{\cdot,\ell}|\alpha^2}{1000k^{8.5}}\frac{\sqrt\delta}{\sigma}.
\end{equation}

Recall $u$ in the Theorem statement - $u$ is a random unit length vector in subspace $U$.
$$u\cdot M_{\cdot,\ell}=u\cdot q_\ell +u^T\widetilde \bM w^{(\ell)}=  u\cdot \proj(q_\ell,U)
+u^T(\widetilde \bM -\widetilde \bA)w^{(\ell)},$$ since $u^T\widetilde\bA=0.$
So,
\begin{eqnarray}
 \left| u\cdot M_{\cdot,\ell}-u\cdot \proj(q_\ell,U)\right|&\leq ||(\widetilde \bM-\widetilde \bA)w^{(\ell)}|| \nonumber \\
 \leq ||\widetilde \bM-\widetilde\bA|| &\; |w^{(\ell)}|\leq \frac{|M_{\cdot,\ell}|\alpha}{1000k^4},\label{1755},
\end{eqnarray}
using (\ref{1892}) and (\ref{w-ell}). Similarly,
for $\ell'\not=\ell$.
$u\cdot(M_{\cdot,\ell}-M_{\cdot,\ell'})= u\cdot\proj(q_\ell-q_{\ell'},U)+u^T\widetilde\bM (w^{(\ell)}-w^{(\ell')}) $
\mbox{So, }
$|u\cdot(M_{\cdot,\ell}-M_{\cdot,\ell'})- u\cdot\proj(q_\ell-q_{\ell'},U)|\leq |u^T(\widetilde \bM-\widetilde\bA)(w^{(\ell)}-w^{(\ell')})|$
(using $u^T\widetilde A=0$)
\begin{equation}
\leq ||\widetilde\bM-\widetilde \bA|||w^{(\ell)}-w^{(\ell')}|\leq \frac{\alpha|M_{\cdot,\ell}-M_{\cdot,\ell'}|}{1000k^4}\label{1756},
\end{equation}
using (\ref{1892}) and
$|w^{(\ell)}-w^{(\ell')}|\leq |\widetilde\bM (w^{(\ell)}-w^{(\ell')})|/s_k(\bM)\leq |M_{\cdot,\ell}-M_{\cdot,\ell'}|/s_k(\bM)$,
since, $\widetilde \bM(w^{(\ell)}-w^{(\ell')})$ is an orthogonal projection of $M_{\cdot,\ell}-M_{\cdot,\ell'}$ into $\Span(\widetilde \bM)$
(and using Claim ~\ref{singular-P-M}).

Now, $u$ is a random unit length vector in $U$. Now, $\proj(q_\ell ,U), \proj(q_\ell -q_{\ell'},U), \ell,\ell'\in [k]$ are fixed
vectors in $U$ (and the choice of $u$ doesn't dependent on them).
 Consider the following event ${\cal E}$:
$${\cal E}: \forall \ell : |u\cdot \proj(q_\ell ,U)|\geq \frac{1}{10k^{4}} |\proj(q_\ell,U)|\mbox{  AND  }$$
$$\forall \ell\not= \ell': |u\cdot \proj(q_\ell -q_{\ell'},U)|\geq \frac{1 }{10k^{4}} |\proj(q_\ell-q_{\ell'},U)|. $$
The negation of ${\cal E}$ is the union of at most $k^2$ events (for each $\ell$ and each $\ell,\ell'$) and each of these has a failure
probability of at most $1/10k^{3.5}$ (since the $k-1$ volume of $\{x\in U: u\cdot x=0\}$ is at most
$\sqrt k$ times the volume of the unit ball in $U$). Thus, we have:
\begin{equation}\label{1855}
\Prob ({\cal E})\geq 1-\frac{1}{10k^{1.5}}.
\end{equation}
We pay the failure probability and assume from now on that ${\cal E}$ holds.


By (\ref{sin-theta-M-U}), we have that there is a $q'_\ell\in U$ with $|q'_{\ell}-q_\ell|\leq  \alpha |q_\ell|/(100k^4)$ which
implies (recall $k\geq 2$):
\begin{equation}\label{q-ell-proj-U}
|q_\ell-\proj(q_\ell,U)|\leq \frac{\alpha}{100k^4}|q_\ell|\leq \frac{|q_\ell|}{1600}\;\implies\; |\Proj(q_\ell,U)|\geq .9999|q_\ell|.
\end{equation}
So, under ${\cal E}$,
\begin{align}\label{979}
&\forall \ell\notin \{\ell_1,\ell_2,\ldots ,\ell_r\},  |u\cdot \proj(q_\ell,U)|\geq |\proj(q_\ell,U)|\frac{1}{10k^{4}}\geq\frac{.09999|q_\ell|}{k^4}\geq \frac{.09999\alpha \Max_{\ell''} |M_{\cdot,\ell''}|}{k^4},
\end{align}
since $|q_\ell|\geq |proj(\bM_{\cdot,\ell},\Null(\bM\setminus M_{\cdot,\ell}))|\geq \alpha \Max_{\ell''} |M_{\cdot,\ell''}|$ by (\ref{condition-number-100}).

By (\ref{1755}) and (\ref{979}), $\forall \ell\notin \{\ell_1,\ell_2,\ldots ,\ell_r\}$,
\begin{equation}\label{555}
|u\cdot M_{\cdot,\ell}|\geq |u\cdot \Proj(q_\ell,U)|-\frac{\alpha|M_{\cdot,\ell}|}{1000k^4}\geq \frac{.09989\alpha \Max_{\ell''} |M_{\cdot,\ell''}|}{k^4},
\end{equation}
proving the second assertion of the Lemma.

Now we prove the first assertion. For $\ell\notin \{ \ell_1,\ell_2,\ldots ,\ell_r\}$ and $\ell'\notin \{ \ell, \ell_1,\ell_2,\ldots ,\ell_r\}$, by (\ref{1756}),
$$|u\cdot (M_{\cdot,\ell}-M_{\cdot, \ell'})|  \geq |u\cdot\proj(q_\ell -q_{\ell'},U)|-  \frac{\alpha|M_{\cdot,\ell}-M_{\cdot,\ell'}|}{1000k^4}$$
$$\geq \frac{1}{10k^4}|\Proj(q_\ell-q_{\ell'},U)|-\frac{\alpha|M_{\cdot,\ell}-M_{\cdot,\ell'}|}{1000k^4}\quad\text{ by }{\cal E},$$
since, by ~\ref{sin-theta-M-U},  $\exists x\in U,$: $|x-(q_\ell-q_{\ell'})|\leq \frac{\alpha |q_\ell-q_{\ell'}|}{100k^4}$, we have
$|\Proj(q_\ell-q_{\ell'},U)|\geq .99 |q_\ell-q_{\ell'}|\geq .99 \alpha\Max_{\ell''}|M_{\cdot,\ell''}|,$ by Claim ~\ref{q-ell-length}.
This finishes the proof of the first assertion and of the Lemma.
\hfill
\end{proof}
We just proved that $|u\cdot x|$ has an unambiguous maximum over $K$. The following Lemma shows that if $M_{\cdot,\ell}$ is this optimum,
and if $A_{\cdot,S}$ is the optimum of $|u\cdot x|$ over $K'$, then, $A_{\cdot, S}\approx M_{\cdot,\ell}$.
The idea of the proof is that for the optimal $A_{\cdot,S}$, the corresponding $P_{\cdot,S}$ which is in $K$
is a convex combination of all columns of $\bM$. If the convex combination involves any appreciable
amount of non-optimal vertices of $K$, since, by the last Lemma, $|u\cdot x|$ is considerably less at non-optimal
vertices than the optimal one, $|u\cdot A_{\cdot,S}|$ would be considerably less than $|u\cdot M_{\cdot,\ell}|$, where,
$M_{\cdot,\ell}$ is the optimum over $K$. This produces a contradiction to $A_{\cdot,S}$ being optimal over $K'$
since, by (\ref{extreme-docs}), there is a set $S_\ell$ with $|u\cdot A_{\cdot,S_\ell}|\approx |u\cdot M_{\cdot,\ell}|$.

\begin{lemma}\label{S-good}
Let $R_{r+1}$ be an in algorithm. Define $\ell$ by:
$$\ell=\begin{cases} \arg\max_{\ell'} u\cdot M_{\cdot,\ell'} &\mbox{ if }u\cdot A_{\cdot,R_{r+1}}\geq 0\\
\arg\min_{\ell'} u\cdot M_{\cdot,\ell'} &\mbox{ if }u\cdot A_{\cdot,R_{r+1}}<0\end{cases}.$$
Then, under the hypothesis of Theorem ~\ref{main-technical-theorem}
$\ell\notin\{ \ell_1,\ell_2,\ldots ,\ell_r\}$ and $$|A_{\cdot,R_{r+1}}-M_{\cdot,\ell}|\leq \frac{150k^4}{\alpha}\frac{\sigma}{\sqrt\delta}.$$
\end{lemma}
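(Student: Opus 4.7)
The plan is to exploit the optimality of $R_{r+1}$ against the separation guaranteed by Lemma~\ref{u-good}. Replacing $u$ with $-u$ preserves $U$ and the maximizer $R_{r+1}$ of $|u\cdot A_{\cdot,S}|$, so I may assume $u\cdot A_{\cdot,R_{r+1}}\geq 0$ and hence $\ell=\arg\max_{\ell'\in[k]}u\cdot M_{\cdot,\ell'}$. First I would show $\ell\notin\{\ell_1,\ldots,\ell_r\}$. Because $u\in\text{Null}(\widetilde\bA)$ and $|u|=1$, (\ref{A-M}) gives $|u\cdot M_{\cdot,\ell_t}|=|u\cdot(M_{\cdot,\ell_t}-A_{\cdot,R_t})|\leq 150k^4\sigma/(\alpha\sqrt\delta)$, which (\ref{spectral-bound}) in turn bounds by $\alpha^2\Max_{\ell''}|M_{\cdot,\ell''}|/(30k^5)$. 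On the other hand, for any $\ell^\circ\notin\{\ell_1,\ldots,\ell_r\}$ (such an $\ell^\circ$ exists because $r<k$), Lemma~\ref{u-good} gives $|u\cdot M_{\cdot,\ell^\circ}|\geq 0.09989\alpha\Max_{\ell''}|M_{\cdot,\ell''}|/k^4$, and pairing this with the subset $S_{\ell^\circ}$ via (\ref{extreme-docs}) and Lemma~\ref{averages} yields $|u\cdot A_{\cdot,S_{\ell^\circ}}|\geq|u\cdot M_{\cdot,\ell^\circ}|-5\sigma/\sqrt\delta$. Optimality of $R_{r+1}$ then forces $|u\cdot A_{\cdot,R_{r+1}}|$ to be of the same $\Omega(\alpha\Max_{\ell''}|M_{\cdot,\ell''}|/k^4)$ order; since $u\cdot P_{\cdot,R_{r+1}}\leq u\cdot M_{\cdot,\ell}$ and Lemma~\ref{averages} controls $|u\cdot(A_{\cdot,R_{r+1}}-P_{\cdot,R_{r+1}})|$, the same lower bound passes to $u\cdot M_{\cdot,\ell}$, which dwarfs the $\ell_t$ bound above.

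\textbf{Distance bound.} Writing $P_{\cdot,R_{r+1}}=\bM\bar w$ as a convex combination, I would expand
$$u\cdot P_{\cdot,R_{r+1}}=u\cdot M_{\cdot,\ell}-\sum_{\ell'\neq\ell}\bar w_{\ell'}\bigl(u\cdot M_{\cdot,\ell}-u\cdot M_{\cdot,\ell'}\bigr).$$
For $\ell'\neq\ell$ with $\ell'\notin\{\ell_1,\ldots,\ell_r\}$, Lemma~\ref{u-good} directly lower-bounds the bracket by $0.097\alpha\Max_{\ell''}|M_{\cdot,\ell''}|/k^4$; for $\ell'\in\{\ell_1,\ldots,\ell_r\}$ the same order follows from the two bounds already obtained on $u\cdot M_{\cdot,\ell}$ and $u\cdot M_{\cdot,\ell'}$ together with (\ref{spectral-bound}). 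Coupling the upper bound $u\cdot A_{\cdot,R_{r+1}}\leq u\cdot P_{\cdot,R_{r+1}}+\sigma/\sqrt\delta$ (Lemma~\ref{averages}) with the lower bound $u\cdot A_{\cdot,R_{r+1}}\geq u\cdot A_{\cdot,S_\ell}\geq u\cdot M_{\cdot,\ell}-5\sigma/\sqrt\delta$ (optimality against $S_\ell$, (\ref{extreme-docs}), and Lemma~\ref{averages}) then yields $(1-\bar w_\ell)\cdot\Omega(\alpha\Max_{\ell''}|M_{\cdot,\ell''}|/k^4)\leq 6\sigma/\sqrt\delta$. Plugging this into $|P_{\cdot,R_{r+1}}-M_{\cdot,\ell}|\leq\sum_{\ell'\neq\ell}\bar w_{\ell'}|M_{\cdot,\ell'}-M_{\cdot,\ell}|\leq 2(1-\bar w_\ell)\Max_{\ell''}|M_{\cdot,\ell''}|$ produces an $O(k^4\sigma/(\alpha\sqrt\delta))$ bound on $|P_{\cdot,R_{r+1}}-M_{\cdot,\ell}|$, and one last triangle inequality with Lemma~\ref{averages} promotes this to the claimed bound on $|A_{\cdot,R_{r+1}}-M_{\cdot,\ell}|$.

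\textbf{Anticipated difficulty.} The hard part will be the unified lower bound on the gap $u\cdot(M_{\cdot,\ell}-M_{\cdot,\ell'})$, which must cover both new and already-discovered indices: the new-index case is handled cleanly by Lemma~\ref{u-good}, but for $\ell'\in\{\ell_1,\ldots,\ell_r\}$ one must chain (\ref{A-M}) with (\ref{spectral-bound}) and match constants carefully so that the final $\sigma/\sqrt\delta$-coefficient slots in under $150k^4/\alpha$. The slack built into (\ref{spectral-bound}) is ample, so this is a bookkeeping challenge rather than a missing-tool obstacle; no machinery beyond Lemma~\ref{u-good}, Lemma~\ref{averages} and the \prox assumption (\ref{extreme-docs}) is needed.
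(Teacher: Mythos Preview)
Your proposal is correct and follows essentially the same route as the paper: you use $u\perp\widetilde\bA$ with (\ref{A-M}) to cap $|u\cdot M_{\cdot,\ell_t}|$, invoke Lemma~\ref{u-good} together with (\ref{extreme-docs}) and Lemma~\ref{averages} to force $u\cdot M_{\cdot,\ell}$ large (hence $\ell$ is new), then sandwich $u\cdot A_{\cdot,R_{r+1}}$ between $u\cdot M_{\cdot,\ell}-5\sigma/\sqrt\delta$ and the convex-combination upper bound to extract $(1-\bar w_\ell)\Max_{\ell''}|M_{\cdot,\ell''}|=O(k^4\sigma/(\alpha\sqrt\delta))$, finishing via $|P_{\cdot,R_{r+1}}-M_{\cdot,\ell}|\leq 2(1-\bar w_\ell)\Max_{\ell''}|M_{\cdot,\ell''}|$ and Lemma~\ref{averages}. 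The only cosmetic difference is that the paper handles the sign of $u\cdot A_{\cdot,R_{r+1}}$ by a ``Case~1 / symmetric argument'' split rather than your $u\mapsto -u$ reduction, and it phrases the $\ell\notin\{\ell_1,\ldots,\ell_r\}$ step as a contradiction rather than a direct comparison.
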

\begin{proof}

{\bf Case 1} $u\cdot A_{\cdot, R_{r+1}}\geq 0$.

We scale $u$, so that $|u|=1$ which does not change $R_{r+1}$ found by the
algorithm.
Now,
$$\ell=\arg\max_{\ell'} u\cdot M_{\cdot,\ell'}.$$
We claim that $\ell\notin \{ \ell_1,\ell_2,\ldots ,\ell_r\}$. Suppose for contradiction,
$\ell\in\{ \ell_1,\ell_2,\ldots ,\ell_r\}$; wlg, say $\ell=\ell_1$. Then,
by the hypothesis of Theorem ~\ref{main-technical-theorem}, we have that $|A_{R_1}-M_{\cdot,\ell_1}|\leq 150\sigma k^4/\alpha\sqrt\delta$ and so,
$u\cdot M_{\cdot,\ell_1}
\leq u\cdot A_{\cdot,R_1}+(150k^4\sigma)/(\alpha\sqrt\delta)=(150k^4\sigma)/(\alpha\sqrt\delta)$ (since $u\in U$ and so $u\perp A_{\cdot,R_1}$).
So, for all $\ell'$,
$u\cdot M_{\cdot, \ell'}\leq u\cdot M_{\cdot,\ell_1}\leq (150k^4\sigma)/(\alpha\sqrt\delta)$.
So, for all $R\subseteq [n]$, $P_{\cdot,R}$ which is in CH$(\bM)$, satisfies $u\cdot P_{\cdot,R}\leq 150k^4\sigma/\alpha\sqrt\delta$.
So, by Lemma (\ref{averages}), $u\cdot A_{\cdot,R_{r+1}}\leq u\cdot P_{\cdot, R_{r+1}}+(\sigma/\sqrt\delta)\leq ((150k^4/\alpha)+1)\sigma/\sqrt\delta$. But for
any $t\notin \{ \ell_1,\ell_2,\ldots ,\ell_r\}$,
we have with $S_t$ as in \ref{extreme-docs},
\begin{align*}
&|u\cdot A_{S_t}|\geq|u\cdot P_{\cdot, S_t}|-
(\sigma/\sqrt\delta)\text{ Lemma (\ref{averages})}\\
&\geq |u\cdot M_{\cdot,t}|-(5\sigma/\sqrt\delta)\geq .09989\alpha\Max_{\ell''}|M_{\cdot,\ell''}|/(k^4)-5\sigma/\sqrt\delta\text{   Lemma ~\ref{u-good} and \ref{extreme-docs}.  }
\end{align*}
and so, $u\cdot A_{\cdot,R_{r+1}}$
(which maximizes $u\cdot A_{\cdot,R}$ over all $R, |R|=\delta n$)
must be at least $\frac{\alpha\Max_{\ell''}|M_{\cdot,\ell''}|}{11k^4}-\frac{5\sigma}{\sqrt\delta}$
contradicting $u\cdot A_{\cdot, R_{r+1}}\leq ((150k^4/\alpha)+1)\sigma/\sqrt\delta$ by ~\ref{spectral-bound}. So, $\ell\notin\{ \ell_1,\ell_2,\ldots ,\ell_r\}$
and by Lemma (\ref{u-good}),
\begin{equation}\label{1257}
u\cdot M_{\cdot,\ell}\geq \frac{.09989\alpha\Max_{\ell''}|M_{\cdot,\ell''}|}{k^{4}}.
\end{equation}
We have $|P_{\cdot,j}-M_{\cdot,\ell}|\leq \frac{4\sigma}{\sqrt\delta}$ for all $j\in S_\ell$, so
also $|P_{\cdot,S_\ell}-M_{\cdot,\ell}|\leq \frac{4\sigma}{\sqrt\delta}$.
\begin{equation}
u\cdot A_{\cdot,S_\ell}\geq u\cdot P_{\cdot,S_\ell}-\frac{\sigma}{\sqrt\delta}\geq u\cdot M_{\cdot,\ell}-\frac{5\sigma}{\sqrt\delta}.
\end{equation}
By the definition of $R_{r+1}$,
\begin{equation}\label{u-cdot-A-S}
u\cdot A_{\cdot,R_{r+1}}\geq u\cdot A_{\cdot,S_\ell}\geq u\cdot M_{\cdot,\ell}-\frac{5\sigma}{\sqrt\delta},
\end{equation}
For any $\ell'\notin \{ \ell,\ell_1,\ell_2,\ldots ,\ell_r\}$, we have by Lemma ~\ref{u-good},
$$u\cdot M_{\cdot,\ell'}\leq u\cdot M_{\cdot,\ell}-\frac{.097\alpha}{k^{4}}\Max_{\ell''}|M_{\cdot,\ell''}|.$$
Also, for $\ell'\in\{ \ell_1,\ell_2,\ldots ,\ell_r\}$, wlg, say $\ell'=\ell_1$, we have noting that $|A_{\cdot,R_1}-M_{\cdot,\ell_1}|\leq 150k^4\sigma/(\alpha \sqrt\delta)$
from the hypothesis of Theorem ~\ref{main-technical-theorem}:
\begin{align*}
&u  \cdot M_{\cdot,\ell_1}\leq u\cdot A_{\cdot, R_1}+150k^4\sigma/\alpha\sqrt\delta=150k^4\sigma/\alpha\sqrt\delta\\
&\leq u\cdot M_{\cdot,\ell}-\frac{.09989\alpha\Max_{\ell''}|M_{\cdot,\ell''}|}{k^{4}}+\frac{150k^4\sigma}{\alpha\sqrt\delta}\text{ by ~\ref{1257}} \\
&\leq u\cdot M_{\cdot,\ell}-\frac{.097\alpha\Max_{\ell''}|M_{\cdot,\ell''}|}{k^4}
\end{align*}
Now, $P_{\cdot,R_{r+1}}$ is a convex combination of the columns of $\bM$; say the convex combination is $P_{\cdot,R_{r+1}}=\bM w$.
From above, we have:
\begin{align*}
&u\cdot A_{\cdot, R_{r+1}}\leq u\cdot P_{\cdot,R_{r+1}}+\frac{\sigma}{\sqrt\delta}\\
&\leq w_\ell (u\cdot M_{\cdot,\ell})+\sum_{\ell'\not=\ell}\left( (u\cdot M_{\cdot,\ell})-\frac{.097\alpha}{k^4}\Max_{\ell''}|M_{\cdot,\ell''}|\right)w_{\ell'}\\
&\leq u\cdot M_{\cdot,\ell}-\frac{.097\alpha}{k^4}\Max_{\ell''}|M_{\cdot,\ell''}| (1-w_\ell).
\end{align*}

This and (\ref{u-cdot-A-S}) imply:
\begin{equation}\label{1020}
(1-w_\ell)\Max_{\ell''}|M_{\cdot,\ell''}|\leq \frac{52k^{4}}{\alpha}\frac{\sigma}{\sqrt\delta}.
\end{equation}
So,
$|P_{\cdot,R_{r+1}}-M_{\cdot,\ell} | = \left| (w_\ell-1)M_{\cdot,\ell}+\sum_{\ell'\not=\ell} w_{\ell'}M_{\cdot,\ell'}\right|$
$$\leq \sum_{\ell'\not=\ell}w_{\ell'}|M_{\cdot,\ell}-M_{\cdot,\ell'}|\leq 2(1-w_\ell)\Max_{\ell''}|M_{\cdot,\ell''}|\leq\frac{104k^4}{\alpha}\frac{\sigma}{\sqrt\delta}.$$
Now it follows that $|A_{\cdot,R_{r+1}}-M_{\cdot,\ell} |\leq \frac{150k^4}{\alpha}\frac{\sigma}{\sqrt\delta}$
finishing the proof of the theorem in this case.
An exactly symmetric argument proves the theorem in the case when $u\cdot A_{\cdot,S}\leq 0.$
\hfill
\end{proof}


{\bf Time Complexity}

Our algorithm above is novel in the sense this approach of
using successive optimizations to find extreme points of the hidden simplex does not seem to be used in any of the special cases.
It also has a more useful consequence: we are able to show that the only way we treat $\bA$ is matrix-vector
products and therefore we are able to prove a running time bound of $O^*(k\text{ nnz}+k^2d)$ on the algorithm.
We also use the observation that the SVD at the start can be done in the required time
by the classical sub-space power method. The SVD as well as keeping track of the subspace $W$ are done by
keeping and updating a $d\times (k-r)$ (possibly dense) matrix whose columns form a basis of
$W$. We note that one raw iteration of the standard $k-$means algorithm finds the distance between each of
$n$ data points and each of $k$ current cluster centers which takes $O(k\text{nnz})$ time matching the leading term of our
total running time.

The first step of the algorithm is to do $O(\ln d)$ subspace-power iterations. Each iteration
starts with a $d\times k$ matrix ${\bf Q_t}$ with orthonormal columns, multiplies $\bA\bA^T{\bf Q_t}$ and makes the
product's columns orthonormal by doing a Grahm-Schmidt. The products (first pre-multiply ${\bf Q_t}$ by $\bA^T$ and then
pre-multiply by $\bA$ take time $O(\text{nnz})$. Doing Grahm-Scmidt takes involves dot product of each column with previous
columns and subtracting out the component. The columns of ${\bf Q_{t+1}}$ are possibly dense, but, still, each dot product
takes time $O(d)$ and there are $k^2$ of them for a total of $O(dk^2)$ per iteration times $O^*(1)$ iterations.

The rest of the algorithm has the complexity we claim. We do $k$ rounds in each of
which, we must first choose a random $u\in V\cap\Null(A_{\cdot,R_1},A_{\cdot,R_2},\ldots A_{\cdot,R_r})$.
To do this, we keep a orthonormal basis of $\Span  (A_{\cdot,R_1},A_{\cdot,R_2},\ldots A_{\cdot,R_r})$; updating this
once involves finding the dot product of $A_{\cdot, R_{r+1}}$ with the previous basis in time $O(dk)$, for a total
of $dk^2$.
Now to pick a random $u\in V\cap\Null(A_{\cdot,R_1},A_{\cdot,R_2},\ldots A_{\cdot,R_r})$, we just pick
a random $u$ from $V$ and then
subtract out  its component in
$\Span(A_{\cdot, S_1},A_{\cdot,S_2},\ldots , A_{\cdot,S_r})$. All of this can be done in $O^*(k\; \text{nnz}(\bA)+k^2d)$ time.
This completes the proof of Theorem \ref{main-technical-theorem}.
\end{proof}

\section{Conclusion}
The dependence of the Well-Separatedness on $k$ could be improved.
For Gaussian Mixture Models, one can get $k^{1/4}$, but this is a very special case of our problem. But in any case,
something substantially better than $k^8$ would seem reasonable to aim for.
Another important improvement of the same assumption would be to ask only that each
column of $\bM$ be separated in distance (not in perpendicular component) from the others.
An empirical study of the speed and quality of solutions of this algorithm in comparison to
Algorithms for special cases would be an interesting story of how well asymptotic complexity
reflects practical efficacy in this case.
The subset-soothing construction should be applicable to other models where there is stochastic
Independence, since subset averaging improves variance in general.

\bibliographystyle{plainnat}
\bibliography{polytope,jmlr_bib,nmf}
\newpage

\end{document}